\documentclass{article}

 \usepackage[nonatbib, preprint]{neurips_2026}

\usepackage[utf8]{inputenc} 
\usepackage[T1]{fontenc}    
\usepackage{hyperref}       
\usepackage{url}            
\usepackage{booktabs}       
\usepackage{amsfonts}       
\usepackage{nicefrac}       
\usepackage{microtype}      
\usepackage{xcolor}         
\usepackage[toc,page,header]{appendix}
\usepackage{minitoc}

\usepackage{amsmath,amssymb,amsthm}
\usepackage{mathtools}
 \newcommand{\W}{W} \newcommand{\X}{\mathcal{X}}
\newcommand{\Y}{\mathcal{Y}} 

  \newcommand{\CC}{\mathbb{C}}
 \newcommand{\EE}{\mathbb{E}}

\newcommand{\PP}{\mathbb{P}}  \newcommand{\RR}{\mathbb{R}}

\newcommand{\Sobs}{\mathcal{O}}                 
\newcommand{\Smiss}{\mathcal{O}^{c}}            
\newcommand{\NA}{{\mathrm{NA}}}
\newcommand{\Bbias}{\mathcal{B}}                
\newcommand{\Aop}{\mathcal{A}}                  
\newcommand{\Rrem}{\mathcal{R}}                 
\theoremstyle{plain}
\newtheorem{proposition}{Proposition}
\newtheorem{corollary}{Corollary}
\newtheorem{remark}{Remark}
\newtheorem{theorem}{Theorem}[section]
\newtheorem{lemma}[theorem]{Lemma}
\theoremstyle{definition}

\theoremstyle{remark}

\usepackage{xcolor}
\usepackage[textwidth = 3cm]{todonotes}

\usepackage{graphicx}
\usepackage{float}
\usepackage{subcaption}
\usepackage{booktabs}
\usepackage{xcolor}
\usepackage{wrapfig}

\usepackage[numbers]{natbib}

\usepackage{array}

\title{Increasing Missingness to Reduce Bias: Richardson-SGD with Missing Data}

\author{
  \begin{tabular}{c}
    Ferdinand Genans\thanks{Corresponding author: \texttt{genans.ferdinand@gmail.com}}
    \qquad
    Erwan Scornet
    \\[0.5ex]
    \begin{tabular}{>{\centering\arraybackslash\normalfont}p{0.85\textwidth}}
      Sorbonne Université and Université Paris Cité, CNRS, Laboratoire de Probabilités, Statistique et Modélisation,
      LPSM, F-75005 Paris, France
    \end{tabular}
  \end{tabular}
}

\begin{document}

\maketitle

\begin{abstract}
    Stochastic gradient methods are central to modern large-scale learning, but their use with incomplete covariates remains delicate since imputation schemes generally introduce systematic gradient biases, as shown for linear models.
    In this work, we prove that all parametric models exhibit similar gradient bias for various imputation procedures and characterize exactly the dependence on the missingness ratio vector $p$, with $O(\|p\|)$ as the leading term.
    We exploit this analysis to propose a simple debiasing procedure for stochastic gradient descent (SGD) with missing values based on Richardson extrapolation, which leverages the exact expression of the gradient bias.
    The key idea is to \emph{deliberately add missingness}: from an already incomplete observation, we generate a further-thinned version at a higher, controlled missingness level, and combine the two resulting stochastic gradients to cancel the leading bias term. We prove that one Richardson step reduces the gradient bias from $O(\|p\|)$ to $O(\|p\|^2)$ under several missingness scenarios.
    Our proposed method is computationally efficient, model-agnostic and applies to any parametric loss whose stochastic gradient can be computed
    after imputation.
    Furthermore, when missing indicators are independent, the population gradient bias is a multilinear polynomial in $p$ and depends only on population gradient errors induced by declaring a single coordinate missing. In this case, our method generalizes to a multi-step Richardson procedure which recursively cancels higher-order terms.
    Empirically, Richardson debiasing improves optimization and estimation across several generalized linear models and combines positively with widely used imputation procedures such as MICE. These results suggest that, somewhat counter-intuitively, adding controlled missingness on top of existing missing data can make stochastic learning from incomplete data more accurate.
\end{abstract}

\section{Introduction}

Missing data are ubiquitous in modern machine learning. They may arise from database fusion, sensor failure, non-response in surveys, and selective acquisition pipelines, to name only a few. In his seminal paper, \citet{rubin1976inference} formalized the missing-data framework and introduced the now-standard taxonomy of three missingness regimes: \emph{Missing Completely at Random} (MCAR), in which missingness is independent of the data; \emph{Missing at Random} (MAR), in which missingness depends only on observed entries; and \emph{Missing Not at Random} (MNAR), in which missingness can also depend on the unobserved entries themselves.

\textbf{Framework.}
In supervised learning with incomplete covariates, one typically distinguishes two goals: estimating the parameters of a model despite the missing values, and producing a predictor with high test accuracy. The two are aligned when the test set is fully observed---accurate parameter estimation then leads to strong predictive performance---but they decouple when the test set itself contains missing entries, in which case a separate prediction-time strategy is required \citep[see e.g.][]{le2021sa,van2023missing,josse2024consistency}. We focus on the first objective and assume that missing values appear only in the training set, while the test set is complete. Even in this setting, parameter identifiability is not guaranteed under arbitrary MNAR mechanisms \citep[see, e.g., the examples and discussions in][]{robins1997toward,wang2014instrumental,miao2016identifiability}. We therefore restrict our attention to MCAR and a generalization---scalable MAR---in which the conditional missingness probability depends on a known intensity function.

\textbf{Handling missing data in parametric models.}
The simplest approach is \emph{complete-case} analysis \citep[][]{pigott2001review, little2019statistical}, which discards every sample containing at least one missing entry. This is unbiased under MCAR but throws away samples at a rate that is exponential in the dimension. The next-simplest approach is \emph{imputation}: missing entries are replaced by point estimates, after which any standard learning algorithm can be applied to the completed dataset. Constant imputation (zero or mean) is the most studied~\citep{Jones1996IndicatorAS} and the easiest to analyze, but it injects a systematic bias even under MCAR. Multivariate Imputation by Chained Equations~\citep[MICE,][]{van2011mice} and nearest-neighbour or neural-network imputation schemes \citep{troyanskaya2001missing, mattei2019miwae} reduce this bias empirically but offer few formal estimation guarantees.
A complementary line of work avoids imputation altogether by working with the joint distribution of inputs and mask. The Expectation--Maximization algorithm of \citet{dempster1977maximum}, refined for incomplete data by \citet{ibrahim1990incomplete} and extended to logistic regression via Stochastic Approximation EM~\citep{jiang2020logistic}, fits a parametric model to the inputs and the predictor jointly. These algorithms require a known parametric family for the covariates and can be expensive due to the E-step. A first review of estimation procedures for linear regression with missing covariates was given in \citet{little1992regression}.

\textbf{Related work - SGD with missing data.}
Gradient descent and its stochastic variants are the workhorse of large-scale learning, but they require a fully observed input to compute a gradient. The natural fix is to impute and then run SGD on the completed dataset; as observed by \citet{Jones1996IndicatorAS}, this leads to a biased gradient.
\citet{ayme2023naive,ayme2024random} study the test-time predictive performance of SGD applied to zero-imputed data when data can be missing in both train and test set, relating the imputation bias to a ridge regularization effect and leveraging the implicit bias of SGD~\citep{smithorigin} towards low-norm solutions to derive convergence rates \citep[see also][for missing data in high-dimensional linear models]{chandrasekher2020imputation, verchand2024high}. Another line of work focuses on parameter estimation or equivalently on test-time performance when the test set is assumed to contain complete data. \citet{loh2011high} characterize the exact bias induced by zero imputation in linear models and use this characterization to obtain the first parameter-estimation rates for sparse high-dimensional linear regression under MCAR. Building on their analysis, \citet{needell2019stochastic} design a stochastic gradient algorithm that is better suited to large-scale data, and \citet{sportisse2020debiasing} establish that averaged debiased SGD attains the optimal one-pass rate for linear regression.

\textbf{Contributions.}
We propose and analyze a debiasing procedure for stochastic gradients computed from imputed data, which can be applied to any parametric model and which is valid for a large class of imputation procedures. Our proposed method applies Richardson extrapolation~\citep{richardson1911ix} to the missingness scale $p$, yielding a model-agnostic correction that can be combined with a broad class of imputation rules. While Richardson extrapolation has been used in machine learning to remove leading-order biases in other contexts~\citep{bach2021effectiveness}, to the best of our knowledge, this is the first application to the missingness scale of a stochastic gradient. Our contributions are as follows.

\textit{Gradient-bias structure.} Under several MCAR and MAR settings described below,
we establish the exact expression of the population bias of a  stochastic gradient computed on imputed data. In doing so, we generalize the expression obtained by \citet{sportisse2020debiasing} for linear regression with zero-imputed data and independent MCAR missingness to any parametric model, a large class of imputation procedures, and a broad class of missingness scenarios (non-independent MCAR and MAR). As a consequence, we show that the gradient bias is $O(\|p\|)$, where $p=(p_1, \hdots, p_d)$ with $p_j$ being the probability that the $j$th component is missing. When the mask components are independent, we prove that the order of the remaining terms is $O(\|p\|^2)$.
Our bias decomposition holds for generic imputation procedures: better imputation may shrink the constants in $O(\|p\|)$ but cannot generally remove the leading $O(\|p\|)$ bias (Section~\ref{sec:bias-structure}).

\textit{Richardson-SGD.} We introduce a thinning construction that, from a sample with mask at scale $p$, generates a further-thinned mask at scale $Cp$, for some well-chosen $C >1$, using one extra Bernoulli draw per observed entry. A Richardson combination of the two gradients (computed on an imputed dataset at scales $Cp$) reduces the bias from $O(\|p\|)$ to $O(\|p\|^2)$ under some MCAR and MAR settings with independent masking components (Section~\ref{sec:richardson}). We also introduce a multi-step Richardson-based procedure which cancels higher-order terms, with exact cancellation for $d_{\mathrm{miss}}$ steps, where $d_{\mathrm{miss}}$ is the number of covariates subject to missingness.

\textit{Theory for one-pass SGD.} Our bias expansion plugs directly into classical biased-SGD proofs. For one-pass (one-epoch) SGD over $n$ samples, and given a smooth and strongly convex loss, Richardson-SGD attains $\EE\|w_n-w^\star\|^2=O(\|p\|^4)+O(1/n)$, against $O(\|p\|^2)+O(1/n)$ for plain imputed SGD. Multi-step Richardson reduces the missingness term further at the price of increased variance.
(Section~\ref{sec:rsgd}).

\textit{Experiments.} We validate the theory on synthetic and real datasets. For a variety of generalized linear models, Richardson-SGD improves over plain imputation under several  MCAR and MAR mechanisms and combines positively with MICE, Random-Forest MICE, and $k$-NN imputation (Section~\ref{sec:experiments}).

\section{Setting}
\label{sec:setting}

\paragraph{Random covariates and notation.}
Random variables are written in uppercase ($X$, $Y$, $M$); their realizations are written in the corresponding lowercase ($x$, $y$, $m$). For an integer $d\ge 1$ we set $[d]:=\{1,\dots,d\}$. For any $S\subseteq[d]$, we write $S^c:=[d]\setminus S$, and for a vector $v\in\RR^d$ we let $v^{(S)}\in\RR^{|S|}$ denote the subvector indexed by $S$.
Throughout the paper, $\|\cdot\|$ is the Euclidean norm and $\|\cdot\|_\infty$ the supremum norm.

\paragraph{Supervised learning and SGD.}
We consider a supervised learning setting with random covariates $X\in\X\subseteq\RR^d$, response $Y\in\Y$, parameter $w\in\W\subseteq\RR^q$, and a continuously differentiable loss $\ell:\W\times\X\times\Y\to\RR$. We aim at minimizing the population risk $L(w)\;:=\;\EE\left[\ell(w;X,Y)\right]$.
The complete-data single-sample gradient is $g(w;x,y):=\nabla_w\ell(w;x,y)$. Assuming differentiation and expectation commute, we have $\nabla L(w)=\EE[g(w;X,Y)]$. Because $\nabla L$ has no closed form in general, we use stochastic gradient descent (SGD), whose updates are given by
\begin{equation}
    w_{k+1}\;=\;w_k-\eta_k\hat g_k(w_k),
    \label{eq:sgd}
\end{equation}
where $\eta_k>0$ is the step-size and $\hat g_k(w_k)$ is a stochastic estimator of $\nabla L(w_k)$ computed at iteration $k$ from a sample or minibatch. When the sample is complete, $g(w;X,Y)$ is unbiased for $\nabla L(w)$.

\paragraph{Missing covariates and imputation.}
For each training sample, the learner observes a realisation of $(X^{\mathrm{obs}},Y,M)$, where $M\in\{0,1\}^d$ is a missingness mask and, for all $j \in [d]$, $X^{\mathrm{obs}}_j = X_j$ if $M_j=0$ and $X^{\mathrm{obs}}_j = \NA$ if $M_j=1$.
We write $p_j:=\PP(M_j=1)$ for the marginal missingness probability of feature $j$ and $p:=(p_1,\dots,p_d)$ for the missingness vector. Missing entries are filled in by an imputation rule $\mathcal{I}$ that produces an imputed covariate vector
\begin{align}
    \tilde X\;:=\;\mathcal{I}\big(X^{\mathrm{obs}},M,\xi\big), \label{def:imputation}
\end{align}
where $\xi \perp M\mid (X,Y)$ collects auxiliary randomness used by $\mathcal{I}$. We focus on \textit{data-independent imputation rules} $\mathcal{I}$ that impute each observation independently of the others. This assumption makes our analysis tractable by enabling a decomposition at the sample level. Standard imputations (mean, iterative) can be slightly modified to fall into this setting by training $\mathcal{I}$ on an auxiliary dataset.
The imputed stochastic gradient available to the learner is $\hat g(w)\;:=\;g\big(w;\tilde X,Y\big)$, which is, in general, a \emph{biased} estimator of $\nabla L(w)$, with bias
\begin{equation}
    \Bbias(w,p)\;:=\;\EE\left[\hat g(w)\right]-\nabla L(w),
    \label{eq:bias-def}
\end{equation}
the expectation being over $(X,Y)$, the mask $M$, and the imputation randomness $\xi$.

\paragraph{Missingness mechanisms.}
We follow the taxonomy of \citet{rubin1976inference}: the mask is \emph{Missing Completely at Random} (MCAR) when $M\perp(X,Y)$, and \emph{Missing at Random} (MAR) when, conditionally on the observed entries, $M$ is independent of the missing entries.
Throughout the paper, we let $\Sobs\subseteq[d]$ (possibly empty) be the set of indices of variables that are \emph{always-observed}. We let $V:=X^{(\Sobs)}$ be the vector of \emph{always-observed variables}.
To enable a tractable analysis, we focus on two concrete mechanisms, which depend on the probability vector $p$,  assumed to be known.

\textbf{Heterogeneous MCAR (hMCAR).} $M$ is independent of $(X,Y)$ and $\PP(M_j=1)=p_j$.

\textbf{Scalable MAR (sMAR).} $\{M_j\}_{j\in\Smiss} \perp (X^{(\Sobs^c)}, Y)\mid V$, and for every $j\in\Smiss$,
\[
    \PP\big(M_j=1\,\big|\,V\big)\;=\;p_j\,q_j(V),
\]
for known intensity functions $q_j:\RR^{|\Sobs|}\to[0,p_j^{-1}]$ with $\EE[q_j(V)]=1$.

It is known that MAR settings contain scenarios of different difficulties \citep{naf2024good}, some of which being close to MNAR settings \citep{molenberghs2008every}, for which identifiability does not always hold \citep[see, e.g.,][]{robins1997toward,wang2014instrumental,miao2016identifiability}. Thus, we restrict the MAR settings we consider via the sMAR assumption. Note that the condition $\mathbb{E}[q_j(V)]=1$ in sMAR is necessary to ensure that $p_j = \mathbb{P}(M_j=1)$.
A concrete example of sMAR is a logistic missingness mechanism, as commonly used in simulation studies of missing covariates~\citep[e.g.][]{marshall2010comparison,wang2023score}. We say a mask is \emph{independent hMCAR} (resp.\ \emph{independent sMAR}) if it is hMCAR (resp.\ sMAR) and the $\{M_j\}_{j\in\Smiss}$ are mutually independent (resp.\ conditionally on $V$).

Our objective remains the complete-data risk $L(w)$ and its minimizer $w^\star$; missingness and imputation only affect the stochastic gradients used to optimize it. Our goal is to replace the imputed gradient $\hat g$ in~\eqref{eq:sgd} by a corrected gradient $\hat g^{\mathrm{R}}$, computed from the same observation plus a small amount of controlled additional thinning, so as to cancel or shrink the gradient bias~\eqref{eq:bias-def}.

\section{First-order structure of the missingness bias}
\label{sec:bias-structure}

Before designing a debiasing procedure, we describe the structure of the imputation-induced gradient bias as a function of the missingness scale $p$. The key observation is that, regardless of the loss and the imputation rule, the bias admits a clean expansion whose leading term is \emph{linear} in $p$ and whose coefficients are population gradient gaps that do not depend on $p$. This expansion will be the structural fact that Richardson extrapolation later exploits.

\begin{proposition}[First-order structure of the missingness bias]
    \label{prop:grad_bias_fo_struct}
    Consider any data-independent imputation defined in \eqref{def:imputation} and assume hMCAR or sMAR holds. Then the population gradient bias~\eqref{eq:bias-def} can be decomposed as
    \begin{equation}
        \Bbias(w,p)
        \;=\;
        \Aop(w)\,p
        \;+\;
        \Rrem(w,p),
        \label{eq:bias-decomposition}
    \end{equation}
    with $\Aop(w)\in\RR^{q\times d}$ independent of $p$. Letting  $a_j(V) = 1$ for hMCAR and $ a_j(V) = q_j(V)$ for sMAR, the $j$-th column of $\Aop(w)$ is the population gradient gap obtained by declaring coordinate $j$ missing:
    \begin{equation}
        \Aop_{\cdot j}(w)
        \;=\;
        \EE\left[
            a_j(V)\,
            \Big\{
            G_{\{j\}}(w;X,Y,\xi)
            -
            g(w;X,Y)
            \Big\}
            \right]\ .
        \label{eq:bias-Aj}
    \end{equation}
\end{proposition}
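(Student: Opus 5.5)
We interpret $G_S(w;X,Y,\xi)$ as the gradient $g(w;\mathcal{I}(X^{\mathrm{obs}}_S,m_S,\xi),Y)$ obtained when the mask is deterministically $m_S:=\mathbf 1_S$, i.e. exactly the coordinates in $S$ are declared missing. Because the imputation is data-independent and $\xi\perp M\mid(X,Y)$, the imputed gradient satisfies $\hat g(w)=\sum_{S\subseteq\Smiss}\mathbf 1\{M=\mathbf 1_S\}\,G_S(w;X,Y,\xi)$. Note that $G_\emptyset=g$ for any sensible imputation that leaves observed entries untouched; if $\mathcal I$ alters complete rows, we absorb this into the definition so that $G_\emptyset = g$. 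The plan is to condition on $(X,Y,\xi)$ (or on $V$ as well for sMAR), which gives
\[
\EE[\hat g(w)]-\nabla L(w)=\sum_{S\neq\emptyset}\EE\big[\pi_S(V)\{G_S-g\}\big],
\qquad \pi_S(V):=\PP(M=\mathbf 1_S\mid V).
\]
Here we used $\sum_S\pi_S=1$ to subtract $g$ inside the expectation. Under hMCAR, $\pi_S$ is a constant. Under sMAR, the conditional independence $\{M_j\}\perp(X^{(\Sobs^c)},Y)\mid V$ together with $\xi\perp M\mid(X,Y)$ justifies factoring the conditional expectation in this way.

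Next we isolate the singletons. We define $\Aop_{\cdot j}$ by \eqref{eq:bias-Aj}, and we define the remainder $\Rrem(w,p):=\Bbias(w,p)-\Aop(w)p$. With these definitions \eqref{eq:bias-decomposition} holds tautologically, and the content of the proposition is that $\Aop$ does not depend on $p$. This is clear from \eqref{eq:bias-Aj}: $a_j$ is either $1$ or the known intensity $q_j$, and the law of $(X,Y,V,\xi)$ does not involve $p$, because $p$ enters only through the mask law. For the statement to be meaningful, we also explain why $\Aop p$ is the natural first-order term. We write
\[
\Rrem=\sum_{j}\EE\big[(\pi_{\{j\}}(V)-p_ja_j(V))(G_{\{j\}}-g)\big]+\sum_{|S|\ge2}\EE\big[\pi_S(V)(G_S-g)\big].
\]
Since $\PP(M_j=1\mid V)=p_ja_j(V)=\pi_{\{j\}}(V)+\sum_{S\ni j,|S|\ge2}\pi_S(V)$, the first sum equals $-\sum_j\sum_{S\ni j,|S|\ge2}\EE[\pi_S(G_{\{j\}}-g)]$. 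Hence $\Rrem$ involves only masks with at least two missing coordinates, weighted by $\pi_S$ for $|S|\ge2$.

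In the independent case, $\pi_S(V)=\prod_{j\in S}p_ja_j(V)\prod_{k\notin S}(1-p_ka_k(V))$ for $|S|\ge 2$. This yields $\Rrem=O(\|p\|^2)$, provided $a_j$ is bounded (true, since $q_j\le p_j^{-1}$ needs a uniform bound; we would assume $\sup q_j<\infty$) and $\EE\|G_S-g\|<\infty$. For general dependent masks, we only assert the exact decomposition and that $\Rrem$ collects the multi-coordinate terms.

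The main obstacle is the sMAR conditioning step. We must check carefully that $\EE[\mathbf 1\{M=\mathbf 1_S\}G_S]=\EE[\pi_S(V)G_S]$. This uses that $G_S$ is a function of $(X,Y,\xi)$ and that $M\perp(X^{(\Sobs^c)},Y,\xi)\mid V$. That joint independence must be derived from the two stated independences. Integrability of $G_S-g$ is also needed and will be assumed.
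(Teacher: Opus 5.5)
Your proof is correct, but it uses a different decomposition from the paper's.

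\textbf{The paper's route.} It expands the imputed gradient by inclusion--exclusion, $G_{S(M)}=\sum_{S}\big(\prod_{j\in S}M_j\big)D_S$, with M\"obius differences $D_S=\sum_{T\subseteq S}(-1)^{|S|-|T|}G_T$. After conditioning, the bias becomes $\sum_{S\neq\varnothing}\EE[\rho_S D_S]$, where $\rho_S=\PP(M_j=1\ \forall j\in S\mid X,Y)$ are co-missingness probabilities. The singleton terms give $\Aop(w)p$, and $\Rrem$ is defined as the $|S|\ge 2$ part.

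\textbf{Your route.} You partition on the exact pattern $\{M=\mathbf 1_S\}$, obtain exact-pattern weights $\pi_S$, and define $\Rrem:=\Bbias-\Aop p$. You then use $p_ja_j(V)=\sum_{S\ni j}\pi_S(V)$ to show that $\Rrem$ only involves patterns with $|S|\ge2$.

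\textbf{Why the two agree.} The two remainders coincide, because $\rho_T=\sum_{S\supseteq T}\pi_S$ and $G_S-g-\sum_{j\in S}(G_{\{j\}}-g)=\sum_{T\subseteq S,\,|T|\ge 2}D_T$.

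\textbf{What each buys.} Your version is more elementary. It also makes explicit that the displayed identity is essentially tautological once $\Aop$ is fixed by \eqref{eq:bias-Aj}; the real content is that $\Aop$ is $p$-free and that $\Rrem$ only sees multi-coordinate patterns. The paper's version buys more structure. Under conditional independence, $\rho_S=\prod_{j\in S}p_ja_j(V)$ factorizes, so the bias is directly the multilinear polynomial \eqref{eq:bias-poly} with $p$-free coefficients $\mu_S$. That homogeneous-degree structure is exactly what Corollary~\ref{cor:k-order} uses. Your weights $\pi_S$ carry the non-homogeneous factors $\prod_{k\notin S}(1-p_ka_k(V))$. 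That is harmless for an $O(\|p\|^2)$ bound, but recovering the polynomial structure would require expanding those products, which amounts to redoing the M\"obius inversion.

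\textbf{Small remarks.}
\begin{itemize}
\item The pathwise identity $\hat g=\sum_S\mathbf 1\{M=\mathbf 1_S\}G_S$ holds by the definition of $G_S$ alone. The assumption $\xi\perp M\mid(X,Y)$ is needed only in the expectation step.
\item Your ``main obstacle'' closes in one line: $\PP(M=\mathbf 1_S\mid X,Y,\xi)=\PP(M=\mathbf 1_S\mid X,Y)=\PP(M=\mathbf 1_S\mid V)$.
\item You do not need to ``absorb'' anything into $G_\varnothing$. The paper simply takes $\mathcal I(X,\mathbf 0,\xi)=X$ as part of the definition of an imputation rule.
\item Your point that the $O(\|p\|^2)$ remainder needs a bound on $q_j$ uniform in $p$ is fair. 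The constraint $q_j\le p_j^{-1}$ degenerates as $p\to0$, and the paper's corollary proof leaves this implicit.
\end{itemize}
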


The remainder $\Rrem(w,p)$ contains the co-missingness contributions, namely the terms involving simultaneous missingness of two or more coordinates. The exact expression of $\Rrem(w,p)$ is given in Appendix~\ref{app:bias-decomposition}. The proof is based on a discrete-difference expansion over missingness patterns and separates the contribution of each joint missingness pattern $S\subseteq[d]$.

In full generality, the remainder is at most linear in $p$, while it is $o(\|p\|)$ in most scenarios. Indeed, strong dependence among mask components can make co-missingness terms contribute at first order. For instance, this may occur when two coordinates are perfectly negatively associated, so that $\PP(M_j=1\mid M_k=1)=0$. The following corollary identifies a key regime motivating Richardson extrapolation.

\begin{corollary}\label{cor:bias_asymp_indep}
    Under the assumptions of Proposition~\ref{prop:grad_bias_fo_struct}, suppose in addition that the missingness indicators $\{M_j\}_{j\in\Smiss}$ are conditionally independent given $V$. Then
    \begin{equation}
        \|\Rrem(w,p)\|
        \;=\;
        O(\|p\|^2),
        \qquad\text{and therefore}\qquad
        \big\|\Bbias(w,p)-\Aop(w)\,p\big\|
        \;=\;
        O(\|p\|^2).
        \label{eq:bias-fo-bound}
    \end{equation}
\end{corollary}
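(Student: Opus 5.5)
We will work from the explicit pattern decomposition behind Proposition~\ref{prop:grad_bias_fo_struct}. For each $S\subseteq\Smiss$, let $G_S(w;X,Y,\xi)$ be the gradient computed after declaring exactly the coordinates in $S$ missing and imputing them, so $G_\emptyset=g$. Since imputation is data-independent, conditioning on $M$ (and on $V$ under sMAR) gives
\[
\EE[\hat g(w)] = \EE\Big[\sum_{S\subseteq\Smiss} \PP(M^{(\Smiss)}=\mathbf 1_S\mid V)\,G_S\Big].
\]
We rewrite each $G_S$ by Möbius inversion as $G_S=\sum_{T\subseteq S}\Delta_T$, where $\Delta_T=\sum_{U\subseteq T}(-1)^{|T|-|U|}G_U$ and $\Delta_\emptyset=g$. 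Swapping the sums gives
\[
\Bbias(w,p)=\sum_{\emptyset\neq T\subseteq\Smiss}\EE\big[\PP(M_j=1\ \forall j\in T\mid V)\,\Delta_T\big].
\]
The singleton terms, with $\Delta_{\{j\}}=G_{\{j\}}-g$ and $\PP(M_j=1\mid V)=p_j a_j(V)$, reproduce $\Aop(w)p$. This identifies
\[
\Rrem(w,p)=\sum_{|T|\ge2}\EE\big[\PP(M_T=\mathbf 1\mid V)\,\Delta_T\big],
\]
which we take as the exact remainder from Appendix~\ref{app:bias-decomposition}.

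Now we add the conditional independence assumption. It gives
\[
\PP(M_T=\mathbf 1\mid V)=\prod_{j\in T}p_j a_j(V).
\]
Under hMCAR this equals $\prod_{j\in T}p_j$, with $a_j\equiv1$. Under sMAR we use the constraint $q_j\le p_j^{-1}$, i.e.\ $p_jq_j(V)\le 1$. We bound the product by keeping two factors and dropping the rest:
\[
\prod_{j\in T}p_jq_j(V)\le p_{j_1}p_{j_2}\,q_{j_1}(V)q_{j_2}(V)
\]
for any two distinct $j_1,j_2\in T$. Therefore
\[
\|\Rrem(w,p)\|\le\sum_{|T|\ge2}\ \min_{j_1\ne j_2\in T}p_{j_1}p_{j_2}\ \EE\big[q_{j_1}(V)q_{j_2}(V)\,\|\Delta_T\|\big].
\]
Since $p_{j_1}p_{j_2}\le\|p\|^2$ and there are finitely many sets $T$ (at most $2^{d}$), we get $\|\Rrem\|\le C(w)\|p\|^2$. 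The constant is $C(w)=\sum_{T}\max_{j_1\neq j_2\in T}\EE[q_{j_1}q_{j_2}\|\Delta_T\|]$, which does not depend on $p$. The second claim of~\eqref{eq:bias-fo-bound} then follows immediately from~\eqref{eq:bias-decomposition}.

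The main obstacle is making sure $C(w)$ is finite and genuinely independent of $p$. For hMCAR it suffices that $\EE\|G_S(w;X,Y,\xi)\|<\infty$ for every $S$, which is the same integrability already needed for Proposition~\ref{prop:grad_bias_fo_struct} to make sense. For sMAR, the bound $q_j\le p_j^{-1}$ depends on $p$, so we must not spend it on the two retained factors. We will therefore use a mild $p$-free integrability assumption, for instance $\EE[q_{j_1}(V)q_{j_2}(V)\|\Delta_T\|]<\infty$. This holds if the $q_j$ are bounded independently of $p$, as in the logistic example, or by Hölder's inequality if $q_j\in L^4$ and $\|G_S\|\in L^2$. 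We will state this assumption explicitly as being implicit in the $O(\cdot)$ notation, with $w$ fixed and $p\to0$.
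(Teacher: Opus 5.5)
Your proof is correct and follows essentially the paper's route: inclusion--exclusion expansion of $\Bbias(w,p)$ over co-missingness sets, identification of the singleton terms with $\Aop(w)p$, and the factorization $\PP(M_j=1\ \forall j\in T\mid V)=\prod_{j\in T}p_ja_j(V)$, so that every remainder term carries at least two factors $p_j$. The one difference is that you discard all but two factors via $p_jq_j(V)\le 1$, which needs only $p$-free control of $\EE[q_{j_1}q_{j_2}\|\Delta_T\|]$, whereas the paper keeps the full product to obtain the exact multilinear form $\sum_S(\prod_{j\in S}p_j)\mu_S(w)$, reused for Corollary~\ref{cor:k-order}; your extra integrability assumption is not really extra, since the paper implicitly holds the $q_j$ fixed as $p\to 0$, which bounds them by a $p$-free constant and reduces both integrability requirements to $\EE\|G_S\|<\infty$.
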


Under independent hMCAR$\backslash$sMAR, the bias is a \emph{multilinear} in $p$ (see proof of Corollary \ref{cor:bias_asymp_indep}),
\begin{equation}
    \Bbias(w,p)
    =
    \sum_{\varnothing\ne S\subseteq[d]} \mu_S(w) \,
    \Big(\prod_{j\in S}p_j\Big),
    \quad
    \mu_S(w)
    \;:=\;
    \EE\left[\Big(\prod_{j\in S}a_j(V)\Big)\,\Delta_S G_\varnothing(w;X,Y,\xi)\right],
    \label{eq:bias-poly}
\end{equation}
where $T_jG_\varnothing:=G_{\{j\}}$ declares coordinate $j$ missing and $\Delta_S G_\varnothing:=\prod_{j\in S}(T_j-I) G_\varnothing$. The coefficient $\mu_S(w)$ aggregates the effect of $|S|$-fold co-missingness. Equation~\eqref{eq:bias-poly} is the structural fact that drives both first and higher-order Richardson cancellation.

The decomposition has three implications. \emph{(i) The leading bias is linear in $p$:} under conditional independence, the remainder is $O(\|p\|^2)$, so the first-order behavior is fully captured by $\Aop(w)\,p$. \emph{(ii) The leading operator is an average gradient gap:} the column $\Aop_{\cdot j}(w)$ vanishes whenever coordinate $j$ is always observed or is perfectly recovered by $\mathcal{I}$. \emph{(iii) Imputation reduces constants, not the leading order:} the expansion holds for any data-independent imputation, and a better imputation rule only shrinks the entries of $\Aop_{\cdot j}(w)$ without changing the order of $\Bbias(w,p)$ in $p$.

These three points together suggest a clear strategy. Imputation alone cannot remove the leading $O(\|p\|)$ scaling, except if it fully recovers the covariate. Improving the imputation only refines the constants $\Aop_{\cdot j}(w)$. To eliminate the leading order, we propose to act \emph{on $p$ itself}---that is, evaluate the imputed gradient at two different missingness scales and combine the results so that the linear contribution cancels. This is precisely what Richardson extrapolation achieves, and the construction we develop in the next section turns this idea into a practical SGD update.

\section{Richardson-SGD}
\label{sec:richardson}

\paragraph{Richardson extrapolation in a nutshell.}
Richardson extrapolation~\citep{richardson1911ix} cancels the leading term of an asymptotic expansion. If $T(p)=T_0+pT_1+p^2T_2+o(p^2)$ as $p\to0$ and $C>1$, the combination
\begin{equation}
    T_C^{\mathrm{R}}(p)\;:=\;\tfrac{C\,T(p)-T(Cp)}{C-1}
    \;=\;T_0-Cp^2T_2+o(p^2)
    \label{eq:richardson-scalar}
\end{equation}
eliminates the linear term. With $k+1$ scales $1=C_0<C_1<\cdots<C_k$ and a Vandermonde weight vector, the first $k$ orders are cancelled simultaneously~\citep{pages2007multi}.

At first sight, applying~\eqref{eq:richardson-scalar} to the missingness bias would require evaluating the imputed gradient at \emph{two} missingness scales $p$ and $Cp$ on the same observation. The learner, however, only observes a single mask $M^{(p)}$ at scale $p$. We resolve this with a single extra Bernoulli draw per observed entry: from a sample at scale $p$, we \emph{further thin} it to obtain a mask whose conditional law given $X$ is exactly that of an independent  draw at scale $Cp$. No new observation is required. We employ this additional mask to propose a Richardson-corrected gradient, used in lieu of the standard gradient in a SGD procedure.

\paragraph{Further-thinned mask.}
Fix $C>1$ such that, for all  $j\in\Smiss$, $C\,p_ja_j(V)\le 1$ almost surely. Conditional on $(X,M^{(p)})$, draw independent thinning bits $r_j$ with $r_j=1$ for $j\in\Sobs$ and, for $j\in\Smiss$,
\begin{equation}
    r_j\,\big|\,(X,M^{(p)})\;\sim\;
    \mathrm{Bernoulli}\!\left(\frac{1-Cp_ja_j(V)}{1-p_ja_j(V)}\right),
    \textrm{and let} \quad
    M_j^{(Cp)}\;:=\;1-(1-M_j^{(p)})\,r_j.
    \label{eq:thinning}
\end{equation}
All entries missing under $M^{(p)}$ stay missing under $M^{(Cp)}$; an observed entry is hidden under $M^{(Cp)}$ exactly when $r_j=0$. A short calculation (Appendix~\ref{app:richardson-proofs}) gives $\PP(M_j^{(Cp)}=1\mid V)=Cp_ja_j(V)$, so $M^{(Cp)}$ has the same conditional law as the original mask but at scale $Cp$.

\paragraph{Richardson-corrected gradient.}
Equipped with the further-thinned mask, we can apply~\eqref{eq:richardson-scalar} to the imputed gradient. Crucially, we must \emph{not} impute the same observation at two different missingness levels, since we need common missing values between the two scales to be identical (see Appendix~\ref{app:linked-imputation} for further explanation and a numerical illustration). We impute once on the more thinned sample at scale $Cp$, then \emph{restore} the artificially hidden entries to recover the imputation at scale $p$:
\begin{equation*}
    \tilde X^{(Cp)}\;:=\;\mathcal{I}\big(X^{\mathrm{obs}},M^{(Cp)},\xi\big),
    \qquad
    \tilde X_{Cp,j}^{(p)}
    :=
    \begin{cases}
        X_j,               & M_j^{(p)}=0, \\
        \tilde X_j^{(Cp)}, & M_j^{(p)}=1,
    \end{cases}
    \quad \textrm{for all } j \in [d]\ .
\end{equation*}
Set $\hat g^{(p)}(w):=g(w;\tilde X^{(p)},Y)$ and $\hat g^{(Cp)}(w):=g(w;\tilde X^{(Cp)},Y)$. The \emph{Richardson-corrected gradient} is
\begin{equation}
    \hat g_C^{\mathrm{R}}(w)\;:=\;\frac{C\,\hat g^{(p)}(w)\;-\;\hat g^{(Cp)}(w)}{C-1}.
    \label{eq:richardson-grad}
\end{equation}

\textbf{Richardson-SGD}
plugs $\hat g_C^{\mathrm{R}}$ into the SGD update~\eqref{eq:sgd}. For each sampled observation $(x_i^{\mathrm{obs}},m_i,y_i)$ at iteration $k$:
\begin{enumerate}
    \itemsep -2pt
    \item \emph{Original masked sample.} Read off the mask $m_i^{(p)}$ at scale $p$.
    \item \emph{Further-thinned sample.} Draw $r$ as in~\eqref{eq:thinning}. For all $j \in [d]$, $m_{ij}^{(Cp)} \gets 1-(1-m_{ij}^{(p)})\,r_{j}$.
    \item \emph{One imputation.} Compute $\tilde x_i^{(Cp)}\gets\mathcal{I}(x_i^{\mathrm{obs}},m_i^{(Cp)},\xi_i)$, then obtain $\tilde x_i^{(p)}$ by overwriting the entries hidden by $r$ with their true values from $x_i$.
    \item \emph{Gradient estimates.} Evaluate $\hat g_i^{(p)}\!:=g(w_k;\tilde x_i^{(p)},y_i)$ and $\hat g_i^{(Cp)}\!:=g(w_k;\tilde x_i^{(Cp)},y_i)$.
    \item \emph{Richardson correction \& SGD update.} Form $\hat g_k^{\mathrm{R}}\gets(C\,\hat g_i^{(p)}-\hat g_i^{(Cp)})/(C-1)$ and update $w_{k+1}\gets w_k-\eta_k\,\hat g_k^{\mathrm{R}}$ (averaged across a minibatch when $b>1$).
\end{enumerate}
The procedure is a thin wrapper around any imputation-based SGD pipeline: one extra Bernoulli draw per observed entry and one extra gradient evaluation per sample.

\section{Theory of Richardson-SGD}
\label{sec:rsgd}

We now state the theoretical guarantees of Richardson-SGD. The analysis shows that Richardson corrections successively cancel the terms in the bias expansion, while controlling the associated variance inflation and the error from estimating the missingness mechanism. Combining these bounds with a classical biased-SGD argument yields a convergence rate. Throughout, the result applies to \emph{one-pass (one-epoch) SGD}, as in~\citet{sportisse2020debiasing} for linear regression: each sample is visited once, and the bias expansion from Section~\ref{sec:bias-structure} feeds directly into standard biased-SGD arguments. Multi-epoch behavior is outside the scope of the theory and is examined empirically in Section~\ref{sec:experiments}.

\subsection{First-order bias cancellation}

\begin{proposition}[First-order debiasing]
    \label{prop:first-order-debias}
    Assume independent hMCAR or independent sMAR. Then
    \begin{equation}
        \big\|\,\EE\big[\hat g_C^{\mathrm{R}}(w)\big]-\nabla L(w)\,\big\|
        \;=\;O(\|p\|^2),\qquad \textrm{when } \|p\|\to 0.
        \label{eq:first-order-cancel}
    \end{equation}
\end{proposition}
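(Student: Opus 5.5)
The plan is to reduce the statement to the bias expansion of Corollary~\ref{cor:bias_asymp_indep} applied at the two scales $p$ and $Cp$, and then cancel the linear terms in the Richardson combination~\eqref{eq:richardson-grad}. First I check that the two gradients have exactly the laws of imputed gradients at scales $p$ and $Cp$, so that $\EE[\hat g^{(p)}(w)]-\nabla L(w)=\Bbias(w,p)$ and $\EE[\hat g^{(Cp)}(w)]-\nabla L(w)=\Bbias(w,Cp)$, with the \emph{same} operator $\Aop(w)$ and the same multilinear coefficients $\mu_S(w)$ as in~\eqref{eq:bias-poly}.

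\emph{The gradient at scale $Cp$.} By~\eqref{eq:thinning}, conditionally on $(X,Y,V)$ the bits $M_j^{(Cp)}$, $j\in\Smiss$, are independent, since the $M_j^{(p)}$ are conditionally independent and the $r_j$ are drawn independently. Each satisfies
\[
\PP(M_j^{(Cp)}=0\mid V)=(1-p_ja_j(V))\,\frac{1-Cp_ja_j(V)}{1-p_ja_j(V)}=1-Cp_ja_j(V).
\]
Moreover the $r_j$ depend only on $V$ and external randomness, so the conditional independence from $(X^{(\Sobs^c)},Y)$ given $V$ is preserved. Hence $M^{(Cp)}$ is an independent hMCAR/sMAR mask with parameter $Cp$ and the same intensities $q_j$; note $Cp_jq_j\le1$ and $\EE[q_j]=1$, so this is a legitimate mechanism. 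Since $\xi\perp M^{(Cp)}\mid(X,Y)$ and $\tilde X^{(Cp)}=\mathcal{I}(X^{\mathrm{obs}},M^{(Cp)},\xi)$, Corollary~\ref{cor:bias_asymp_indep} gives $\Bbias(w,Cp)=C\Aop(w)p+O(\|p\|^2)$, where the $O(\cdot)$ constant is uniform because $C$ is fixed.

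\emph{The gradient at scale $p$.} This is the delicate step: $\tilde X^{(p)}$ is not $\mathcal{I}(X^{\mathrm{obs}},M^{(p)},\xi)$, because the imputed values on $\{M^{(p)}=1\}$ were computed using the larger mask. I would therefore not invoke the corollary directly. Instead I redo the pattern expansion from the proof of Proposition~\ref{prop:grad_bias_fo_struct} for this linked pair. Write $S=\{j:M_j^{(p)}=1\}$ and $S'=\{j:M_j^{(Cp)}=1\}\supseteq S$. The restored vector equals $X$ off $S$ and $\mathcal{I}$ evaluated with pattern $S'$ on $S$. Call the resulting gradient $G_{S,S'}$; it satisfies $G_{\varnothing,S'}=g$ and $G_{S,S}=G_S$. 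Conditionally on $V$, the probability of the pair $(S,S')$ factorises, namely $\prod_{j\in S}p_ja_j\prod_{j\in S'\setminus S}(C-1)p_ja_j\prod_{j\notin S'}(1-Cp_ja_j)$. Therefore
\[
\EE[\hat g^{(p)}(w)]-\nabla L(w)=\sum_j p_j\,\EE\big[a_j(V)\{G_{\{j\},\{j\}}-g\}\big]+O(\|p\|^2).
\]
The point is that every pattern with $|S'|\ge2$ carries a factor that is at least quadratic in $p$. Under the assumption that $g$ along imputations is integrable uniformly, so that each $\EE[a_S\,G_{S,S'}]$ is bounded independently of $p$, the first-order coefficient is exactly $\Aop_{\cdot j}(w)$ from~\eqref{eq:bias-Aj}. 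Hence $\Bbias$ at scale $p$ equals $\Aop(w)p+O(\|p\|^2)$.

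\emph{Combination.} By linearity,
\[
\EE[\hat g_C^{\mathrm{R}}]-\nabla L=\frac{C(\Aop p+O(\|p\|^2))-(C\Aop p+O(\|p\|^2))}{C-1}=O(\|p\|^2),
\]
which proves~\eqref{eq:first-order-cancel}. The main obstacle is the linked-imputation step. One must verify that the restoration does not introduce a first-order term differing from $\Aop p$, which amounts to showing that the singleton patterns of the pair reduce to $G_{\{j\}}$. One must also justify boundedness of the higher-order coefficients, which requires a moment assumption such as $\sup\EE\|G_{S,S'}\|<\infty$ together with boundedness of $a_j$ on the relevant range. If that uniform bound is not available in the paper's assumptions, I would state it as the integrability condition implicitly used in Corollary~\ref{cor:bias_asymp_indep}.
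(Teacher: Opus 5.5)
Your proposal is correct. Its skeleton is the same as the paper's: expand the bias at the two scales $p$ and $Cp$, then let the linear terms cancel in the Richardson combination. The two arguments differ in how they treat the scale-$p$ gradient.

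The paper applies Proposition~\ref{prop:grad_bias_fo_struct} at scale $p$ and asserts that the operator $\Aop(w)$ is the same at both scales. This tacitly identifies the restored vector $\tilde X^{(p)}$ with $\mathcal I(X^{\mathrm{obs}},M^{(p)},\xi)$. That identification is exact for coordinatewise rules such as zero or mean imputation. It fails for multivariate imputers such as MICE or $k$-NN, where the values imputed on $S$ are computed from the larger pattern $S'$. Viewed as an imputation rule with auxiliary randomness $(\xi,r)$, the linked construction has a first-order operator that depends on $p$ through the law of $r$, and it differs from $\Aop(w)$ by $O(\|p\|)$.

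Your joint expansion over pairs $(S,S')$ closes exactly this point. It uses the factorised conditional law $\prod_{j\in S}p_ja_j\prod_{j\in S'\setminus S}(C-1)p_ja_j\prod_{j\notin S'}(1-Cp_ja_j)$, which you computed correctly. From it, only two kinds of pattern survive at first order: those with $S=\varnothing$, which return $g$ exactly and have total weight $\prod_j(1-p_ja_j)$, and the singletons $(\{j\},\{j\})$. The discrepancy introduced by linked imputation is therefore absorbed into the $O(\|p\|^2)$ term.

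The paper's route is shorter and is literally exact for coordinatewise imputers. Yours is the argument that actually covers the general data-independent imputers the statement is meant to include. It needs an explicit uniform integrability condition on the $G_{S,S'}$ and boundedness of the $a_j$. The paper uses the same conditions implicitly when it bounds the coefficients $\mu_S(w)$ by a $p$-independent constant.
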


Proposition~\ref{prop:first-order-debias} shows that the debiasing challenge can be met by a deliberately counterintuitive operation: we decrease bias by adding missing values. While the plain imputed gradient has bias of order $\|p\|$, the Richardson-corrected gradient constructed from the original and further-thinned masks cancels this leading term and leaves only an $O(\|p\|^2)$ bias under independent hMCAR or independent sMAR. This gain is uniform in the loss and the imputation rule, and requires only one additional Bernoulli draw and one additional gradient evaluation per sample (proof in Appendix~\ref{app:richardson-proofs}).

\subsection{Higher-order Richardson-SGD under independent masks}

When the missing indicators are conditionally independent given $V$, Section~\ref{sec:bias-structure} showed that the gradient bias is, in fact, a multilinear polynomial in $p$. Since Richardson extrapolation is itself linear in the underlying expansion, one can cancel further orders by combining estimators at more than two missingness scales. Iterating the thinning construction with $k+1$ scales $1=C_0<C_1<\cdots<C_k$ (cascaded via~\eqref{eq:thinning} with $C\gets C_\ell/C_{\ell-1}$) and Vandermonde weights $\alpha\in\RR^{k+1}$ yields the $k$-th order Richardson estimator $\hat g^{[k]}(w):=\sum_{\ell=0}^{k}\alpha_\ell\,\hat g^{(C_\ell p)}(w)$.

\begin{corollary}[Higher-order cancellation]
    \label{cor:k-order}
    Assume $C\,p_j\,a_j(V)\le 1$ for every $j$. Under independent hMCAR or independent sMAR, $\big\|\EE[\hat g^{[k]}(w)]-\nabla L(w)\big\|=O(\|p\|^{k+1})$ as $\|p\|\to 0$. Furthermore, with $d_{\mathrm{miss}}:=\textrm{Card}(\{j:p_j>0\})$, the $d_{\mathrm{miss}}$-th order estimator cancels the bias exactly: $\EE[\hat g^{[d_{\mathrm{miss}}]}(w)]=\nabla L(w)$.
\end{corollary}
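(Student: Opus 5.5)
The plan is to reduce everything to the multilinear representation \eqref{eq:bias-poly} and then exploit the linearity of the Richardson combination. The first step is to show that each component gradient $\hat g^{(C_\ell p)}(w)$ of $\hat g^{[k]}$ has expectation $\nabla L(w)+\Bbias(w,C_\ell p)$, where $\Bbias(\cdot,\cdot)$ is the same bias function as in \eqref{eq:bias-poly}. This needs two facts.

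First, the cascaded thinning \eqref{eq:thinning} produces a mask $M^{(C_\ell p)}$ whose law given $(X,Y)$ is that of an independent hMCAR/sMAR mask at scale $C_\ell p$. By induction on $\ell$, the coordinates stay conditionally independent given $V$, because the thinning bits are drawn independently per coordinate. Each marginal satisfies
\[
\PP(M_j^{(C_\ell p)}=1\mid V)=1-\bigl(1-C_{\ell-1}p_ja_j(V)\bigr)\frac{1-C_\ell p_ja_j(V)}{1-C_{\ell-1}p_ja_j(V)}=C_\ell p_j a_j(V).
\]
The thinning bits depend only on $V$ and independent randomness, so the sMAR conditional independence $M\perp(X^{(\Sobs^c)},Y)\mid V$ is preserved. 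The condition $Cp_ja_j(V)\le1$ guarantees that the Bernoulli parameters lie in $[0,1]$.

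Second, the linked imputation must give each level the correct marginal law of the imputed vector. Imputing on the most thinned mask and restoring the true values on observed coordinates means that the imputed vector at level $\ell$ equals $G_{S}$ evaluated on the missing set $S$ of $M^{(C_\ell p)}$. I expect this to be the main obstacle. One has to check that, for a data-independent rule, the restored vector has the same conditional law given $(X,Y,S)$ as a direct imputation at that mask, which is the convention used to define $G_S$. The expansion \eqref{eq:bias-poly}, which is derived by summing $\PP(\text{pattern }S)\,\EE[G_S-g]$ over patterns, then applies verbatim with $p$ replaced by $C_\ell p$. I would also invoke Appendix~\ref{app:linked-imputation} here, since it defines $G_S$ consistently with this restoration.

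With this in hand, I would write
\[
\EE[\hat g^{[k]}(w)]=\Bigl(\sum_\ell\alpha_\ell\Bigr)\nabla L(w)+\sum_{\varnothing\ne S}\mu_S(w)\Bigl(\prod_{j\in S}p_j\Bigr)\sum_{\ell=0}^k\alpha_\ell C_\ell^{|S|}.
\]
The Vandermonde weights are defined by $\sum_\ell\alpha_\ell=1$ and $\sum_\ell\alpha_\ell C_\ell^r=0$ for $r=1,\dots,k$; this system is uniquely solvable since the $C_\ell$ are distinct. Every term with $|S|\le k$ then cancels, and the remaining terms have $|S|\ge k+1$. Each such term is bounded by $\|\mu_S(w)\|\,|\sum_\ell\alpha_\ell C_\ell^{|S|}|\prod_{j\in S}p_j\le c\,\|p\|^{k+1}$ whenever $\|p\|\le1$, using $\prod_{j\in S}p_j\le\|p\|_\infty^{|S|}$. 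Finiteness of $\mu_S(w)$ follows from the integrability assumed in Proposition~\ref{prop:grad_bias_fo_struct}, together with boundedness of $a_j$ by $p_j^{-1}$ (or, more simply, by requiring integrability of $\prod a_j\,\Delta_SG_\varnothing$). Summing over the finitely many $S$ gives the claimed $O(\|p\|^{k+1})$.

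For exactness, note that $\mu_S=0$ contributes nothing, and that $\prod_{j\in S}p_j=0$ whenever $S$ contains a coordinate with $p_j=0$. Only subsets $S$ of $\{j:p_j>0\}$ therefore survive, so $|S|\le d_{\mathrm{miss}}$. Taking $k=d_{\mathrm{miss}}$ cancels all of them, which gives $\EE[\hat g^{[d_{\mathrm{miss}}]}(w)]=\nabla L(w)$.
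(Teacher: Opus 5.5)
Your core argument is the paper's: the multilinear form \eqref{eq:bias-poly}, the scaling $\prod_{j\in S}C_\ell p_j=C_\ell^{|S|}\prod_{j\in S}p_j$, Vandermonde weights that annihilate every $|S|\le k$, and the support argument for $k=d_{\mathrm{miss}}$. Your induction for the law of the cascaded mask is also correct; the paper only checks one thinning step.

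The gap is your ``second fact''. Suppose the estimator is linked as in Section~\ref{sec:richardson}: impute once on the top mask $M^{(C_kp)}$, then restore the artificially hidden entries. Then the level-$\ell$ vector is \emph{not} a direct imputation on $M^{(C_\ell p)}$, unless $\mathcal I$ fills each missing coordinate without using the observed ones (zero or mean imputation). With regression, $k$-NN or MICE imputation, a coordinate missing at level $\ell$ was imputed from the coordinates observed at level $k$. That is, it was imputed without the coordinates that thinning hid and that are later restored. Write $S_\ell\subseteq S_k$ for the two missing sets. Let $H_{T,U}$ be the gradient at the vector whose coordinates in $T$ come from $\mathcal I(X^{\NA,U},M_U,\xi)$ and whose other coordinates are true. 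The level-$\ell$ gradient is then $H_{S_\ell,S_k}$, and in general $H_{\{j\},\{j,m\}}\neq G_{\{j\}}$. Hence, for $\ell<k$, $\EE[\hat g^{(C_\ell p)}(w)]-\nabla L(w)\neq\Bbias(w,C_\ell p)$ already at order $\|p\|^2$. The check you plan therefore fails. Appendix~\ref{app:linked-imputation} does not define $G_S$ through restoration either; it only argues that the first-order operators coincide.

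The paper's proof makes this identification silently, writing the bias as $\sum_\ell\alpha_\ell\Bbias(w,C_\ell p)$. That is exact if each level is imputed directly on its own cascaded mask, with $\xi$ independent of the thinning bits; in that case your first fact is all you need.

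For the linked estimator the statement still holds, but through a different expansion. Conditional on $S_k$, each $j\in S_k$ lies in $S_\ell$ independently with probability $t_\ell:=C_\ell/C_k$, which does not depend on $V$. Set $E_{T,U}:=\sum_{T'\subseteq T}(-1)^{|T|-|T'|}H_{T',U}$ and note $H_{\varnothing,U}=g(w;X,Y)$, since everything is restored. The inversion of Lemma~\ref{lem:det-mask-expansion}, together with $\sum_{T'\subseteq T\subseteq U}t^{|T|}(1-t)^{|U\setminus T|}=t^{|T'|}$, gives
\[
\EE\bigl[\hat g^{(C_\ell p)}(w)\,\big|\,X,Y,\xi,S_k\bigr]=g(w;X,Y)+\sum_{\varnothing\ne T\subseteq S_k}t_\ell^{|T|}E_{T,S_k}.
\]
Hence $\EE[\hat g^{(C_\ell p)}(w)]=\nabla L(w)+\sum_{r\ge1}C_\ell^{r}b_r$, where $b_r:=C_k^{-r}\sum_{|T|=r}\EE[\mathbf 1\{T\subseteq S_k\}E_{T,S_k}]$. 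This $b_r$ has three properties:
\begin{itemize}
\item it does not depend on $\ell$;
\item it is $O(\|p\|^r)$ under the same integrability you assume for $\mu_S$, because $\PP(T\subseteq S_k\mid V)=\prod_{j\in T}C_kp_ja_j(V)$;
\item it vanishes for $r>d_{\mathrm{miss}}$.
\end{itemize}
Your Vandermonde step then applies verbatim to the powers $C_\ell^r$ instead of to $\Bbias(w,C_\ell p)$, and yields both the $O(\|p\|^{k+1})$ bound and exactness at $k=d_{\mathrm{miss}}$.
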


For linear regression with squared loss, the bias is a polynomial of degree at most $2$ in $p$ (Appendix~\ref{app:linreg}), so the two-step Richardson-SGD produces an \emph{exact} debiasing under both hMCAR and sMAR. This matches the closed-form correction mechanism of \citet{sportisse2020debiasing} as a special case and extends it to sMAR, where no closed form is available. More generally, Corollary~\ref{cor:k-order} suggests that higher-order Richardson-SGD should be most useful when only a few coordinates are subject to missingness ($d_{\mathrm{miss}}$ small) so that the corresponding polynomial degree is low, or that the highest polynomial degree in the bias
is low, as for linear regression (see Appendix \ref{app:glm_bias_formulas}). Figure~\ref{fig:sgd-rich-multi-order} illustrates this phenomenon in synthetic linear and logistic regressions.

\begin{figure}[h]
    \centering
    \includegraphics[width=0.95\linewidth]{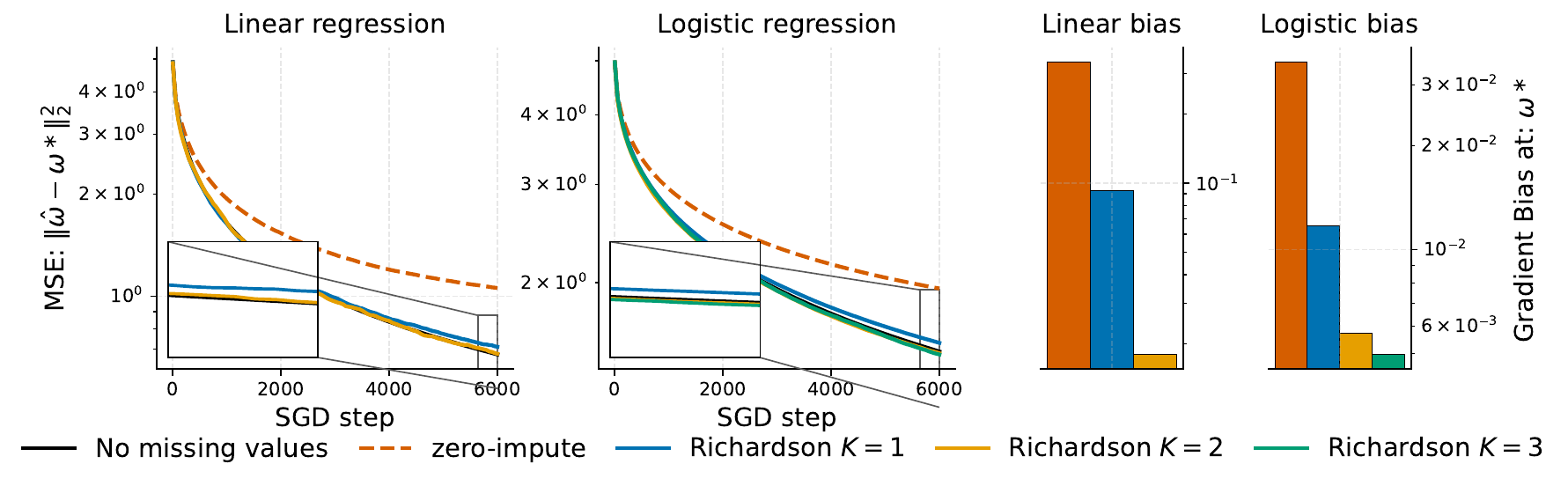}
    \caption{\small Multi-order Richardson correction in $4$-covariate linear and logistic regression under hMCAR with $p=(0.10,0.15,0.08,0.12)$, and bias at $w^\star$, $\|\EE[\hat g(w^\star)]-\nabla L(w^\star)\|$ As predicted by Corollary~\ref{cor:k-order}, each Richardson level removes one further order of bias: first-order Richardson clearly improves over standard SGD on zero-imputed data, second-order matches the complete-data trajectory in linear regression (curves overlap), and third-order yields an additional gain in the logistic case for the bias.}
    \label{fig:sgd-rich-multi-order}
\end{figure}

\subsection{Variance inflation}
\label{subsec:variance}

The previous two subsections highlight how Richardson reduces bias. As is standard in extrapolation methods, this comes at a price: the corrected gradient is a difference of two estimators evaluated at different missingness levels, which inflates its variance. Quantifying this inflation is essential, since the convergence rate of SGD depends on \emph{both} the bias and the variance of the stochastic gradient. For the first-order estimator,
\begin{equation}
    \mathrm{Var}\big[\hat g_C^{\mathrm{R}}(w)\big]
    \;\le\;
    \frac{2\,\big(C^2\,\mathrm{Var}\big[\hat g^{(p)}(w)\big]\;+\;\mathrm{Var}\big[\hat g^{(Cp)}(w)\big]\big)}{(C-1)^2},
    \label{eq:variance-bound}
\end{equation}
with a larger $C$ controlling the multiplicative factor since the function $f:(1,+\infty)\to\RR$, $f(x)=x^2/(x-1)^2$, is decreasing. For the $k$-th order estimator, variance inflates by a factor that grows with $k$, and requires $k$ missingness upscales, which limits $k$ when some $p_j$ are large. We therefore use first-order Richardson by default and reserve higher-order constructions for small $d_{\mathrm{miss}}$ or for losses with low maximum polynomial degree, as linear regression, which is of degree $2$ (see Appendix \ref{app:glm_bias_formulas}).

\subsection{Richardson-SGD with estimated missingness parameters}
\label{subsec:plug-in}

So far we have assumed that the quantities $(p,q)$ driving the missing mechanism are known. In practice, $p_j$ is estimated by the empirical missingness frequency on coordinate $j$, while $q_j(V)$ is fitted by a probabilistic model with input $V$. We now quantify how the resulting estimation errors propagate into the Richardson bias. Using the identifiability convention $\EE[q_j(V)]=1$, let $\lambda_j(V):=p_jq_j(V)$ and $\hat\lambda_j(V):=\hat p_j\hat q_j(V)$. The plug-in thinning rule replaces~\eqref{eq:thinning} by $\tilde r_j\sim\mathrm{Bernoulli}((1-C\hat\lambda_j(V))/(1-\hat\lambda_j(V)))$ and yields the plug-in Richardson gradient $\hat g_{C,\hat\lambda}^{\mathrm{R}}$.

\begin{proposition}[Plug-in Richardson]
    \label{prop:plug-in}
    Assume hMCAR or sMAR with $\lambda_j(V),\hat\lambda_j(V)\le\rho<1$, $C\hat\lambda_j(V)\le 1$ for every $j$, and $\|G_S(w;X,Y,\xi)-\nabla L(w)\|_{L^2}\le G_\star$ for every $S\subseteq[d]$. If $\|\hat p-p\|_\infty\le\delta_p$ and $\max_j\sup_v|\hat q_j(v)-q_j(v)|\le\delta_q$, then
    \begin{equation}
        \big\|\,\EE[\hat g_{C,\hat\lambda}^{\mathrm{R}}(w)]-\nabla L(w)\,\big\|
        \;=\;O\!\left(\|p\|^2+\delta_p+\|p\|_\infty\delta_q+\delta_p\delta_q\right).
        \label{eq:plug-in-bound}
    \end{equation}
    Under hMCAR ($q_j\equiv 1$, $\delta_q=0$), this collapses to $O(\|p\|^2+\delta_p)$.
\end{proposition}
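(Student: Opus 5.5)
We decompose the plug-in error around the oracle Richardson gradient. Write $\hat g^{\mathrm{R}}_{C,\lambda}$ for the oracle estimator built with the true intensities $\lambda_j(V)$. Under independent masks, Proposition~\ref{prop:first-order-debias} gives $\|\EE[\hat g^{\mathrm{R}}_{C,\lambda}]-\nabla L(w)\|=O(\|p\|^2)$. This leaves the term $\|\EE[\hat g^{\mathrm{R}}_{C,\hat\lambda}]-\EE[\hat g^{\mathrm{R}}_{C,\lambda}]\|$. For non-independent hMCAR/sMAR we would instead carry the bias of the oracle term as given. We read the $\|p\|^2$ term as meant for the independent case, which is the setting where the Richardson construction is analysed.

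\textbf{Pattern expansion.} The two estimators differ only through the law of the thinned mask $M^{(Cp)}$; the original mask $M^{(p)}$ is untouched. Conditionally on $(V,M^{(p)})$, we expand both expectations over the patterns $S\subseteq[d]$ of the thinned mask, writing each as $\sum_S \pi_S\,\EE[G_S\mid\cdot]$.
\begin{itemize}
\item The term $\hat g^{(p)}$ depends on the thinned mask only through which imputation is restored. Its contribution likewise rewrites as a sum over patterns $S\supseteq\{j:M^{(p)}_j=1\}$ of gradients evaluated on the restored imputation.
\item Subtracting $\nabla L(w)$ inside each term costs nothing, because the pattern probabilities sum to one under both $\lambda$ and $\hat\lambda$.
\item Cauchy--Schwarz and the hypothesis $\|G_S-\nabla L\|_{L^2}\le G_\star$ then bound the difference by $G_\star\,\tfrac{C+1}{C-1}\,\EE[\mathrm{TV}]$. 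Here $\mathrm{TV}$ is the total-variation distance between the two conditional laws of the thinning vector $r$, given $V$ and $M^{(p)}$.
\end{itemize}

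\textbf{Total-variation bound.} The bits $r_j$ are independent, so the distance tensorizes: it is at most $\sum_{j\in\Smiss}|\theta_j(\lambda_j)-\theta_j(\hat\lambda_j)|$, where $\theta(x)=(1-Cx)/(1-x)$. On $[0,\rho]$ we have $|\theta'(x)|=(C-1)/(1-x)^2\le (C-1)/(1-\rho)^2$. Hence each term is at most $\tfrac{C-1}{(1-\rho)^2}|\hat\lambda_j-\lambda_j|$. Finally,
\[
|\hat\lambda_j-\lambda_j|=|\hat p_j\hat q_j-p_jq_j|\le |\hat p_j-p_j|\,|q_j|+p_j|\hat q_j-q_j|+|\hat p_j-p_j||\hat q_j-q_j|,
\]
which is at most $\delta_p\sup|q_j|+p_j\delta_q+\delta_p\delta_q$. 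The quantity $q_j$ is bounded by $\rho/p_j$, but we would rather avoid that; it suffices to bound in expectation. The term $\EE[|q_j(V)|]=1$ appears after integrating over $V$, because $\mathrm{TV}$ enters linearly inside the expectation. This gives the rate $\delta_p+\|p\|_\infty\delta_q+\delta_p\delta_q$ with constants depending on $d,C,\rho,G_\star$. Under hMCAR, $q\equiv1$ and $\delta_q=0$, which recovers $O(\|p\|^2+\delta_p)$.

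\textbf{Main obstacle.} The difficulty is the interaction between Cauchy--Schwarz and the linear-in-$\mathrm{TV}$ bound. The hypothesis controls $G_S$ only in $L^2$, while the weights are conditional on $V$. Pulling an $L^\infty$ bound on the weight difference out of the expectation would force $\sup_v q_j(v)$ into the constant. First we would try using $\|\hat\lambda_j-\lambda_j\|_\infty\le \delta_p\|q_j\|_\infty+\|p\|_\infty\delta_q+\delta_p\delta_q$ together with $\|q_j\|_\infty\le \rho/p_j$. If that is too crude, the fallback is to regard $\sup q_j$ as an $O(1)$ model constant absorbed into the $O(\cdot)$, which is consistent with the stated rate.
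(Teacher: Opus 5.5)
Your oracle-plus-coupling route differs from the paper's and can be made to work, but only through your \emph{second} fallback. The main-line bound of the plug-in/oracle difference by $G_\star\tfrac{C+1}{C-1}\EE[\mathrm{TV}]$ is false in general under sMAR. The differences of the conditional pattern probabilities are functions of $V$, while the gradient deviations are controlled only in $L^2$. Cauchy--Schwarz therefore gives $\|\mathrm{TV}\|_{L^2}$, hence a term $\delta_p\|q_j\|_{L^2}$ rather than $\delta_p\EE[q_j(V)]=\delta_p$. (Under hMCAR, $\mathrm{TV}$ is deterministic and your bound is fine up to constants.)

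This cannot be repaired with $L^2$ control alone. Take $d_{\mathrm{miss}}=1$, $\rho\le 1/C$, $\hat q_j=q_j$ and $\hat p_j=p_j-\delta_p$. Let $q_j=\rho/p_j$ on a $V$-event $A$ with $\PP(A)=p_j/(2\rho)$. Choose a loss whose singleton gap is $D_{\{j\}}=G_\star\PP(A)^{-1/2}\mathbf{1}_A u$ for a unit vector $u$. The oracle is then exactly unbiased. The plug-in bias equals $\delta_p\EE[q_j(V)D_{\{j\}}/(1-\hat\lambda_j(V))]$, whose norm is at least $\delta_pG_\star\sqrt{\rho/(2p_j)}$, which is not $O(\delta_p)$. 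Your first fallback ($\|q_j\|_\infty\le\rho/p_j$) is worse still.

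The correct main line is your second fallback, in three steps:
\begin{itemize}
\item Treat $\bar q:=\max_j\sup_v q_j(v)$ as a model constant. This is automatic if $q_j$ is held fixed while $p\to0$, since $p_jq_j\le\rho$.
\item Bound $\mathrm{TV}$ pointwise in $V$ by $\tfrac{C-1}{(1-\rho)^2}\sum_j(\delta_p\bar q+p_j\delta_q+\delta_p\delta_q)$.
\item Pair this $L^\infty$ bound with $L^1$ control of the gradient deviations via H\"older.
\end{itemize}
Two details remain. First, the oracle you compare to needs $C\lambda_j(V)\le 1$. That is not a hypothesis (only $C\hat\lambda_j\le 1$ is), though it holds once $\|p\|_\infty\le 1/(C\bar q)$. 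Second, for imputers that are not coordinatewise, the restored $\hat g^{(p)}$ is not of the form $G_S$, so the $G_\star$ bound must be assumed for it as well.

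With that repair the proof is valid and genuinely different from the paper's. The paper builds no oracle estimator. It computes the exact intensity of the plug-in thinned mask, $\tilde\lambda_j=C\lambda_j+e_j^{(C)}$ with $e_j^{(C)}=(C-1)(\hat\lambda_j-\lambda_j)/(1-\hat\lambda_j)$. It feeds this into the subset expansion \eqref{eq:bias-full-subset-expansion-clear}, so the $Cp_ja_j(V)$ singleton terms cancel. It then bounds the explicit residual $-\tfrac{1}{C-1}\sum_j\EE[e_j^{(C)}D_{\{j\}}]$ by $O(\|e^{(C)}\|_\infty)$, plus a co-missingness remainder $O(\|p\|^2+\|p\|\|e^{(C)}\|_\infty+\|e^{(C)}\|_\infty^2)$. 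This identifies the leading plug-in error exactly and never needs $C\lambda_j\le 1$.

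Your coupling instead uses Proposition~\ref{prop:first-order-debias} as a black box. It also accounts for the dependence of the restored gradient on the thinning bits. The paper treats $\hat g^{(p)}$ as unaffected by the thinning, which is exact only for imputers such as zero or mean.

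Finally, the paper's Step 3 uses $|q_j(V)|\le 1$. Together with $\EE[q_j(V)]=1$, this forces $q_j\equiv 1$, so the paper's constants also secretly depend on $\sup_v q_j(v)$. Its $\|p\|^2$ term also silently requires conditionally independent masks. These are precisely the two caveats you raised.
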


The leading $\Aop(w)\,p$ contribution is cancelled regardless of plug-in errors, up to an additive $O(\delta_p+\|p\|_\infty\delta_q)$ penalty (proof in Appendix~\ref{app:plug-in}). When $\delta_p,\delta_q$ shrink fast enough, the $O(\|p\|^2)$ term dominates and the exact-mechanism guarantee is recovered. This behavior further motivates using the first-order Richardson SGD scheme, while higher order might not be conclusive in the plug-in setting. Appendix~\ref{app:exp-robust} reports an empirical sensitivity study.

\subsection{One-pass SGD convergence}
\label{subsec:one-pass}

We have now controlled both the bias of the Richardson-corrected gradient, through Proposition~\ref{prop:first-order-debias} and Corollary~\ref{cor:k-order}, and its variance, through \eqref{eq:variance-bound}, including under plug-in mechanisms (Proposition~\ref{prop:plug-in}). It remains to translate these gradient-level guarantees into a convergence rate for the SGD iterates, which is the quantity of interest. Note that biased SGD schemes have been extensively studied in the literature \citep[see, e.g.][]{ajalloeian2020convergence, demidovich2023guide}. To illustrate the resulting bias improvement of Richardson-SGD compared to plain imputation, we give a result under classic regularity conditions on the loss function.

\begin{corollary}[One-pass Richardson-SGD]
    \label{cor:one-pass}
    Assume $L$ is $\alpha$-strongly convex and $\beta$-smooth, the per-sample stochastic gradients are bounded in $L^2$. Under independent hMCAR or independent sMAR, after one pass on $n$ i.i.d. samples with $\eta_k = \frac{c}{k+\gamma},  c>\frac{1}{\alpha},\gamma \ge \frac{6c\beta^2}{\alpha} $, we obtain
    \begin{equation}
        \EE\big\| w_n-w^\star\big\|^2
        \;=\;
        \begin{cases}
            O(\|p\|^2)+O(1/n) & \text{(plain imputed SGD)},           \\
            O(\|p\|^4)+O(1/n) & \text{(Richardson-SGD, first order)},
        \end{cases}
        \label{eq:one-pass-rates}
    \end{equation}
    and the same convergence orders hold for the excess test loss $\EE[L(w_n)-L(w^\star)]$. With $k$-step Richardson-SGD, the missingness floor becomes $O(\|p\|^{2(k+1)})$. Thus,  for sufficiently large $k$, the missingness contribution is dominated by the statistical floor $O(1/n)$.
\end{corollary}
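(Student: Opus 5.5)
We treat both estimators as biased SGD with bias $b(w)$ and bounded second moment, and reduce the corollary to a generic recursion for $e_k:=\EE\|w_k-w^\star\|^2$. One pass means the sample used at step $k$ is independent of $w_k$. Writing $\hat g_k=\nabla L(w_k)+b(w_k)+\zeta_k$ with $\EE[\zeta_k\mid w_k]=0$, we expand
\[
\|w_{k+1}-w^\star\|^2=\|w_k-w^\star\|^2-2\eta_k\langle \hat g_k,w_k-w^\star\rangle+\eta_k^2\|\hat g_k\|^2 .
\]
Taking conditional expectations, strong convexity gives $\langle\nabla L(w_k),w_k-w^\star\rangle\ge\alpha\|w_k-w^\star\|^2$. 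Young's inequality gives $2|\langle b(w_k),w_k-w^\star\rangle|\le \tfrac{\alpha}{2}\|w_k-w^\star\|^2+\tfrac{2}{\alpha}\|b(w_k)\|^2$. For the second moment we use $\EE\|\hat g_k\|^2\le 3\beta^2\|w_k-w^\star\|^2+3\|b\|^2+3\sigma^2$, which follows from $\nabla L(w^\star)=0$, smoothness, and the $L^2$ bound. For Richardson we use the variance bound \eqref{eq:variance-bound} to obtain $\sigma^2_{\mathrm R}\le 2(C^2+1)\sigma^2/(C-1)^2$. The result is
\[
e_{k+1}\le\big(1-\tfrac{3\alpha}{2}\eta_k+3\beta^2\eta_k^2\big)e_k+\eta_k\tfrac{2}{\alpha}B^2+\eta_k^2(3B^2+3\sigma^2),\qquad B:=\sup_w\|b(w)\| .
\]
Here the sup can be restricted to the region visited by the iterates, or we can take $B$ uniform; see the caveat below.

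Next we solve the recursion. The condition $\gamma\ge 6c\beta^2/\alpha$ gives $3\beta^2\eta_k^2\le \tfrac{\alpha}{2}\eta_k$, so the contraction factor is at most $1-\alpha\eta_k$. With $\eta_k=c/(k+\gamma)$ and $c\alpha>1$, the standard Chung-type lemma applies. We show by induction that $e_k\le \tfrac{\nu}{k+\gamma}+\tfrac{2B^2}{\alpha^2}(1+o(1))$. The bias-driven part has fixed point $\tfrac{2B^2/\alpha}{\alpha}$ because $\sum\eta_j\prod(1-\alpha\eta_i)\le 1/\alpha$. The $\eta_k^2$ part yields $O(1/(k+\gamma))$ precisely because $c\alpha>1$. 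This gives $e_n=O(B^2)+O(1/n)$.

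It remains to plug in the bias orders. For plain imputed SGD, Proposition~\ref{prop:grad_bias_fo_struct} together with Corollary~\ref{cor:bias_asymp_indep} gives $B=O(\|p\|)$, hence a floor of $O(\|p\|^2)$. For first-order Richardson, Proposition~\ref{prop:first-order-debias} gives $B=O(\|p\|^2)$, hence $O(\|p\|^4)$. For $k$-step Richardson, Corollary~\ref{cor:k-order} gives $B=O(\|p\|^{k+1})$, hence $O(\|p\|^{2(k+1)})$; the larger variance only affects the constant in front of $1/n$. The excess-loss statement then follows from $\beta$-smoothness and $\nabla L(w^\star)=0$, via $L(w_n)-L(w^\star)\le\tfrac{\beta}{2}\|w_n-w^\star\|^2$.

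The main obstacle is uniformity. Propositions~\ref{prop:first-order-debias} and Corollary~\ref{cor:k-order} are stated pointwise in $w$, while the recursion needs $\|b(w_k)\|\le B$ along random iterates. I would handle this by reading the bias constants through the multilinear representation \eqref{eq:bias-poly}. The coefficients $\mu_S(w)$ are expectations of differences of gradients, so the $L^2$ boundedness of per-sample gradients bounds them uniformly in $w$, e.g. by $2^{|S|}G$. The remainder in $p$ is a finite polynomial with these coefficients, which makes the $O(\cdot)$ constants uniform. If this fails, the fallback is a linear-growth bound $\|b(w)\|\le \|p\|^{m}(K_0+K_1\|w-w^\star\|)$, absorbed into the contraction for $\|p\|$ small.
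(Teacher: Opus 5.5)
Your recursion is the paper's argument in Appendix~\ref{app:onepass}. It uses the same split of $\hat g_k$ into gradient, bias and centered noise, and the same Young split with $\alpha/2$. The step-size condition $\eta_k\le\alpha/(6\beta^2)$ again reduces the contraction to $1-\alpha\eta_k$; your cruder $3\beta^2$ still closes exactly. You then remove the $O(B^2)$ fixed point and apply a Chung-type bound using $\alpha c>1$, where the paper does this via $\Lambda_i=\delta_i-KB^2$. The plug-in of the bias orders and $L(w_n)-L(w^\star)\le\frac{\beta}{2}\|w_n-w^\star\|^2$ also match. Where you go further is the uniform bias bound. The paper does not derive it: its proposition assumes $\|\EE[\hat g_i(w_i)\mid w_i]-\nabla L(w_i)\|\le B(p)$ along the trajectory and then inserts the pointwise rates.

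Your primary justification for that uniformity does not work. A bound $\sup_w\|G_T(w;X,Y,\xi)\|_{L^2}\le G$ contradicts the hypotheses. Taking $T=\varnothing$ gives $\|\nabla L(w)\|\le G$ for all $w$, while $\alpha$-strong convexity forces $\|\nabla L(w)\|\ge\alpha\|w-w^\star\|$. The paper's own linear-regression example shows the bias really is unbounded in $w$. There the first-order Richardson bias is $-Cp_j\sum_{k\ne j}p_kS_{jk}w_k$, which is $O(\|p\|^2)$ at each $w$ but grows linearly in $\|w\|$, so no uniform $B$ exists.

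Your fallback is the correct repair. With $\|b(w)\|\le\|p\|^m(K_0+K_1\|w-w^\star\|)$, Young's inequality adds $\frac{4}{\alpha}K_1^2\|p\|^{2m}\eta_k\|w_k-w^\star\|^2$ to the recursion, plus an $\eta_k^2$ analogue. For $\|p\|$ small enough (depending on $\alpha c-1$) you keep an effective rate $\alpha'$ with $\alpha'c>1$, and the floor becomes $O(K_0^2\|p\|^{2m})+O(1/n)$. However, this linear-growth bound does not follow from the statement's $L^2$ assumption. It needs an extra hypothesis, such as a Lipschitz-in-$w$ condition on the imputed gradients $G_S$. State that hypothesis explicitly, or, as the paper does, assume the trajectory-wise bias bound directly.
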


Two implications of Corollary~\ref{cor:one-pass} are worth highlighting.
First, when $\| p\| \ll 1$, one-step Richardson-SGD improves the bias floor of plain imputed SGD from $O(\| p\|^2)$ to $O(\| p\|^4)$. Thus, the missingness-induced contribution is reduced by two orders of magnitude in $\| p\|$, while keeping essentially the same per-iteration cost.
Second, multi-step Richardson-SGD can, in principle, reduce the missingness term down to the statistical noise level $O(1/n)$. This comes at the price of variance inflation: the bound in~\eqref{eq:variance-bound} compounds across Richardson levels and may become prohibitive when $d_{\mathrm{miss}}$ is large or when the loss has heavy stochastic gradients. Consequently, multi-step Richardson-SGD is most appealing when $d_{\mathrm{miss}}$ is small, or in settings such as linear regression where order~$2$ already suffices.

\textbf{Scope of the theory.} We emphasize that Corollary~\ref{cor:one-pass} is a one-pass guarantee, in line with the regime studied by~\citet{sportisse2020debiasing}. The multi-epoch behavior is not covered by our analysis: when iterates revisit the same observations, the gradient noise due to missing values across iterations are no longer independent. The experiments of Section~\ref{sec:experiments} suggest, however, that Richardson-SGD remains effective in multi-epoch training, and we view a formal multi-epoch analysis as an interesting question for future work.

\section{Experiments}
\label{sec:experiments}

We empirically study Richardson-SGD on synthetic and real datasets available in \texttt{scikit-learn} \citep{pedregosa2011scikit}. Throughout, missing entries are introduced \emph{ex post} into otherwise complete datasets according to the mechanism specified in each subsection, either hMCAR or sMAR, so that the ground truth $w^\star$ is known, or can be estimated by multi-pass training with L-BFGS-B, and the quantities $p$ and $q_j$ are also known. Unless stated otherwise, the average missingness is fixed at $\bar p = 0.2$. To keep the main text concise, we report only logistic regression here; analogous experiments for other datasets and models, including linear and Poisson GLMs, together with implementation details, are deferred to Appendix~\ref{app:additional-experiments}.

\textbf{Empirical takeaway.}
Across datasets, models, missingness mechanisms, and imputation rules, Richardson-SGD behaves as a generic debiasing layer rather than a model-specific correction. It improves imputation-based SGD using only controlled thinning and one additional gradient evaluation, and remains effective when the missingness mechanism is estimated or partially misspecified. In short, the method is simple, fast, model-agnostic, and theoretically grounded, making it a natural add-on for learning with missing covariates.

\subsection{Richardson with imputation on logistic regression}
\label{subsec:exp-imputation}

This experiment tests the central practical claim of the paper: \emph{Richardson extrapolation can be combined effectively with standard imputation methods}. We run logistic regression under hMCAR missingness, comparing SGD applied on the most standard imputations (namely MICE, MICE with random-forest base learners, and $k$-nearest-neighbor imputations) used in conjunction with SGD, and the Richardson-SGD counterparts (applied to the same imputation procedures). Across missingness levels and datasets, Richardson consistently acts as a complementary debiasing layer: the imputer reduces the initial missingness bias, while Richardson further reduces the residual gradient bias, with the largest gains obtained when the underlying imputer is already accurate.
\begin{figure}[h]
    \centering
    \includegraphics[width=\textwidth,height=0.18\textheight]{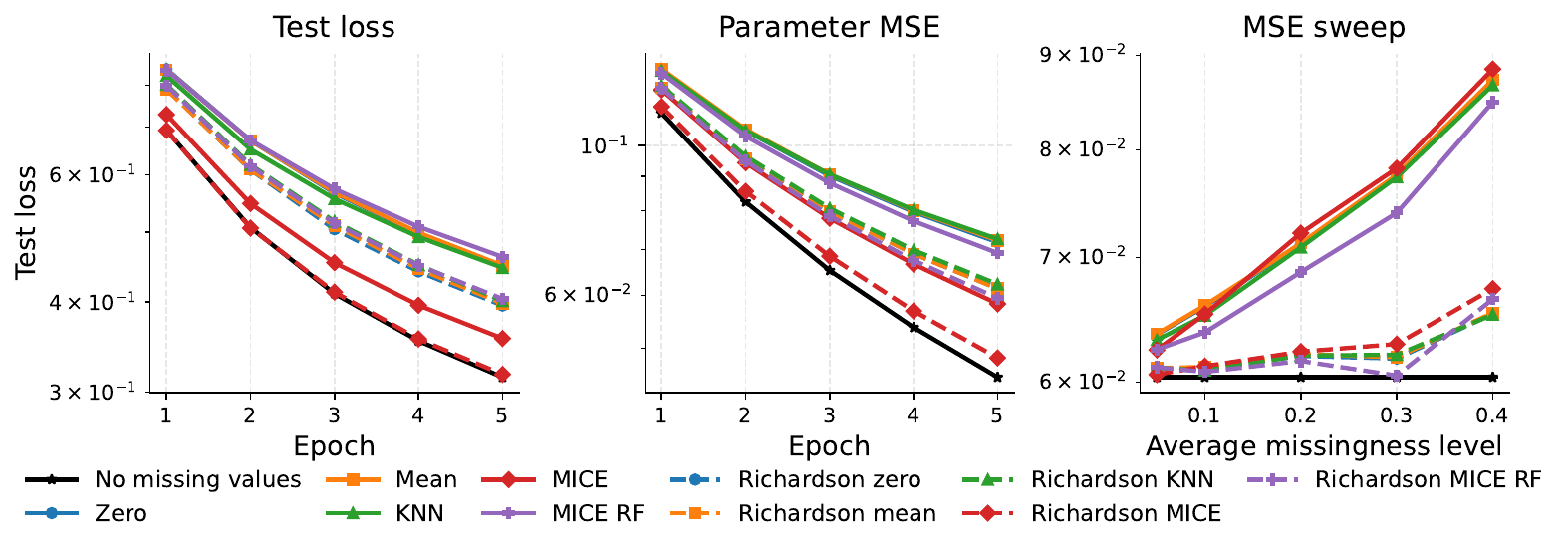}
    \caption{
        \textbf{Covertype missingness sweep.}
        Test loss and parameter mean-squared error for logistic regression on Covertype~\citep{covertype_31} under hMCAR missingness, as the average missingness level $\bar p$ varies.
        Richardson improves over each corresponding imputed SGD baseline across a broad range of $\bar p$, showing that the correction is not limited to the very small-missingness regime.
    }
    \label{fig:logsweep}
\end{figure}

\begin{figure}[h]
    \centering
    \includegraphics[width=1\linewidth]{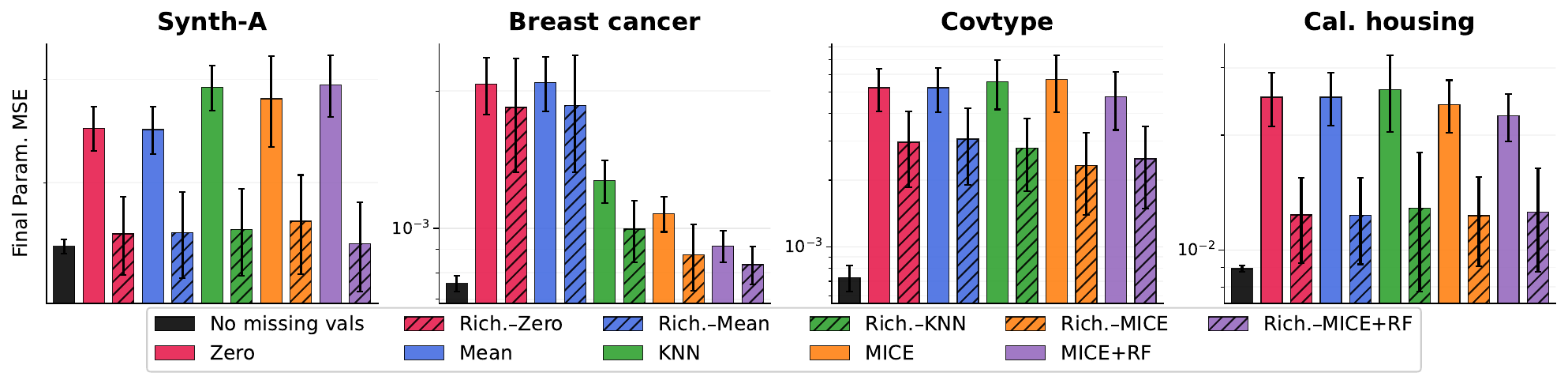}
    \caption{
        \textbf{Cross-dataset comparison.}
        Final parameter mean-squared error of the last SGD iterate for logistic regression under hMCAR missingness across multiple datasets.
        Each imputation-based SGD baseline is compared with its first-order Richardson-SGD counterpart.
        Richardson systematically lowers the final parameter error, with particularly clear gains on the Breast Cancer dataset~\citep{breast_cancer_14}, where stronger imputers lead to substantially smaller Richardson-corrected errors.
    }
    \label{fig:logvarious}
\end{figure}

\paragraph{Additional experiments.}
Appendix~\ref{app:additional-experiments} extends the numerical study beyond logistic regression to several other Generalized Linear Models (linear and Poisson), datasets, imputation rules, and missingness mechanisms.
Across settings, Richardson consistently improves the considered imputations and remains effective beyond the one-pass regime.
The gains are largest in the first epoch, matching the theory of Section~\ref{subsec:one-pass}.
We also show robustness to estimated missingness values by replacing $p$ and $q$ with their estimates in Appendix \ref{app:exp-robust}, and robustness to misspecification of the missingness mechanism by using Richardson-SGD under an assumed hMCAR mechanism while the true mechanism is sMAR in Appendix \ref{app:exp-misspecification}.

\section{Conclusion}
\label{sec:conclusion}

We introduced Richardson-SGD, a simple debiasing method for stochastic gradient learning computed on imputed data. For arbitrary parametric losses and data-independent imputation rules, we establish that the imputation-induced gradient bias admits a first-order expansion in the missingness vector $p$.
We propose the
Richardson-SGD procedure, which turns this structure into an algorithm by deliberately adding controlled missingness. 
This cancels the leading bias term, reducing gradient bias from $O(\|p\|)$ to $O(\|p\|^2)$ and the one-pass SGD error floor from $O(\|p\|^2)$ to $O(\|p\|^4)$.
Our experiments show that one-step Richardson-SGD procedure successfully improves the convergence of SGD for a variety of parametric models and imputation methods.
The procedure is lightweight, model-agnostic, and compatible with standard imputation pipelines. Overall, our results show that controlled additional missingness can be more than a nuisance: used carefully, it becomes a practical tool for reducing bias in stochastic learning from incomplete data.

Our debiasing procedure requires generating more missing data with the same distribution as the original sample, but at an increased scale. Doing so is easy for independent hMCAR data, but becomes challenging in the presence of anticorrelation between mask components. In this setting, we are not able to generate more missing data along all coordinates simultaneously while respecting the form of the original missing data distribution. Future research directions are to extend our procedure to such settings. Note however that, in practice, our procedure may be relatively robust to missingness misspecification (Appendix \ref{app:exp-misspecification}), which leaves some hope to establish positive results in such settings.


\bibliography{references.bib}
\bibliographystyle{abbrvnat}


\newpage
\appendix
\setcounter{tocdepth}{1}
\tableofcontents

\newpage

\section{Additional notation and technical preliminaries}
\label{app:notation}

This appendix collects notation and elementary identities used throughout the proofs.

\paragraph{Notations.}
For a mask $m\in\{0,1\}^d$, $S(m)=\{j:m_j=1\}$. We write $G_S(w;X,Y,\xi):=g(w;\mathcal{I}(X^{\NA,S},M_S,\xi),Y)$ for the gradient when exactly the coordinates in $S$ are declared missing, and $G_\varnothing=g(w;X,Y)$. We use $\Delta_S G_\varnothing:=\prod_{j\in S}(T_j-I)G_\varnothing$, where $T_j$ replaces a sample by its version with coordinate $j$ declared missing. We write $a_j(V)\equiv 1$ under hMCAR and $a_j(V)=q_j(V)$ under sMAR.

\paragraph{Inclusion--exclusion identity.}
For every $S\subseteq[d]$, $\Delta_S G_\varnothing=\sum_{T\subseteq S}(-1)^{|S|-|T|}G_T$, and inversely $G_S=\sum_{T\subseteq S}\Delta_T G_\varnothing$ (Lemma~\ref{lem:det-mask-expansion}). This is the discrete-difference identity that drives the bias expansion.

\section{Proofs for the missingness-bias expansion}
\label{app:bias-decomposition}

This appendix proves the structural expansion of the imputation-induced gradient
bias.

First, we prove a purely algebraic identity: the gradient obtained after hiding
any set of coordinates can be decomposed into a sum of finite-difference effects.
These effects isolate what is due to hiding one coordinate, what is due to hiding
two coordinates jointly, and so on.

Second, we average this identity over the random missingness mask. This turns the
finite-difference effects into a bias expansion whose coefficients are
co-missingness probabilities. The first-order terms correspond to single missing
coordinates; the remainder contains all simultaneous missingness effects.

Throughout this appendix, fix a parameter value $w\in\RR^d$. We suppress the
dependence on $w$ whenever this improves readability. By definition, any
imputation rule leaves a fully observed sample unchanged:
\[
    \mathcal I(X,\mathbf 0,\xi)=X .
\]
Thus, when no coordinate is declared missing, the imputed gradient equals the
complete-data gradient. We assume that all finite differences introduced
below are integrable. This is automatic, for instance, if the gradient is continuous, as assumed in the paper.
\subsection{Proof of Proposition  \ref{prop:grad_bias_fo_struct}}

\paragraph{Gradients indexed by deterministic missingness sets.}
For a deterministic set $S\subseteq[d]$, let $X^{\NA,S}$ be the version of
$X$ in which exactly the coordinates in $S$ are replaced by $\NA$. Let
$M_S\in\{0,1\}^d$ be the deterministic mask associated with $S$:
\[
    (M_S)_j=\mathbf 1\{j\in S\}.
\]
We define
\begin{equation}
    G_S
    :=
    g\big(w;\mathcal I(X^{\NA,S},M_S,
    \xi),Y\big).
    \label{eq:GS-clear}
\end{equation}
Thus, $G_S$ is the gradient we would compute if we deliberately declared
exactly the coordinates in $S$ missing and then applied the imputation rule.
In particular,
\[
    G_\varnothing
    =
    g(w;\mathcal I(X,
    \mathbf 0,
    \xi),Y)
    =
    g(w;X,Y).
\]

\paragraph{Finite missingness differences.}
The objects $G_S$ describe gradients under different missingness patterns. To
separate the effect of one coordinate from the extra effect of several
coordinates being missing together, we use finite differences. For every
$S\subseteq[d]$, define
\begin{equation}
    D_S
    :=
    \sum_{T\subseteq S}(-1)^{|S|-|T|}G_T,
    \qquad D_\varnothing:=G_\varnothing .
    \label{eq:DS-def-clear}
\end{equation}
The first examples are
\[
    D_{\{j\}}=G_{\{j\}}-G_\varnothing,
\]
and
\[
    D_{\{j,k\}}
    =
    G_{\{j,k\}}-G_{\{j\}}-G_{\{k\}}+G_\varnothing .
\]
The interpretation is as follows. The term $D_{\{j\}}$ is the direct effect of
hiding coordinate $j$. The term $D_{\{j,k\}}$ is not the full effect of
hiding $j$ and $k$; it is only the additional interaction left after removing
the two separate single-coordinate effects. Higher-order terms $D_S$ have the
same meaning: they isolate the part of the missingness effect that appears only
when all coordinates in $S$ are hidden together.

The following simple result shows that the full effect of hiding the coordinates in $A$ can be
rebuilt by adding all finite-difference effects supported inside $A$.

\begin{lemma}[Deterministic mask expansion]
    \label{lem:det-mask-expansion}
    For every deterministic set $A\subseteq[d]$,
    \begin{equation}
        G_A
        =
        \sum_{S\subseteq A}D_S .
        \label{eq:det-mask-expansion-clear}
    \end{equation}
\end{lemma}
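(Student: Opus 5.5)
The identity is Möbius inversion on the Boolean lattice of subsets of $A$, and I would prove it directly by substituting the definition \eqref{eq:DS-def-clear} and exchanging the order of summation. Writing
\[
\sum_{S\subseteq A} D_S=\sum_{S\subseteq A}\sum_{T\subseteq S}(-1)^{|S|-|T|}G_T=\sum_{T\subseteq A} G_T \sum_{S:\,T\subseteq S\subseteq A}(-1)^{|S|-|T|},
\]
everything reduces to evaluating the inner signed count for each fixed $T\subseteq A$.

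For the inner sum, the sets $S$ with $T\subseteq S\subseteq A$ are in bijection with subsets $U=S\setminus T$ of $A\setminus T$. The sign is $(-1)^{|U|}$, so the inner sum equals
\[
\sum_{U\subseteq A\setminus T}(-1)^{|U|}=\sum_{k=0}^{|A\setminus T|}\binom{|A\setminus T|}{k}(-1)^k=(1-1)^{|A\setminus T|}.
\]
This is $1$ if $T=A$ and $0$ otherwise, where we use the convention $0^0=1$. Hence only the term $T=A$ survives, and the double sum collapses to $G_A$, which is \eqref{eq:det-mask-expansion-clear}. The case $A=\varnothing$ is consistent, since it reads $G_\varnothing=D_\varnothing$.

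There is no real obstacle here. The objects $G_T$ are random vectors, but all sums are finite, so the exchange of summation is purely algebraic and holds pointwise in $(X,Y,\xi)$; no integrability is needed at this stage.

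As a cross-check, I would also verify the operator form stated in Appendix~\ref{app:notation}. Formally expanding $\prod_{j\in S}(T_j-I)$ gives $\sum_{T\subseteq S}(-1)^{|S|-|T|}\prod_{j\in T}T_j$. Identifying $\prod_{j\in T}T_jG_\varnothing$ with $G_T$ shows $D_S=\Delta_SG_\varnothing$. This identification uses the fact that declaring coordinates missing in sequence yields the same deterministic mask $M_T$ regardless of order, and imputation is applied once to $(X^{\NA,T},M_T,\xi)$.
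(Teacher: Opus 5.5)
Your proof is correct and matches the paper's argument essentially step for step: substitute the definition of $D_S$, regroup by $G_T$, reparametrize $S=T\cup R$ with $R\subseteq A\setminus T$, and use $(1-1)^{|A\setminus T|}=\mathbf{1}\{T=A\}$. The operator cross-check $D_S=\Delta_S G_\varnothing$ is a valid extra; the paper only asserts it in its notation appendix.
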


\begin{proof}

    Starting from the definition of $D_S$,
    \[
        \sum_{S\subseteq A}D_S
        =
        \sum_{S\subseteq A}\sum_{T\subseteq S}(-1)^{|S|-|T|}G_T .
    \]
    We now group the terms by $G_T$. A fixed $G_T$ appears only in those sums
    with $T\subseteq S\subseteq A$, so
    \[
        \sum_{S\subseteq A}D_S
        =
        \sum_{T\subseteq A}G_T
        \sum_{S:T\subseteq S\subseteq A}(-1)^{|S|-|T|} .
    \]
    For fixed $T\subseteq A$, write $S=T\cup R$, where
    $R\subseteq A\setminus T$. Then the inner sum becomes
    \[
        \sum_{R\subseteq A\setminus T}(-1)^{|R|}
        =
        (1-1)^{|A\setminus T|},
    \]
    which results from the binomial expansion of the right-hand side term. This term  equals $1$ if $T=A$, and $0$ otherwise.  Therefore, every term cancels
    except $G_A$, proving \eqref{eq:det-mask-expansion-clear}.
\end{proof}

\paragraph{From a deterministic mask to a random mask.}
We now let $M\in\{0,1\}^d$ be the actual random missingness mask. Recall that
$S(M):=\{j:M_j=1\}$ is the set of missing coordinates for the mask $M$. The corresponding
imputed gradient is $G_{S(M)}$.
For each $S\subseteq[d]$, we recall that the conditional co-missingness probability is
\begin{equation}
    \rho_S
    :=
    \EE\left[\prod_{j\in S}M_j\,\middle|\,X,Y\right]
    =
    \PP\big(M_j=1\text{ for all }j\in S\mid X,Y\big),
    \qquad \rho_\varnothing:=1 .
    \label{eq:rhoS-clear}
\end{equation}
Thus, $\rho_{\{j\}}$ is the conditional probability that coordinate $j$ is
missing, while $\rho_{\{j,k\}}$ is the conditional probability that $j$ and
$k$ are missing simultaneously.

\begin{lemma}[Random mask expansion]
    \label{lem:random-mask-expansion}
    Consider any data-independent imputation parametrized by $\xi$, as defined in \eqref{def:imputation}. Let $M\in\{0,1\}^d$ be any missingness mask. Then the corresponding
    imputed gradient $G_{S(M)}$ satisfies
    \begin{equation}
        \EE\left[G_{S(M)}\mid X,Y,
            \xi\right]
        =
        G_\varnothing
        +
        \sum_{\varnothing\ne S\subseteq[d]}\rho_S D_S .
        \label{eq:random-mask-expansion-clear}
    \end{equation}
\end{lemma}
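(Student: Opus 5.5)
The plan is to apply Lemma~\ref{lem:det-mask-expansion} pointwise with the random set $A=S(M)$, and then take conditional expectations given $(X,Y,\xi)$. For each realization of the mask we have
\[
    G_{S(M)}=\sum_{S\subseteq S(M)}D_S=\sum_{S\subseteq[d]}\mathbf 1\{S\subseteq S(M)\}\,D_S=\sum_{S\subseteq[d]}\Big(\prod_{j\in S}M_j\Big)D_S .
\]
The second equality rewrites the range of summation as an indicator, and the third uses $\mathbf 1\{S\subseteq S(M)\}=\prod_{j\in S}M_j$, with the empty product equal to $1$. The term $S=\varnothing$ contributes $D_\varnothing=G_\varnothing$.

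Before conditioning, we must settle which quantities are random and which are fixed. The key point is that each $D_S$ is a finite signed combination of the $G_T$. Each $G_T=g(w;\mathcal I(X^{\NA,T},M_T,\xi),Y)$ uses the \emph{deterministic} mask $M_T$, and since the imputation is data-independent, it acts on the single observation only. Hence every $D_S$ is a measurable function of $(X,Y,\xi)$ alone and does not depend on $M$. Conditioning on $(X,Y,\xi)$ and using linearity over this finite sum, we can therefore pull $D_S$ outside the conditional expectation:
\[
    \EE\big[G_{S(M)}\mid X,Y,\xi\big]=G_\varnothing+\sum_{\varnothing\ne S}\EE\Big[\prod_{j\in S}M_j\,\Big|\,X,Y,\xi\Big]D_S .
\]
The integrability assumptions stated at the start of the appendix justify the expectations.

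The main obstacle is the final step, namely identifying $\EE[\prod_{j\in S}M_j\mid X,Y,\xi]$ with $\rho_S=\EE[\prod_{j\in S}M_j\mid X,Y]$. This follows from the standing assumption $\xi\perp M\mid(X,Y)$ in the definition of the imputation \eqref{def:imputation}: conditional independence implies that further conditioning on $\xi$ does not change the conditional law of $M$ given $(X,Y)$. I would state this step explicitly, since it is exactly where the auxiliary randomness of the imputation could otherwise couple with the mask. Note that no assumption on the mask mechanism (hMCAR, sMAR, or independence) is needed here. Those mechanism assumptions only enter later, when $\rho_S$ is computed, for example as $\prod_{j\in S}p_ja_j(V)$ under conditional independence.
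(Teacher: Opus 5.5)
Your proposal is correct and follows essentially the same route as the paper: apply Lemma~\ref{lem:det-mask-expansion} pointwise to $A=S(M)$, rewrite $\mathbf 1\{S\subseteq S(M)\}$ as $\prod_{j\in S}M_j$, pull the $(X,Y,\xi)$-measurable differences $D_S$ out of the conditional expectation, and use $\xi\perp M\mid(X,Y)$ to identify the remaining factor with $\rho_S$. Your explicit remarks on why $D_S$ does not depend on $M$, and on no mechanism assumption being needed at this stage, are slightly more careful than the paper's wording but do not change the argument.
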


\begin{proof}
    Apply Lemma~\ref{lem:det-mask-expansion} to the random set $A=S(M)$. For a
    fixed realization of the mask,
    \[
        G_{S(M)}
        =
        \sum_{S\subseteq S(M)}D_S .
    \]
    The condition $S\subseteq S(M)$ is equivalent to saying that every coordinate
    in $S$ is missing, namely $M_j=1$ for all $j\in S$. Therefore
    \[
        \mathbf 1\{S\subseteq S(M)\}
        =
        \prod_{j\in S}M_j,
    \]
    which yields
    \[
        G_{S(M)}
        =
        \sum_{S\subseteq[d]}
        \left(\prod_{j\in S}M_j\right)D_S .
    \]
    Conditional on $(X,Y,
        \xi)$, the finite differences $D_S$ are fixed, and only the mask remains random. Moreover, by assumption, we have
    $\xi\perp M\mid(X,Y)$, which leads to
    \[
        \EE\left[\prod_{j\in S}M_j\,\middle|\,X,Y,
            \xi\right]
        =
        \EE\left[\prod_{j\in S}M_j\,\middle|\,X,Y\right]
        =
        \rho_S .
    \]
    Hence
    \[
        \EE\left[G_{S(M)}\mid X,Y,
            \xi\right]
        =
        \sum_{S\subseteq[d]}\rho_S D_S .
    \]
    The term $S=\varnothing$ equals $D_\varnothing=G_\varnothing$, which gives
    \eqref{eq:random-mask-expansion-clear}.
\end{proof}

\paragraph{Bias expansion.}
We can now prove the first-order structure of the gradient bias. The random
imputed gradient used by the learner is
\[
    \hat g(w)=G_{S(M)}.
\]
Since $G_\varnothing=g(w;X,Y)$, we have
$\EE[G_\varnothing]=\nabla L(w)$. Taking expectations in
Lemma~\ref{lem:random-mask-expansion} therefore yields
\begin{equation}
    \Bbias(w,p)
    :=
    \EE[\hat g(w)]-\nabla L(w)
    =
    \sum_{\varnothing\ne S\subseteq[d]}\EE[\rho_S D_S].
    \label{eq:bias-full-subset-expansion-clear}
\end{equation}
This identity is the key building block. It says that the bias is a sum over all
nonempty missingness sets $S$. Each term has two factors:
\begin{itemize}
    \item $\rho_S$, the probability that all coordinates in $S$ are missing;
    \item $D_S$, the incremental gradient effect created by hiding exactly the
          coordinates in $S$, after lower-order effects have been subtracted.
\end{itemize}
Thus, singletons $S=\{j\}$ produce the first-order bias, while sets with
$|S|\ge2$ produce the co-missingness remainders.

\begin{proof}[Proof of Proposition~\ref{prop:grad_bias_fo_struct}]
    Start from the exact expansion
    \eqref{eq:bias-full-subset-expansion-clear}. We separate the singleton terms
    from the terms involving at least two missing coordinates:
    \begin{equation}
        \Bbias(w,p)
        =
        \sum_{j=1}^d\EE[\rho_{\{j\}}D_{\{j\}}]
        +
        \sum_{|S|\ge2}\EE[\rho_S D_S].
        \label{eq:bias-singletons-plus-remainder-clear}
    \end{equation}
    We now identify the singleton probabilities under the mechanisms considered in
    the paper.

    Under hMCAR,
    \[
        \rho_{\{j\}}
        =
        \PP(M_j=1)
        =
        p_j .
    \]
    Under sMAR, with $V=X^{(\Sobs)}$,
    \[
        \rho_{\{j\}}
        =
        \PP(M_j=1\mid V)
        =
        p_jq_j(V).
    \]
    Both cases can be written as
    \begin{equation}
        \rho_{\{j\}}
        =
        p_j a_j(V),
        \quad \textrm{with }
        a_j(V)=
        \begin{cases}
            1,      & \text{hMCAR}, \\
            q_j(V), & \text{sMAR}.
        \end{cases}
        \label{eq:aj-clear}
    \end{equation}
    Substituting \eqref{eq:aj-clear} into the singleton part of
    \eqref{eq:bias-singletons-plus-remainder-clear} gives
    \[
        \sum_{j=1}^d\EE[\rho_{\{j\}}D_{\{j\}}]
        =
        \sum_{j=1}^d p_j\,\EE[a_j(V)D_{\{j\}}].
    \]
    Since $D_{\{j\}}=G_{\{j\}}-G_\varnothing$, this is exactly
    $\Aop(w)p$, where the $j$-th column of $\Aop(w)$ is
    \[
        \Aop_{\cdot j}(w)
        =
        \EE\left[
            a_j(V)
            \big\{
            G_{\{j\}}(w;X,Y,
            \xi)-G_\varnothing(w;X,Y,
            \xi)
            \big\}
            \right].
    \]
    The remaining terms are precisely the co-missingness remainder:
    \begin{equation}
        \Rrem(w,p)
        :=
        \sum_{|S|\ge2}\EE[\rho_S D_S].
        \label{eq:remainder-exact-clear}
    \end{equation}
    Combining the singleton part and the remainder proves
    \[
        \Bbias(w,p)=\Aop(w)p+\Rrem(w,p).
    \]
\end{proof}

\begin{remark}[What the remainder contains]
    \label{rem:dependent-remainder-size}
    The remainder $\Rrem(w,p)$ is the sum  of all interaction terms caused by simultaneous missingness. For example,
    the pair $\{j,k\}$ contributes
    \[
        \EE[\rho_{\{j,k\}}D_{\{j,k\}}].
    \]
    If $M_j$ and $M_k$ are independent and each is missing with probability of
    order $p$, then $\rho_{\{j,k\}}$ is of order $p^2$. If instead the two
    coordinates are always missing together, then $\rho_{\{j,k\}}$ can be of order
    $p$. Thus, without a weak-dependence condition on co-missingness probabilities,
    $\Rrem(w,p)$ may be a first-order term, proportional to $p$.
\end{remark}

\subsection{Proof of Corollary \ref{cor:bias_asymp_indep}}
\begin{proof}
    Assume that the missingness indicators are conditionally independent given the
    variables driving the missingness mechanism. In hMCAR, this is ordinary
    independence. In sMAR, this is conditional independence given $V$.

    Then, for every $S\subseteq[d]$, the probability that all coordinates in
    $S$ are missing factorizes:
    \[
        \rho_S
        =
        \prod_{j\in S}\rho_{\{j\}}
        =
        \prod_{j\in S}p_j a_j(V).
    \]

    Defining
    \begin{equation}
        \mu_S(w)
        :=
        \EE\left[
            \left(\prod_{j\in S}a_j(V)\right)D_S(w)
            \right],
        \label{eq:muS-clear}
    \end{equation}

    we then have the \textbf{multilinear form under independent masks:}
    \begin{equation}
        \Bbias(w,p)
        =
        \sum_{\varnothing\ne S\subseteq[d]}
        \left(\prod_{j\in S}p_j\right)
        \mu_S(w),
        \label{eq:multilinear-bias-clear}
    \end{equation}
    and thus
    \eqref{eq:remainder-exact-clear},
    \begin{align*}
        \Rrem(w,p)
         & =
        \sum_{|S|\ge2}
        \left(\prod_{j\in S}p_j\right)
        \mu_S(w).
    \end{align*}
    Every term in this sum contains at least two factors $p_j$. Since the
    dimension is fixed and the finite differences are integrable, there exists a
    finite constant $C_w$, depending on $w$ but not on $p$, such that
    \[
        \|\Rrem(w,p)\|
        \le
        C_w
        \sum_{|S|\ge2}\prod_{j\in S}p_j .
    \]
    The last sum is $O(\|p\|^2)$ as $\|p\|\to0$, because each product contains at
    least two entries of $p$. Therefore
    \[
        \|\Rrem(w,p)\|=O(\|p\|^2),
        \qquad
        \big\|\Bbias(w,p)-\Aop(w)p\big\|=O(\|p\|^2).
    \]
\end{proof}

\begin{remark}
    The coefficient $\mu_S(w)$ is the average $|S|$-way missingness interaction:
    it is the effect of declaring all coordinates in $S$ missing, after all lower
    order effects have been removed by inclusion-exclusion. This formula is the
    reason Richardson extrapolation applies: the bias is organized by powers of the
    missingness scale.
\end{remark}

\section{Proofs for Richardson correction}
\label{app:richardson-proofs}

This appendix collects the proofs of the Richardson-extrapolation results: the joint law of the further-thinned mask, first- and higher-order bias cancellation, the subset-based variant, the plug-in mechanism, and the linear-regression case study. We close with explicit GLM bias formulas.

\subsection{Joint law of the further-thinned mask}
\label{app:thinning-law}

Fix $C>1$ and assume $C\,p_ja_j(V)\le 1$ a.s.\ for every $j\in\Smiss$, with $a_j$ as in~\eqref{eq:aj-clear}. For $j\in\Smiss$, draw $r_j$ as in~\eqref{eq:thinning}, conditionally independent across $j$ given $(X,M^{(p)})$. Define $M_j^{(Cp)}:=1-(1-M_j^{(p)})\,r_j$.

Since the imputation rule is conditionally independent of $M^{(p)}$ given $(X,Y)$, by construction we have $\PP(M_j^{(p)}=0\mid X,Y)=1-p_ja_j(V)$, where $V:=X^{(\Sobs)}$. Recall that $M_j^{(Cp)}=0$ iff $M_j^{(p)}=0$ AND $r_j=1$. Hence,
\begin{align*}
    \PP\big(M_j^{(Cp)}=0\,\big|\,V,Y\big)
    \; & =\;\PP\big(M_j^{(p)}=0\,\big|\,V,Y\big)\;\PP\big(r_j=1\,\big|\,V,Y,M_j^{(p)}=0\big) \\
    \; & =\;\big(1-p_ja_j(V)\big)\;\frac{1-Cp_ja_j(V)}{1-p_ja_j(V)}                          \\
    \; & =\;1-Cp_ja_j(V).
\end{align*}
Due to the conditional independence of $M^{(p)}$ given $V$, we obtain the conditional independence of $\{M_j^{(Cp)}\}_{j\in\Smiss}$ given $V$. Hence, $M^{(Cp)}$ has the same conditional law as an independent mask drawn at scale $Cp$.

\subsection{Proof of Proposition~\ref{prop:first-order-debias}}
\label{app:proof-cor-first-order}

\begin{proof}Apply Proposition~\ref{prop:grad_bias_fo_struct} at scales $p$ and $Cp$. Both biases admit the decomposition $\Bbias(w,\cdot)=\Aop(w)\,\cdot+\Rrem(w,\cdot)$, with the same operator $\Aop(w)$ (since by~\eqref{eq:bias-Aj}, $\Aop$ does not depend on $p$). Substituting into~\eqref{eq:richardson-grad},
    \begin{align*}
        \EE\left[\hat g_C^{\mathrm{R}}(w)\right]-\nabla L(w)
        \; & =\;
        \frac{C\,\Bbias(w,p)-\Bbias(w,Cp)}{C-1} \\
        \; & =\;
        \frac{C\,\Aop(w)\,p-\Aop(w)\,(Cp)}{C-1}
        \;+\;
        \frac{C\,\Rrem(w,p)-\Rrem(w,Cp)}{C-1}.
    \end{align*}
    Linearity of $\Aop$ yields $C\,\Aop(w)\,p-\Aop(w)\,(Cp)=0$, so only the remainder survives:
    \[
        \EE\left[\hat g_C^{\mathrm{R}}(w)\right]-\nabla L(w)
        \;=\;
        \frac{C\,\Rrem(w,p)-\Rrem(w,Cp)}{C-1}.
    \]
    Under conditional independence of the $\{M_j\}_{j\in\Smiss}$ given $V$, Proposition~\ref{prop:grad_bias_fo_struct} gives $\|\Rrem(w,p)\|=O(\|p\|^2)$ and, by the same bound applied at scale $Cp$, $\|\Rrem(w,Cp)\|=O(C^2\|p\|^2)=O(\|p\|^2)$. Combining,

    \begin{align*}
        \|\,\EE[\hat g_C^{\mathrm{R}}(w)]-\nabla L(w)\,\big\|\;=\;O(\|p\|^2),
    \end{align*}

    which is \eqref{eq:first-order-cancel}.
\end{proof}

\subsection{Proof of Corollary~\ref{cor:k-order} (higher-order cancellation)}
\label{app:proof-cor-k-order}

Under independent masks, $\Bbias(w,p)=\sum_{\varnothing\ne S\subseteq[d]}\big(\prod_{j\in S}p_j\big)\mu_S(w)$ from~\eqref{eq:bias-poly}. Group terms by $|S|$:
\[
    \Bbias(w,p)\;=\;\sum_{m=1}^d\beta_m(w,p),
    \qquad\text{with}\qquad
    \beta_m(w,p)\;:=\;\sum_{|S|=m}\Big(\prod_{j\in S}p_j\Big)\mu_S(w),
\]
so that $\beta_m(w,\cdot)$ is homogeneous of degree $m$, i.e.\ $\beta_m(w,Cp)=C^m\beta_m(w,p)$. For a sequence of expansion factors $1=C_0<C_1<\cdots<C_k$ with $C_kp_ja_j(V)\le 1$ a.s.,
\[
    \Bbias(w,C_\ell p)\;=\;\sum_{m=1}^d C_\ell^{m}\,\beta_m(w,p),\qquad \ell=0,\dots,k.
\]
The Vandermonde system
\[
    \sum_{\ell=0}^k\alpha_\ell\;=\;1,
    \qquad
    \sum_{\ell=0}^k\alpha_\ell\,C_\ell^{m}\;=\;0,\quad m=1,\dots,k,
\]
admits a unique solution $\alpha\in\RR^{k+1}$ since the matrix $(C_\ell^m)_{\ell,m=0}^k$ is a non-singular Vandermonde. With this choice of $\alpha$,
\begin{align*}
    \EE\left[\hat g^{[k]}(w)\right]-\nabla L(w)
    \; & =\;\sum_{\ell=0}^k\alpha_\ell\,\Bbias(w,C_\ell p)                           \\
    \; & =\;\sum_{m=1}^d\Big(\sum_{\ell=0}^k\alpha_\ell C_\ell^m\Big)\beta_m(w,p)    \\
    \; & =\;\sum_{m=k+1}^d\Big(\sum_{\ell=0}^k\alpha_\ell C_\ell^m\Big)\beta_m(w,p),
\end{align*}
where the last equality uses $\sum_\ell\alpha_\ell C_\ell^m=0$ for $m=1,\dots,k$ and $\sum_\ell\alpha_\ell C_\ell^0=1$ but the $m=0$ term contributes $\sum_\ell\alpha_\ell\,\Bbias(w,0)=0$ since $\Bbias(w,0)=0$. Each $\beta_m(w,p)$ is bounded by $\|p\|_\infty^m\,\sum_{|S|=m}\|\mu_S(w)\|=O(\|p\|^m)$, hence
\[
    \big\|\EE[\hat g^{[k]}(w)]-\nabla L(w)\big\|\;=\;O(\|p\|^{k+1}).
\]
Finally, let $d_{\mathrm{miss}}=\#\{j:p_j>0\}$. When $k=d_{\mathrm{miss}}$, every $S$ with $|S|>d_{\mathrm{miss}}$ has at least one coordinate with $p_j=0$, so $\prod_{j\in S}p_j=0$ and $\beta_m(w,p)=0$ for $m>d_{\mathrm{miss}}$. The residual bias vanishes identically: $\EE[\hat g^{[d_{\mathrm{miss}}]}(w)]=\nabla L(w)$. The argument under sMAR is identical, with $p_j$ replaced by $p_ja_j(V)$ inside the expectation defining $\mu_S(w)$.\hfill$\square$

\subsection{Proof of Proposition~\ref{prop:plug-in} (plug-in mechanism)}
\label{app:plug-in}

Let $\lambda_j(V):=p_jq_j(V)$ and $\hat\lambda_j(V):=\hat p_j\hat q_j(V)$. We write $M^{\tilde R}$ for the further-thinned mask produced by the plug-in rule with intensities $\hat\lambda_j$, where $\tilde r_j\sim\mathrm{Bernoulli}((1-C\hat\lambda_j(V))/(1-\hat\lambda_j(V)))$.

\paragraph{Step 1: Effective intensity of the plug-in further-thinned mask.}

Conditioning on $V$ and using the conditional independence of $\tilde r_j$ and $M^{(p)}$ given $V$, and the fact that $M_j^{\tilde R}=0$ iff $M_j^{(p)}=0$ AND $\tilde r_j=1$,
\begin{align*}
    \PP\big(M_j^{\tilde R}=1\,\big|\,V\big)
     & =1-\PP\big(M_j^{(p)}=0,\,\tilde r_j=1\,\big|\,V\big) \\
     & =1-\big(1-\lambda_j(V)\big)\,
    \frac{1-C\hat\lambda_j(V)}{1-\hat\lambda_j(V)}.
\end{align*}

Define the \emph{effective intensity} after plug-in thinning by
\[
    \tilde\lambda_j(V):=\PP(M_j^{\tilde R}=1\mid V).
\]
Expanding the previous display over the common denominator
$1-\hat\lambda_j(V)$, we obtain
\begin{align}
    \tilde\lambda_j(V)
     & =
    \frac{
        1-\hat\lambda_j(V)
        -\big(1-\lambda_j(V)\big)
        \big(1-C\hat\lambda_j(V)\big)
    }{
        1-\hat\lambda_j(V)
    } \notag                 \\
     & =
    \frac{
        \lambda_j(V)
        +(C-1)\hat\lambda_j(V)
        -C\lambda_j(V)\hat\lambda_j(V)
    }{
        1-\hat\lambda_j(V)
    } \notag                \\
     & =
    \frac{
        C\lambda_j(V)\big(1-\hat\lambda_j(V)\big)
        +(C-1)\big(\hat\lambda_j(V)-\lambda_j(V)\big)
    }{
        1-\hat\lambda_j(V)
    } \notag \\
     & =
    C\,\lambda_j(V)
    +(C-1)\,
    \frac{\hat\lambda_j(V)-\lambda_j(V)}
    {1-\hat\lambda_j(V)}.
    \label{eq:plug-in-eff-intensity}
\end{align}
The first term is the desired intensity of a new draw at scale $Cp$; the second is the plug-in error. Setting
\[
    e_j^{(C)}(V)\;:=\;\tilde\lambda_j(V)-C\,\lambda_j(V)\;=\;(C-1)\,\frac{\hat\lambda_j(V)-\lambda_j(V)}{1-\hat\lambda_j(V)},
\]
we have $\tilde\lambda(V)=C\lambda(V)+e^{(C)}(V)$.

\paragraph{Step 2: Bias of the plug-in Richardson gradient.}
By the same expansion as in Proposition~\ref{prop:grad_bias_fo_struct}, the
singleton part of the bias is obtained by multiplying the singleton gradient gap
by the corresponding conditional missingness probability. For the original mask,
this probability is
\[
    \PP(M_j=1\mid V)=p_j a_j(V),
\]
whereas for the plug-in further-thinned mask, Step~1 gives
\[
    \PP(M_j^{\tilde R}=1\mid V)
    =
    \tilde\lambda_j(V)
    =
    C p_j a_j(V)+e_j^{(C)}(V).
\]
Applying the inclusion--exclusion expansion (eq:bias-full-subset-expansion-clear) separately to each of the two stochastic gradients in $\hat g_{C,\hat\lambda}^{\mathrm{R}}(w)=(C\,\hat g^{(p)}(w)-\hat g^{(Cp,\hat\lambda)}(w))/(C-1)$, and using $\rho_{\{j\}}=p_ja_j(V)$ for the original mask and $\PP(M_j^{\tilde R}=1\mid V)=Cp_ja_j(V)+e_j^{(C)}(V)$ for the plug-in further-thinned mask, the singleton contributions to the two biases are
\[
    \sum_{j=1}^d p_j\,\EE\!\left[a_j(V)D_{\{j\}}\right]
    \quad\text{and}\quad
    \sum_{j=1}^d \big(Cp_ja_j(V)+e_j^{(C)}(V)\big)\,\EE\!\left[D_{\{j\}}\mid V\right],
\]
respectively. In the Richardson combination, the deterministic $Cp_ja_j(V)$ contributions cancel exactly, leaving
\begin{align*}
    \EE[\hat g_{C,\hat\lambda}^{\mathrm{R}}(w)]-\nabla L(w)
     & =
    -\frac{1}{C-1}
    \sum_{j=1}^d
    \EE\!\left[
             e_j^{(C)}(V)
             \big\{
             G_{\{j\}}(w;X,Y,\xi)-G_\varnothing(w;X,Y,\xi)
             \big\}
             \right] \\
     & \qquad
    +
    \frac{
        C\,\Rrem(w,p)-\Rrem^{\tilde R}(w)
    }{C-1},
\end{align*}
where $\Rrem^{\tilde R}(w):=\sum_{|S|\ge 2}\EE[\rho_S^{\tilde R}D_S]$ is the co-missingness remainder evaluated at the plug-in further-thinned mask. Thus the only remaining first-order contribution is the plug-in intensity error $e_j^{(C)}(V)$. Under the assumed $L^2$ bound on the singleton gradient gaps, there exists $G_\star$ such that
\[
    \big\|
    G_{\{j\}}(w;X,Y,\xi)-G_\varnothing(w;X,Y,\xi)
    \big\|_{L^2}
    \le
    G_\star,
    \qquad \forall j\in[d].
\]
Therefore,

\[
    \left\|
    \sum_{j=1}^d
    \EE\!\left[
        e_j^{(C)}(V)
        \big\{
        G_{\{j\}}(w;X,Y,\xi)-G_\varnothing(w;X,Y,\xi)
        \big\}
        \right]
    \right\|
    =
    O\!\left(\|e^{(C)}\|_\infty\right).
\]
Moreover, the co-missingness remainder is
\[
    O\!\left(
    \|p\|^2
    +
    \|p\|\,\|e^{(C)}\|_\infty
    +
    \|e^{(C)}\|_\infty^2
    \right).
\]
Combining gives ,
\begin{equation}
    \big\|
    \EE[\hat g_{C,\hat\lambda}^{\mathrm{R}}(w)]-\nabla L(w)
    \big\|
    =
    O\!\left(
    \|e^{(C)}\|_\infty
    +
    \|p\|^2
    +
    \|p\|\,\|e^{(C)}\|_\infty
    +
    \|e^{(C)}\|_\infty^2
    \right). \label{eq:plug-in-intermediate}
\end{equation}

\paragraph{Step 3: Bound on the plug-in error.}
Using
\begin{align*}
    |\hat p_j\hat q_j(V)-p_jq_j(V)| & =\left|(\hat p_j-p_j)q_j(V)+\hat p_j(\hat q_j(V)-q_j(V))\right| \\
                                    & \le |\hat p_j-p_j|\,|q_j(V)|+|\hat p_j|\,|\hat q_j(V)-q_j(V)|
\end{align*}
and the bounds $|q_j(V)|\le 1$, $|\hat p_j|\le \|p\|_\infty+\delta_p$,
\[
    |\hat\lambda_j(V)-\lambda_j(V)|
    \;\le\;\delta_p+(\|p\|_\infty+\delta_p)\,\delta_q.
\]
Since $\hat\lambda_j(V)\le\rho<1$,
\[
    |e_j^{(C)}(V)|
    \;\le\;\frac{C-1}{1-\rho}\left[\delta_p+(\|p\|_\infty+\delta_p)\,\delta_q\right],
\]
hence, defining $\|e^{(C)}\|_\infty:=\sup_{j\in\Smiss,\,v}|e_j^{(C)}(v)|$,
\[
    \|e^{(C)}\|_\infty\;=\;O\big(\delta_p+\|p\|_\infty\delta_q+\delta_p\delta_q\big).
\]

\paragraph{Step 4: Concluding.}
Substituting the bound on $\|e^{(C)}\|_\infty$ into~\eqref{eq:plug-in-intermediate} and simplifying,
\[
    \big\|\EE[\hat g_{C,\hat\lambda}^{\mathrm{R}}(w)]-\nabla L(w)\big\|
    \;=\;O\Big(\|p\|^2+\delta_p+\|p\|_\infty\delta_q+\delta_p\delta_q\Big),
\]
where the implicit constants depend only on $C$, $\rho$, and $G_\star$. In the MCAR case $q_j\equiv 1$, $\delta_q=0$ and the bound collapses to $O(\|p\|^2+\delta_p)$.\hfill$\square$

\section{Bias formulas for specific generalized linear models}
\label{app:glm_bias_formulas}

We record explicit expressions for the leading-order population bias $\Aop(w)\,p$ in three GLMs under heterogeneous MCAR with zero imputation. When the missingness factors are independent, $\Aop(w)\,p$ is the only leading term in $p$. In full generality, however, some terms in the remainder may also be linear. In all cases, Richardson-SGD eliminates all linear terms, whether or not additional linear contributions appear in the remainder.

Notably, the bias of linear regression is a polynomial of total degree at most $2$ in $p$, whereas logistic and Poisson regression generally exhibit full-degree bias, up to degree $d$.

\paragraph{Linear regression (squared loss).}
For $\ell(w;x,y)=\tfrac12(w^\top x-y)^2$ and zero imputation,
\[
    \big(\Aop(w)\,p\big)_j\;=\;-p_j\,\nabla_jL(w)\;-\;\sum_{k\ne j}p_k\,S_{jk}w_k,\qquad S:=\EE[XX^\top].
\]
The detailed derivation, including the exact non-asymptotic version, is reproduced in Appendix~\ref{app:linreg}.

\paragraph{Logistic regression.}
For $\ell(w;x,y)=\log(1+e^{-y\,w^\top x})$ with $y\in\{-1,+1\}$, the gradient is $g(w;x,y)=-y\,\sigma(-y\,w^\top x)\,x$ where $\sigma$ is the logistic function. Under zero imputation and heterogeneous MCAR,
\[
    \big(\Aop(w)\,p\big)_j\;=\;-p_j \nabla_j L(w)
    +\sum_{k\ne j}p_k\,\EE\left[Y\big(\sigma(-Y\,w^\top X)-\sigma(-Y\,w^\top X^{(-k)})\big)X_j\right],
\]
where $X^{(-k)}$ is $X$ with $X_k$ replaced by $0$.

\paragraph{Poisson regression.}
For $\ell(w;x,y)=e^{w^\top x}-y\,w^\top x$, the gradient is $g(w;x,y)=(e^{w^\top x}-y)\,x$, hence
\[
    \big(\Aop(w)\,p\big)_j\;=\;-p_j\,\nabla_jL(w)\;+\;\sum_{k\ne j}p_k\,\EE\left[(e^{w^\top X^{(-k)}}-e^{w^\top X})X_j\right].
\]
All three expressions are obtained by substituting the corresponding loss into~\eqref{eq:bias-Aj}. They share the same structural form: a coordinate-wise diagonal contribution $-p_j\nabla_jL(w)$, plus an off-diagonal correction.

\section{Linear regression: a transparent case study}
\label{app:linreg}

The goal of this appendix is to show on the simplest GLM that, under heterogeneous MCAR with independent masks and zero imputation, the population gradient bias is a polynomial of degree at most $2$ in $p$. By Corollary~\ref{cor:k-order}, second-order Richardson with two factors $C_1<C_2$ therefore cancels this bias \emph{exactly}, while a single Richardson step already reduces it from $O(\|p\|)$ to $O(\|p\|^2)$.

\paragraph{Setting.}
We work at the single-observation level with squared loss,
\[
    \ell(w;x,y)=\tfrac12(w^\top x-y)^2,\qquad g(w;x,y)=(w^\top x-y)\,x.
\]
The population risk is $L(w)=\tfrac12\EE[(w^\top X-Y)^2]$ with $\nabla L(w)=Sw-b$, $S:=\EE[XX^\top]$, $b:=\EE[YX]$. We assume heterogeneous MCAR with independent mask coordinates, $\PP(M_j=1)=p_j$, and zero imputation $\tilde X=(1-M)\odot X$. The imputed gradient is $\hat g(w):=g(w;\tilde X,Y)=(w^\top\tilde X-Y)\tilde X$.

\subsection{Sample-conditional and population biases}

\begin{proposition}[Sample-conditional bias]
    \label{prop:sample-bias-app}
    Under heterogeneous MCAR with independent masks, for each $j\in[d]$,
    \begin{equation}
        \EE_M\left[\hat g_j(w;\tilde X,Y)\,\big|\,X,Y\right]-g_j(w;X,Y)
        \;=\;-p_jX_j^2w_j-\sum_{k\ne j}(p_j+p_k-p_jp_k)X_jX_kw_k+p_jYX_j,
        \label{eq:sample-bias-form1-app}
    \end{equation}
    or equivalently
    \begin{equation}
        \EE_M\left[\hat g_j(w;\tilde X,Y)\,\big|\,X,Y\right]-g_j(w;X,Y)
        \;=\;-p_j\,g_j(w;X,Y)-(1-p_j)\sum_{k\ne j}p_k\,X_jX_kw_k.
        \label{eq:sample-bias-form2-app}
    \end{equation}
\end{proposition}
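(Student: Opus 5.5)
The plan is to compute the conditional expectation directly, coordinate by coordinate, using only the independence of the mask from $(X,Y)$ and the independence of the mask coordinates among themselves. With zero imputation, $\tilde X_k=(1-M_k)X_k$, so the $j$-th coordinate of the imputed gradient is
\[
    \hat g_j(w;\tilde X,Y)=\Big(\sum_{k}w_k(1-M_k)X_k-Y\Big)(1-M_j)X_j .
\]
We expand this product and take $\EE_M[\cdot\mid X,Y]$ term by term. Since $M\perp(X,Y)$ under hMCAR, the only random factors are the products $(1-M_k)(1-M_j)$. These have conditional expectations that follow from the Bernoulli laws together with independence. For $k=j$, idempotence gives $(1-M_j)^2=1-M_j$, whose expectation is $1-p_j$. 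For $k\ne j$, independence gives $(1-p_j)(1-p_k)$. The response term only involves $(1-M_j)$, whose expectation is $1-p_j$.

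This yields
\[
    \EE_M[\hat g_j\mid X,Y]=(1-p_j)w_jX_j^2+\sum_{k\ne j}(1-p_j)(1-p_k)w_kX_jX_k-(1-p_j)YX_j .
\]
Subtracting $g_j(w;X,Y)=w_jX_j^2+\sum_{k\ne j}w_kX_jX_k-YX_j$ and using the identity $(1-p_j)(1-p_k)-1=-(p_j+p_k-p_jp_k)$ gives \eqref{eq:sample-bias-form1-app} immediately.

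For \eqref{eq:sample-bias-form2-app}, we regroup the same expression. We write $-(p_j+p_k-p_jp_k)=-p_j-(1-p_j)p_k$. The $-p_j$ parts, together with $-p_jX_j^2w_j+p_jYX_j$, reassemble into $-p_j\big(\sum_k w_kX_jX_k-YX_j\big)=-p_j g_j(w;X,Y)$. What remains is exactly $-(1-p_j)\sum_{k\ne j}p_kX_jX_kw_k$.

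There is no real obstacle here. The only point needing care is the diagonal term $k=j$: one must use $(1-M_j)^2=1-M_j$ rather than treating it as a product of independent factors, which would wrongly produce $(1-p_j)^2$. The whole argument is a consistency check with the general multilinear expansion \eqref{eq:multilinear-bias-clear}. One could alternatively derive the result from Lemma~\ref{lem:random-mask-expansion}, noting that $D_S=0$ for $|S|\ge3$ because the gradient is quadratic in $\tilde X$, but the direct computation is shorter.
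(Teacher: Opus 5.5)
Your proposal is correct and follows essentially the same route as the paper. The paper also writes $\tilde X_k=(1-M_k)X_k$, uses $\EE[(1-M_j)^2]=1-p_j$ and $\EE[(1-M_j)(1-M_k)]=(1-p_j)(1-p_k)$ for $k\neq j$, subtracts the complete-data gradient, and then factors out $-p_j\,g_j(w;X,Y)$ to obtain the second form, which is exactly your regrouping via $-(p_j+p_k-p_jp_k)=-p_j-(1-p_j)p_k$.
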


\begin{proof}
    With $\omega_j:=1-M_j$ and $\tilde X_j=\omega_jX_j$, the imputed $j$-th gradient is
    \[
        \hat g_j(w;\tilde X,Y)=(w^\top\tilde X-Y)\tilde X_j=\Big(\sum_kw_k\omega_kX_k-Y\Big)\omega_jX_j.
    \]
    Conditioning on $(X,Y)$ and using independence of the mask coordinates ($\EE[\omega_j]=\EE[\omega_j^2]=1-p_j$ and $\EE[\omega_j\omega_k]=(1-p_j)(1-p_k)$ for $k\ne j$),
    \[
        \EE_M\left[\hat g_j(w;\tilde X,Y)\,\big|\,X,Y\right]
        \;=\;(1-p_j)X_j^2w_j+\sum_{k\ne j}(1-p_j)(1-p_k)X_jX_kw_k-(1-p_j)YX_j.
    \]
    Subtracting the complete-data gradient $g_j(w;X,Y)=X_j^2w_j+\sum_{k\ne j}X_jX_kw_k-YX_j$ yields~\eqref{eq:sample-bias-form1-app} after expanding $(1-p_j)(1-p_k)-1=-(p_j+p_k-p_jp_k)$. To obtain~\eqref{eq:sample-bias-form2-app}, factor $-p_j$ in front of $g_j(w;X,Y)$:
    \begin{align*}
         & -p_jX_j^2w_j-p_j\sum_{k\ne j}X_jX_kw_k+p_jYX_j-(1-p_j)\sum_{k\ne j}p_kX_jX_kw_k \\
         & \qquad=-p_j\,g_j(w;X,Y)-(1-p_j)\sum_{k\ne j}p_kX_jX_kw_k.\qedhere
    \end{align*}
\end{proof}

\begin{corollary}[Population bias of zero-imputed linear regression]
    \label{cor:population-bias-app}
    Under heterogeneous MCAR with independent masks, with $B_j(w;p):=\EE\left[\hat g_j(w;\tilde X,Y)\right]-\nabla_jL(w)$,
    \begin{equation}
        B_j(w;p)\;=\;-p_j\,\nabla_jL(w)\;-\;(1-p_j)\sum_{k\ne j}p_k\,S_{jk}w_k,
        \label{eq:population-bias-app}
    \end{equation}
    hence $\|B(w;p)\|=O(\|p\|)$.
\end{corollary}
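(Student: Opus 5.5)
The plan is to take the sample-conditional identity of Proposition~\ref{prop:sample-bias-app} and average it over $(X,Y)$. Since the mask is independent of $(X,Y)$ under hMCAR, the tower property gives
\[
B_j(w;p)=\EE\big[\EE_M[\hat g_j(w;\tilde X,Y)\mid X,Y]-g_j(w;X,Y)\big].
\]
I will work from form~\eqref{eq:sample-bias-form2-app}, because it already isolates the complete-data gradient.

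\textbf{Averaging the two terms.} Because $p$ is deterministic, the expectation passes through the coefficients.

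\emph{Diagonal term.} By linearity and $\EE[g(w;X,Y)]=\nabla L(w)$, which uses the assumed interchange of differentiation and expectation, the term $-p_j g_j(w;X,Y)$ averages to $-p_j\nabla_jL(w)$.

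\emph{Cross term.} Here $\EE[X_jX_k]=S_{jk}$, so $-(1-p_j)\sum_{k\ne j}p_kX_jX_kw_k$ averages to $-(1-p_j)\sum_{k\ne j}p_kS_{jk}w_k$.

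Summing these two gives exactly~\eqref{eq:population-bias-app}. As a sanity check, the same result follows from form~\eqref{eq:sample-bias-form1-app} by using $\EE[X_j^2]w_j+\sum_{k\ne j}S_{jk}w_k-\EE[YX_j]=(Sw-b)_j$.

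\textbf{Order in $p$.} Each component satisfies
\[
|B_j|\le p_j|\nabla_jL(w)|+\sum_{k\ne j}p_k|S_{jk}||w_k|,
\]
using $|1-p_j|\le 1$. Summing over $j$ and using equivalence of norms in fixed dimension gives $\|B(w;p)\|\le c_w\|p\|$, with $c_w$ depending on $\|\nabla L(w)\|$, $\|S\|$ and $\|w\|$.

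\textbf{Main obstacle.} The computation itself is routine. The only point needing care is integrability: we need $\EE[X_jX_k]$, $\EE[YX_j]$ and $\EE[X_j^2]$ finite, so that $S$ and $b$ exist. I would state this as the standing second-moment assumption implicit in defining $S$ and $b$. The same assumption justifies applying Fubini to move from the conditional identity to the population identity.
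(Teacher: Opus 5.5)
Your proposal is correct and follows the paper's own proof: you take expectations in \eqref{eq:sample-bias-form2-app}, use $\EE[g_j(w;X,Y)]=\nabla_jL(w)$ and $\EE[X_jX_k]=S_{jk}$, and bound $|B_j|$ using $|1-p_j|\le 1$. The paper bounds this by $\|p\|_\infty$ times a constant, which is the same $O(\|p\|)$ argument as yours.
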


\begin{proof}
    Take expectation in~\eqref{eq:sample-bias-form2-app} and use $\EE[g_j(w;X,Y)]=\nabla_jL(w)$ and $\EE[X_jX_k]=S_{jk}$ to obtain~\eqref{eq:population-bias-app}. The norm bound follows from $|B_j(w;p)|\le p_j|\nabla_jL(w)|+\sum_{k\ne j}p_k|S_{jk}w_k|\le \|p\|_\infty\big(|\nabla_jL(w)|+\sum_k|S_{jk}w_k|\big)$.
\end{proof}

\subsection{Polynomial structure and exact debiasing in two Richardson steps}

We now make explicit that the population gradient bias is a polynomial of degree at most $2$ in $p$, hence is annihilated exactly by second-order Richardson with two factors.

\paragraph{The bias is degree-$2$ in $p$.}
Under heterogeneous MCAR with independent masks, from~\eqref{eq:population-bias-app},
\begin{equation}
    B_j(w;p)
    \;=\;\underbrace{\Big(-p_j\,\nabla_jL(w)-\sum_{k\ne j}p_k\,S_{jk}w_k\Big)}_{=:L_j(w;p),\ \text{linear in }p}
    \;+\;\underbrace{p_j\sum_{k\ne j}p_k\,S_{jk}w_k}_{=:Q_j(w;p),\ \text{quadratic in }p}.
    \label{eq:linreg-bias-decomp}
\end{equation}
The same conclusion follows from the general expansion~\eqref{eq:bias-poly} of Section~\ref{sec:bias-structure}, since for the squared loss the iterated finite differences $\Delta_S G_\varnothing$ vanish identically for $|S|\ge 3$ (the gradient is bilinear in $X$).

\paragraph{One Richardson step removes the linear part.}
\begin{proposition}[First-order Richardson cancellation, heterogeneous squared-loss MCAR]
    \label{prop:heterogeneous-richardson-app}
    Under heterogeneous MCAR with independent masks, for $C>1$ with $Cp_j<1$ for every $j$, the first-order Richardson gradient~\eqref{eq:richardson-grad} satisfies
    \[
        \EE\left[\hat g_j^{\mathrm{R}}(w)\right]-\nabla_jL(w)
        \;=\;-C\,p_j\sum_{k\ne j}p_k\,S_{jk}w_k\;=\;-C\,Q_j(w;p).
    \]
    In particular, $\big\|\EE[\hat g^{\mathrm{R}}(w)]-\nabla L(w)\big\|=O(\|p\|^2)$, while the uncorrected bias is $O(\|p\|)$. If $p_j=0$ for some $j$, then $\EE[\hat g_j^{\mathrm{R}}(w)]=\nabla_jL(w)$, i.e.\ the Richardson bias vanishes in coordinate $j$.
\end{proposition}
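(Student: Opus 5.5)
The plan is to compute the expectation of the Richardson gradient coordinatewise, using the population bias formula of Corollary~\ref{cor:population-bias-app} at the two scales $p$ and $Cp$. By linearity of expectation, $\EE[\hat g_j^{\mathrm{R}}(w)]-\nabla_jL(w)=\big(C\,B_j^{(p)}-B_j^{(Cp)}\big)/(C-1)$, where $B_j^{(p)}$ and $B_j^{(Cp)}$ denote the biases of the two gradients $\hat g^{(p)}$ and $\hat g^{(Cp)}$ appearing in \eqref{eq:richardson-grad}.

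\textbf{Step 1: marginal laws.} The thinning law of Appendix~\ref{app:thinning-law} (with $a_j\equiv1$) shows that $M^{(Cp)}$ is an independent heterogeneous MCAR mask with probabilities $Cp_j<1$. Under zero imputation the imputed vector is $\tilde X^{(Cp)}=(1-M^{(Cp)})\odot X$. Restoring the entries with $M^{(p)}_j=0$ gives exactly $\tilde X^{(p)}=(1-M^{(p)})\odot X$, because zero imputation is coordinatewise. Hence each of $\hat g^{(p)}$ and $\hat g^{(Cp)}$ marginally equals the zero-imputed gradient under an independent MCAR mask at scale $p$ and $Cp$ respectively. The coupling between the two masks is irrelevant, since only expectations enter.

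\textbf{Step 2: apply the decomposition.} Write the bias as in \eqref{eq:linreg-bias-decomp}, $B_j(w;p)=L_j(w;p)+Q_j(w;p)$. Here $L_j(w;\cdot)$ is linear, so $L_j(w;Cp)=C\,L_j(w;p)$, and $Q_j(w;\cdot)$ is homogeneous of degree $2$, so $Q_j(w;Cp)=C^2Q_j(w;p)$. Then
\[
C B_j(w;p)-B_j(w;Cp)=(C-C^2)\,Q_j(w;p).
\]
Dividing by $C-1$ gives $-C\,Q_j(w;p)=-Cp_j\sum_{k\ne j}p_kS_{jk}w_k$, which is the claimed identity.

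\textbf{Step 3: consequences.} The bound $|Q_j|\le \|p\|_\infty^2\sum_k|S_{jk}w_k|$ gives the $O(\|p\|^2)$ rate. The uncorrected $O(\|p\|)$ rate is Corollary~\ref{cor:population-bias-app}. The vanishing of the bias when $p_j=0$ is immediate from the factor $p_j$ in $Q_j$.

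The main subtlety is Step 1: one must justify that the restored vector $\tilde X^{(p)}$ has the law of a genuine zero-imputation at scale $p$. This holds because zero imputation of coordinate $j$ does not depend on the other coordinates. The remaining steps are routine homogeneity bookkeeping.
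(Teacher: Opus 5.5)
Your proof is correct and follows the paper's argument. The paper likewise writes the Richardson bias as $(C\,B_j(w;p)-B_j(w;Cp))/(C-1)$, splits $B_j=L_j+Q_j$ as in \eqref{eq:linreg-bias-decomp}, and uses $L_j(w;Cp)=C\,L_j(w;p)$ and $Q_j(w;Cp)=C^2Q_j(w;p)$ to obtain $-C\,Q_j(w;p)$; your Step~1 (the thinned mask is independent hMCAR at scale $Cp$, and restoring the hidden entries under zero imputation gives exactly $(1-M^{(p)})\odot X$) makes explicit a justification the paper leaves implicit.
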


\begin{proof}
    Apply $\EE[\hat g_j^{\mathrm{R}}]-\nabla_jL=(C\,B_j(w;p)-B_j(w;Cp))/(C-1)$ to~\eqref{eq:linreg-bias-decomp}. Using \begin{align*}L_j(w;Cp)=C\,L_j(w;p) \quad \textrm{and} \quad  Q_j(w;Cp)=C^2\,Q_j(w;p),
    \end{align*}
    we have
    \begin{align*}
        \frac{C\,B_j(w;p)-B_j(w;Cp)}{C-1}
         & \;=\;\frac{C\,L_j(w;p)-C\,L_j(w;p)}{C-1}+\frac{C\,Q_j(w;p)-C^2Q_j(w;p)}{C-1} \\
         &
        \;=\;-C\,Q_j(w;p),
    \end{align*}
    which gives the claim.
\end{proof}

\paragraph{Two Richardson steps cancel the bias exactly.}
\begin{corollary}[Exact Richardson debiasing for linear regression]
    \label{cor:exact-linreg-app}
    Suppose heterogeneous MCAR with independent masks. Let $1=C_0<C_1<C_2$ be three expansion factors with $C_2p_j<1$ for every $j$. The unique transposed Vandermonde solution
    \[
        (\alpha_0,\alpha_1,\alpha_2)\quad\text{of}\quad
        \begin{cases}
            \alpha_0+\alpha_1+\alpha_2=1,       \\
            \alpha_0+\alpha_1C_1+\alpha_2C_2=0, \\
            \alpha_0+\alpha_1C_1^2+\alpha_2C_2^2=0,
        \end{cases}
    \]
    gives a second-order Richardson gradient $\hat g^{[2]}=\sum_\ell\alpha_\ell\hat g^{(C_\ell p)}$ with $\EE[\hat g^{[2]}(w)]=\nabla L(w)$ exactly. The same statement holds in independent sMAR (conditional on $V$), after replacing $p_j$ by $p_ja_j(V)$ inside the expectations defining $L_j$ and $Q_j$.
\end{corollary}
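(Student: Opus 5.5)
The plan is to write the bias of linear regression as an explicit polynomial of degree at most $2$ in $p$, scale it, and check that the Vandermonde weights kill each degree separately.

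\textbf{Step 1 (bias at every scale).} By the thinning construction of Appendix~\ref{app:thinning-law}, each estimator $\hat g^{(C_\ell p)}$ is the zero-imputed gradient under an independent heterogeneous MCAR mask with marginals $C_\ell p$. This uses $C_\ell p_j\le C_2p_j<1$, and for the cascaded thinning the factors $C\gets C_\ell/C_{\ell-1}$. Linked imputation is harmless here: zero imputation of a coordinate does not depend on the other coordinates, so the restored vector at scale $C_{\ell}p$ is exactly the zero-imputed vector under $M^{(C_\ell p)}$. Corollary~\ref{cor:population-bias-app} and the split \eqref{eq:linreg-bias-decomp} then give, for each $\ell$,
\[
B_j(w;C_\ell p)=C_\ell\,L_j(w;p)+C_\ell^2\,Q_j(w;p).
\]
This holds because $L_j(w;\cdot)$ is linear and $Q_j(w;\cdot)$ is homogeneous of degree $2$.

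\textbf{Step 2 (combine with the weights).} Since $\sum_\ell\alpha_\ell=1$, we have $\EE[\hat g^{[2]}_j(w)]-\nabla_jL(w)=\sum_\ell\alpha_\ell B_j(w;C_\ell p)$. This equals
\[
\Big(\sum_\ell\alpha_\ell C_\ell\Big)L_j(w;p)+\Big(\sum_\ell\alpha_\ell C_\ell^2\Big)Q_j(w;p),
\]
and both coefficients vanish by the second and third equations of the system. A solution exists and is unique because the matrix $(C_\ell^m)_{\ell,m=0}^2$ is Vandermonde with distinct nodes $1<C_1<C_2$. Alternatively, this is just Corollary~\ref{cor:k-order} with $k=2$, together with the observation that $\mu_S=0$ for $|S|\ge3$.

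\textbf{Step 3 (independent sMAR).} Here I would redo the conditional computation of Proposition~\ref{prop:sample-bias-app}, conditioning on $(X,Y)$. The key fact is that the mask is conditionally independent of $(X^{(\Sobs^c)},Y)$ given $V$, so $\EE[\omega_j\mid X,Y]=1-p_ja_j(V)$ and $\EE[\omega_j\omega_k\mid X,Y]=(1-p_ja_j(V))(1-p_ka_k(V))$. The sample-conditional bias is therefore a polynomial of degree at most $2$ in the scalars $p_j$, with random coefficients of the form $a_j(V)X_jX_kw_k$, $a_j(V)YX_j$, etc. Scaling $p\mapsto C_\ell p$ multiplies the degree-$1$ and degree-$2$ parts by $C_\ell$ and $C_\ell^2$ pointwise. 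Taking expectations then gives the same structure as in Step 1, now with $L_j$ and $Q_j$ defined through expectations involving $a_j(V)$. Step 2 then applies verbatim.

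\textbf{Main obstacle.} The delicate point is justifying that the cascaded thinned masks really have the scaled independent law given $V$, including the requirement $C_\ell p_ja_j(V)\le1$. A second concern is that integrability of $a_j(V)X_jX_k$ holds. I would take both as given from Appendix~\ref{app:thinning-law} together with finite second moments of $(X,Y)$ and boundedness of $a_j$.
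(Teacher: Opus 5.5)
Your proposal is correct and follows essentially the same route as the paper: write the zero-imputed bias at scale $C_\ell p$ as $C_\ell\,L_j(w;p)+C_\ell^2\,Q_j(w;p)$, then let the Vandermonde conditions $\sum_\ell\alpha_\ell C_\ell=\sum_\ell\alpha_\ell C_\ell^2=0$ annihilate both parts. Your extra checks fill in details the paper's two-line proof leaves implicit: that linked zero imputation coincides with plain zero imputation at each scale, that the cascaded thinned masks have the scaled independent law, and that sMAR actually requires $C_\ell p_j a_j(V)\le 1$ rather than only $C_2p_j<1$.
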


\begin{proof}
    The bias~\eqref{eq:linreg-bias-decomp} is a polynomial of degree $\le 2$ in $p$ with no constant term, so $\Bbias(w,Cp)=C\,L(w;p)+C^2\,Q(w;p)$ for any $C>0$. Applying $\sum_\ell\alpha_\ell\Bbias(w,C_\ell p)$ and using the Vandermonde conditions, both $L$ and $Q$ contributions vanish.
\end{proof}

This recovers, in our framework, the closed-form debiasing of \citet{sportisse2020debiasing} for linear regression with squared loss under independent MCAR, and extends it to the sMAR mechanisms of Section~\ref{sec:setting}, where no closed-form bias is available.

\section{One-pass biased-SGD convergence consequences}
\label{app:onepass}
We provide a simple proof for the non-averaged iterates of SGD with biased gradients. The rates stated in the main text are then recovered by applying this result with the bias corresponding to each method. Note that one could also aim for similar guarantees for averaged SGD with a broader class of step sizes, namely $\eta_t \propto t^{-a}$ with $a \in (1/2,1)$, which notably does not require prior knowledge of the strong convexity constant of the loss \citep{polyak1992acceleration,bach2014adaptivity}. We do not pursue this direction here, as our main focus is bias reduction, and the theorem below already illustrates its practical benefit.

\begin{proposition}[One-pass SGD with Bias]
    \label{prop:general-one-pass}
    Assume $L$ is $\alpha$-strongly convex and $\beta$-smooth, and let
    $w^\star$ be its unique minimizer. Run one-pass SGD over $n$ i.i.d. samples,
    \[
        w_{i+1}=w_i-\eta_i\hat g_i(w_i),
    \]
    with step sizes
    \[
        \eta_i=\frac{c}{i+\gamma},
        \qquad
        \alpha c>1,
    \]
    where $\gamma$ is large enough that
    \[
        \eta_i\le \frac{\alpha}{6\beta^2}
        \qquad \text{for all } i .
    \]
    Suppose that there exists $B(p)$ such that the imputed gradient satisfies, uniformly along the trajectory,
    \[
        \big\|\mathbb E[\hat g_i(w_i)\mid w_i]-\nabla L(w_i)\big\|
        \le B(p),
    \]
    and
    \[
        \mathbb E\!\left[
            \big\|
            \hat g_i(w_i)-\mathbb E[\hat g_i(w_i)\mid w_i]
            \big\|^2
            \,\middle|\, w_i
            \right]
        \le \sigma^2
        \qquad \text{a.s.}
    \]
    Then
    \[
        \mathbb E\|w_n-w^\star\|^2
        =
        O\!\left(B(p)^2\right)+O\!\left(\frac{\sigma^2}{n}\right).
    \]
    In particular, if $\sigma^2=O(1)$, then
    \begin{equation}\label{eq:generic-onepass}
        \mathbb E\|w_n-w^\star\|^2
        =
        O\!\left(B(p)^2\right)+O(1/n).
    \end{equation}
\end{proposition}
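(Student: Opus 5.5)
I will use the standard one-step recursion for the squared distance $\delta_i:=\EE\|w_i-w^\star\|^2$, splitting the stochastic gradient into its true gradient, a bias term and a noise term. Write
\[
\hat g_i(w_i)=\nabla L(w_i)+b_i+\zeta_i,\qquad b_i:=\EE[\hat g_i(w_i)\mid w_i]-\nabla L(w_i),\quad \zeta_i:=\hat g_i(w_i)-\EE[\hat g_i(w_i)\mid w_i].
\]
Here $\|b_i\|\le B(p)$ and $\EE[\|\zeta_i\|^2\mid w_i]\le\sigma^2$. Because the pass is one-epoch and the samples are i.i.d., sample $i$ is independent of $w_i$. Hence $\EE[\zeta_i\mid w_i]=0$, and the cross term between $\zeta_i$ and $w_i-w^\star$ vanishes. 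Expanding $\|w_{i+1}-w^\star\|^2$ gives
\[
\|w_{i+1}-w^\star\|^2=\|w_i-w^\star\|^2-2\eta_i\langle \nabla L(w_i)+b_i+\zeta_i,w_i-w^\star\rangle+\eta_i^2\|\nabla L(w_i)+b_i+\zeta_i\|^2 .
\]

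I bound the three pieces as follows.

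The inner-product term uses strong convexity: $\langle\nabla L(w_i),w_i-w^\star\rangle\ge\alpha\|w_i-w^\star\|^2$.

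The bias cross term uses Young's inequality:
\[
2|\langle b_i,w_i-w^\star\rangle|\le \tfrac{\alpha}{2}\|w_i-w^\star\|^2+\tfrac{2}{\alpha}B(p)^2 .
\]

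For the quadratic term, take conditional expectation and use $\|a+b+\zeta\|^2\le 3\|a\|^2+3\|b\|^2+3\|\zeta\|^2$ (or the sharper form, since $\zeta$ has conditional mean zero). Smoothness gives $\|\nabla L(w_i)\|^2\le\beta^2\|w_i-w^\star\|^2$. The quadratic term is then at most $3\eta_i^2\beta^2\|w_i-w^\star\|^2+3\eta_i^2B(p)^2+\eta_i^2\sigma^2$.

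The step-size condition $\eta_i\le\alpha/(6\beta^2)$ gives $3\eta_i^2\beta^2\le\eta_i\alpha/2$. Collecting terms, I expect a recursion of the form
\[
\delta_{i+1}\le(1-\alpha\eta_i)\,\delta_i+\eta_i\,\tfrac{2}{\alpha}B(p)^2+\eta_i^2\big(\sigma^2+3B(p)^2\big).
\]
This is the central intermediate bound. The constants may need slight retuning, for instance splitting $\alpha\eta_i$ as $\tfrac{\alpha}{2}+\tfrac{\alpha}{2}-\ldots$, but any contraction factor $1-\kappa\alpha\eta_i$ with $\kappa c\alpha>1$ suffices. If the factor $1/2$ lost in Young's inequality conflicts with the assumption $\alpha c>1$, I will split the bias more carefully: with weight $\epsilon\alpha$ and $\epsilon$ small, the contraction factor is $1-(1-\epsilon)\alpha\eta_i$, and $(1-\epsilon)\alpha c>1$ for small $\epsilon$.

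I then unroll the recursion by linearity, treating the bias forcing and the noise forcing separately.

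The bias part is a recursion $u_{i+1}\le(1-\alpha\eta_i)u_i+\eta_i K$ with $K=2B(p)^2/\alpha$. By induction, $u_i\le\max(u_0,K/\alpha)$ for all $i$, so it contributes $O(B(p)^2)$ to $\delta_n$.

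The noise part is the classical Chung-type recursion $v_{i+1}\le(1-\alpha\eta_i)v_i+\eta_i^2\sigma'^2$ with $\eta_i=c/(i+\gamma)$ and $\alpha c>1$. Here I prove by induction that $v_i\le\nu/(i+\gamma)$ with
\[
\nu=\max\Big(\gamma v_0,\ \frac{c^2\sigma'^2}{\alpha c-1}\Big).
\]
The inductive step is the standard check
\[
\Big(1-\tfrac{\alpha c}{i+\gamma}\Big)\frac{\nu}{i+\gamma}+\frac{c^2\sigma'^2}{(i+\gamma)^2}\le\frac{\nu}{i+\gamma+1},
\]
which holds precisely because $\alpha c>1$. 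The initialization term $\gamma v_0/(n+\gamma)$ is $O(1/n)$. Since $\sigma'^2=\sigma^2+3B(p)^2$, this part gives $O(\sigma^2/n)+O(B(p)^2/n)$, and the second piece is absorbed into $O(B(p)^2)$.

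Summing the two parts yields $\EE\|w_n-w^\star\|^2=O(B(p)^2)+O(\sigma^2/n)$, which is the claim, and \eqref{eq:generic-onepass} follows when $\sigma^2=O(1)$.

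The main obstacle is bookkeeping the constants. The Young split of the bias must not destroy the contraction rate required by $\alpha c>1$, and the initial error must be absorbed; the $\epsilon$-split above is how I would handle both. The second point needing care is the conditioning argument, which legitimately requires the one-pass independence between the fresh sample, its mask and thinning bits, and the current iterate $w_i$.
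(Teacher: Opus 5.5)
Your proof is correct and follows essentially the same route as the paper. You use the same split of $\hat g_i(w_i)$ into gradient, bias and mean-zero noise, and the same strong-convexity, Young (weights $\alpha/2$ and $2/\alpha$) and smoothness bounds; these already give the contraction factor exactly $1-\alpha\eta_i$, so the $\epsilon$-split you hold in reserve is not needed. Your superposition into a bias recursion bounded by its fixed point plus a Chung-type noise recursion is equivalent to the paper's shift $\Lambda_i=\delta_i-KB(p)^2$ followed by the standard $1/n$ recursion bound. Just state explicitly that you take $u_0=0$ and $v_0=\delta_0$, so that the bias part is genuinely $O(B(p)^2)$.
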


\begin{proof}
    Write
    \[
        \mathbb E[\hat g_i(w_i)\mid w_i]
        =
        \nabla L(w_i)+\Delta_i,
        \qquad
        \|\Delta_i\|\le B(p).
    \]
    Also write
    \[
        \hat g_i(w_i)
        =
        \nabla L(w_i)+\Delta_i+\xi_i,
        \qquad
        \mathbb E[\xi_i\mid w_i]=0,
        \qquad
        \mathbb E[\|\xi_i\|^2\mid w_i]\le \sigma^2 .
    \]
    Let
    \[
        \delta_i := \mathbb E\|w_i-w^\star\|^2 .
    \]
    Since $w^\star$ minimizes $L$, $\nabla L(w^\star)=0$. Expanding one SGD step
    gives
    \[
        \begin{aligned}
            \mathbb E[\|w_{i+1}-w^\star\|^2\mid w_i]
             & =
            \|w_i-w^\star\|^2
            -2\eta_i\langle w_i-w^\star,\nabla L(w_i)\rangle \\
             & \quad
            -2\eta_i\langle w_i-w^\star,\Delta_i\rangle
            +\eta_i^2
            \mathbb E[
                            \|\nabla L(w_i)+\Delta_i+\xi_i\|^2
                            \mid w_i
                        ] .
        \end{aligned}
    \]
    By strong convexity,
    \[
        \langle w_i-w^\star,\nabla L(w_i)\rangle
        \ge
        \alpha\|w_i-w^\star\|^2 .
    \]
    By Young's inequality,
    \[
        2|\langle w_i-w^\star,\Delta_i\rangle|
        \le
        \frac{\alpha}{2}\|w_i-w^\star\|^2
        +
        \frac{2}{\alpha}B(p)^2 .
    \]
    By smoothness and $\nabla L(w^\star)=0$,
    \[
        \|\nabla L(w_i)\|
        \le
        \beta\|w_i-w^\star\|.
    \]
    Moreover, using $\mathbb E[\xi_i\mid w_i]=0$,
    \[
        \begin{aligned}
            \mathbb E[
                            \|\nabla L(w_i)+\Delta_i+\xi_i\|^2
                            \mid w_i
                        ]
             & =
            \|\nabla L(w_i)+\Delta_i\|^2
            +
            \mathbb E[\|\xi_i\|^2\mid w_i] \\
             & \le
            2\beta^2\|w_i-w^\star\|^2
            +
            2B(p)^2
            +
            \sigma^2 .
        \end{aligned}
    \]
    Combining these bounds yields
    \[
        \begin{aligned}
            \mathbb E[\|w_{i+1}-w^\star\|^2\mid w_i]
             & \le
            \left(
            1-\frac{3\alpha\eta_i}{2}
            +2\beta^2\eta_i^2
            \right)
            \|w_i-w^\star\|^2 \\
             & \quad
            +
            \frac{2\eta_i}{\alpha}B(p)^2
            +
            \eta_i^2\{2B(p)^2+\sigma^2\}.
        \end{aligned}
    \]
    Since $\eta_i\le \alpha/(6\beta^2)$, we have
    \[
        2\beta^2\eta_i^2
        \le
        \frac{\alpha\eta_i}{3},
    \]
    and hence
    \[
        1-\frac{3\alpha\eta_i}{2}+2\beta^2\eta_i^2
        \le
        1-\alpha\eta_i .
    \]
    Taking expectations and absorbing constants gives
    \[
        \delta_{i+1}
        \le
        (1-\alpha\eta_i)\delta_i
        +
        C\eta_i B(p)^2
        +
        C\eta_i^2\sigma^2 ,
    \]
    where $C>0$ depends only on $\alpha,\beta$ and the step-size constants.

    It remains to solve this recursion. Define
    \[
        \Lambda_i := \delta_i - K B(p)^2,
    \]
    where $K>0$ is chosen large enough such that, for all $i$,
    \[
        (1-\alpha\eta_i)K B(p)^2 + C\eta_i B(p)^2
        \le
        K B(p)^2.
    \]
    Equivalently, it is enough to take $K\ge C/\alpha$. Then
    \[
        \Lambda_{i+1}
        \le
        (1-\alpha\eta_i)\Lambda_i
        +
        C\eta_i^2\sigma^2 .
    \]
    With $\eta_i=c/(i+\gamma)$ and $\alpha c>1$, the standard
    recursion bound gives
    \[
        \Lambda_n
        =
        O\!\left(n^{-\alpha c}\right)
        +
        O\!\left(\frac{\sigma^2}{n}\right).
    \]
    Because $\alpha c>1$, the initialization term is $O(1/n)$. Therefore,
    \[
        \delta_n
        =
        \mathbb E\|w_n-w^\star\|^2
        =
        O\!\left(B(p)^2\right)
        +
        O\!\left(\frac{\sigma^2}{n}\right).
    \]
    If $\sigma^2=O(1)$, this becomes
    \[
        \mathbb E\|w_n-w^\star\|^2
        =
        O\!\left(B(p)^2\right)+O(1/n).
    \]
    The $O(B(p)^2)$ term is the limiting neighborhood induced by the systematic
    gradient bias.
\end{proof}

\paragraph{Plug-in for plain imputed SGD.}
Under independent hMCAR/sMAR, Proposition~\ref{prop:grad_bias_fo_struct} gives $b(p)=O(\|p\|)$. The variance of $\hat g^{(p)}$ is bounded by a constant under the standing $L^2$ assumptions. Hence
\begin{equation*}
    \EE\|w_n-w^\star\|^2\;=\;O(\|p\|^2)+O(1/n).
\end{equation*}

\paragraph{Plug-in for Richardson-SGD.}
Under independent hMCAR/sMAR, Proposition~\ref{prop:first-order-debias} gives $b(p)=O(\|p\|^2)$. Using~\eqref{eq:variance-bound}, $\sigma^2$ remains $O(1)$ in the moderate-$C$ regime. Substituting in~\eqref{eq:generic-onepass} yields
\begin{equation*}
    \EE\|w_n-w^\star\|^2\;=\;O(\|p\|^4)+O(1/n).
\end{equation*}
The analogous statement for the test loss follows by smoothness of $L$ around $w^\star$. The $k$-step variant gives $b(p)=O(\|p\|^{k+1})$ and a missingness floor $O(\|p\|^{2(k+1)})$.

\paragraph{Plug-in version with estimated mechanism.}
Combining Proposition~\ref{prop:plug-in} and~\eqref{eq:generic-onepass}, the one-pass bound becomes
\[
    \EE\|w_n-w^\star\|^2
    \;=\;
    O\!\left(\|p\|^4+\delta_p^2+\|p\|_\infty^2\delta_q^2\right)+O(1/n).
\]
Whenever $\delta_p,\delta_q$ shrink at rate $o(\|p\|^2)$, the exact-mechanism rate is recovered.

\paragraph{Multi-epoch behavior.}
The above analysis only covers one pass, else the imputed gradients seen on different epochs are not independent. The empirical study of Section~\ref{sec:experiments} indicates that Richardson remains effective in multi-epoch training; a formal multi-epoch analysis is left to future work.


\section{Implementation details}
\label{app:implementation-details}

This appendix describes the experimental protocol used in Section~\ref{sec:experiments} and in Appendix \ref{app:additional-experiments} below. All experiments are run with stochastic gradient descent for $5$ epochs, minibatch size $64$, average missingness level $\bar p=0.20$, and first-order Richardson scale $C=2$. Unless stated otherwise, all reported curves are averaged over repeated runs with the same protocol across methods.

\paragraph{Models.}
We consider three generalized linear models: linear regression with Gaussian noise, logistic regression for binary classification, and Poisson regression for count responses. All models are trained with an $\ell_2$ penalty. The regularization parameter is fixed to $\lambda=10^{-3}$ for every model family and dataset.

\paragraph{Missingness mechanisms.}
We evaluate three missingness mechanisms. The first is homogeneous MCAR, denoted \texttt{mcar}, where each entry is missing independently with the same probability $p$. The second is heterogeneous MCAR, denoted \texttt{hetero\_mcar}, where missingness probabilities are generated from row and column multipliers and then calibrated to have average missingness $\bar p$. Concretely, the unnormalized missingness scores are sampled uniformly in $[0,1]$ across covariates and rescaled so that their empirical mean equals $0.20$.

The third mechanism is scalable MAR, denoted \texttt{smar}. In this case, the oracle missingness intensity is
\[
    \lambda_{j}=p_j Q(U_{j}),
    \qquad
    Q(u)=\sigma(1.6u-0.3),
    \qquad
    U = a_{1,j}X_1 +  b_{2,j}X_2, \quad a_{1,\cdot}, b_{2, \cdot} \overset{\text{i.i.d}}{\sim}  \  \mathcal{U}(0,1)\ .
\]
where $\sigma$ is the logistic sigmoid. The coordinate-specific constants $p_j$ are calibrated so that the average missingness is $\bar p=0.20$. This is the same scalable MAR mechanism as in Section~\ref{sec:setting}.

\paragraph{Methods compared.}
We compare the complete-data baseline, plain imputation-based SGD, and Richardson-corrected SGD. The complete-data baseline, denoted \texttt{No missing vals}, is trained on the clean unmasked training data. The plain imputation baselines are zero imputation, mean imputation, $k$-nearest-neighbor imputation, MICE, and MICE with random-forest base learners, denoted respectively by \texttt{Zero}, \texttt{Mean}, \texttt{KNN}, \texttt{MICE}, and \texttt{MICE+RF}. The corresponding Richardson variants are denoted \texttt{Rich.--Zero}, \texttt{Rich.--Mean}, \texttt{Rich.--KNN}, \texttt{Rich.--MICE}, and \texttt{Rich.--MICE+RF}. All imputers are taken with default parameters from \texttt{scikit-learn}. The experiments are repeated 30 times with different seeds, for the training of SGD methods, and averaged results, along their standard deviations, are displayed.

\paragraph{Metric.}
The main metric is the parameter mean-squared error
\[
    \mathrm{MSE}_{w}(t)
    =
    \frac{1}{d_w}
    \bigl\|\hat w_t-w^\star\bigr\|_2^2,
\]
where $d_w$ is the parameter dimension, $\hat w_t$ is the SGD iterate after epoch $t$, and $w^\star$ is the complete-data reference parameter described below. The metric is reported once per epoch for $5$ epochs. For real datasets, $w^\star$ denotes the minimizer of the complete-data ridge $10^{-3}$ penalized empirical, not a population ground truth (see the paragraph \textit{Reference parameter} below).

\paragraph{Learning-rate calibration.}
The optimization geometry varies substantially across model families and datasets. To avoid confounding imputation effects with poorly tuned learning rates, we calibrate the initial learning rate $\eta_0$ separately for each pair of model family and dataset.

For each pair, we first take the family-level default learning rate $\eta_0 = 10^{-2}$. We then evaluate the geometric grid
\[
    \eta_0
    \in
    \eta_0^{\mathrm{def}}
    \cdot
    \left\{
    \frac14,\frac12,1,2,4
    \right\}.
\]
For every candidate, we run SGD, without missing data, on the standardized training fold using the same number of epochs, minibatch size, and regularization parameter as in the missing-data experiments. We select the learning rate that minimizes the final iteration parameter MSE,
\[
    \frac{1}{d_w}
    \bigl\|\hat w_T-w^\star\bigr\|_2^2\ .
\]
This calibration is performed without missingness and without imputation. The selected learning rate is then fixed and reused for all imputation methods, Richardson variants, and missingness mechanisms for that model--dataset pair. Thus, comparisons between MCAR, heterogeneous MCAR, and sMAR within the same row use the same calibrated $\eta_0$.

\paragraph{Dataset budget and preprocessing.}
Each dataset uses $2{,}000$ training samples. Real datasets with fewer observations are bootstrapped to this size when needed. Test sets contain $1{,}000$ samples. Covariates are standardized columnwise on the training fold and the same transformation is applied to the test fold.

The response variable is rescaled depending on the model family. For linear regression on real datasets, the response is z-scored on the loaded sample. This keeps the scale of $w^\star$ comparable across synthetic and real datasets; in particular, using raw elevation in the Covertype regression task produces parameters much larger than those in the synthetic linear experiments. Synthetic linear responses are left unchanged, since the data-generating process already gives a comparable response scale.

For logistic regression, the response is binary and no rescaling is applied. For Poisson regression, real count responses are rescaled to have mean approximately $2$ and then rounded to integer counts, corresponding to a log-mean near $0.7$.

\paragraph{Reference parameter.}
For synthetic datasets, $w^\star$ is the parameter used in the data-generating process. The covariates are generated to be approximately centered and standardized, so the train-fold standardization is nearly idempotent and the generating parameter remains the appropriate reference.
For real datasets, there is no closed-form ground-truth parameter. We therefore compute $w^\star$ by L-BFGS-B optimization of the exact ridge penalized empirical loss on the complete, standardized training data, using the same ridge regularization parameter $\lambda=10^{-3}$ as in the SGD runs. This gives the complete-data regularized minimizer of the observed sample loss and serves as the reference parameter for the reported MSE.

\paragraph{Datasets.}
The datasets used in the experiments are listed in Table~\ref{tab:datasets-implementation}. Synthetic datasets are generated with Gaussian covariates. Real datasets are taken from standard \texttt{scikit-learn} or OpenML sources and transformed as indicated.
\begin{table}[h]
    \centering
    \caption{Datasets used in the experiments.}
    \small
    \label{tab:datasets-implementation}
    \begin{tabular}{llll}
        \toprule
        Family & \# & Dataset            & Source and preprocessing                                                    \\
        \midrule
        Linear
               & 1  & Synth-A            & Synthetic, $10$D iid Gaussian                                               \\
        Linear
               & 2  & Synth-B            & Synthetic, $15$D AR-style covariance $\Sigma_{jk}=0.9^{|j-k|}$              \\
        Linear
               & 3  & Diabetes           & Real, bootstrapped, z-scored response                                       \\
        Linear
               & 4  & Covertype-reg      & Real, z-scored elevation from $9$ continuous features                       \\
        \midrule
        Logistic
               & 1  & Synth-A            & Synthetic, $10$D iid Gaussian                                               \\
        Logistic
               & 2  & Breast cancer      & Real, bootstrapped to $2{,}000$ samples                                     \\
        Logistic
               & 3  & Covertype          & Real, class $1$ versus all, $10$ continuous features                        \\
        Logistic
               & 4  & California housing & Real; binary response: house price >  median price                    \\
        \midrule
        Poisson
               & 1  & Synth-A            & Synthetic, $10$D iid Gaussian                                               \\
        Poisson
               & 2  & Synth-B            & Synthetic, $8$D iid Gaussian                                                \\
        Poisson
               & 3  & Bike sharing       & Real, hourly rental count from numeric features                             \\
        \bottomrule
    \end{tabular}
\end{table}
For the Bike Sharing Demand dataset, we use the hourly rental count as the response and retain eight numeric features: year, month, hour, weekday, temperature, feeling temperature, humidity, and windspeed.

\section{Robustness to errors in the estimated missingness mechanism}
\label{app:exp-robust}

The previous experiments kept $p$ and $q$ known as oracles. In practice, however, these quantities need to be estimated, and Proposition~\ref{prop:plug-in} provides an upper bound of the error induced by such estimations.  In the following experiment, we test the impact of  estimating $p$ and $q$ on Richardson-SGD, on top of imputation by zero, for logistic regression in the hMCAR setting. We perturb the estimated mechanism $(\hat p,\hat q)$ by additive noise with magnitudes $\delta_p,\delta_q$ and report the parameter MSE as a function of $(\delta_p,\delta_q)$. Table~\ref{tab:delta_results} shows the robustness of Richardson to plug-in estimation. With a reference parameter MSE of $2.647 \times 10^{-2}$ for no missing data and $5.399 \times 10^{- 2}$ for plain imputation by zero, we see that even under high ratio mismatch, Richardson performs better than simple imputation. The only worse errors occur when $\delta_p = 0.3$ and $\delta_q \geq 0.2$, which we put in italics.

\begin{table}[H]
    \centering
    \caption{MSE across $\delta_q$ and $\delta_p$; all entries are $\times 10^{-2}$.}
    \label{tab:delta_results}
    \setlength{\tabcolsep}{5.5pt}
    \renewcommand{\arraystretch}{1.12}
    \begin{tabular}{c cccccc}
        \toprule
        $\delta_q \backslash \delta_p$
             & 0.00 & 0.05 & 0.10 & 0.15 & 0.20 & 0.30          \\
        \midrule
        0.00 & 3.27 & 3.53 & 3.75 & 3.97 & 4.24 & 4.83          \\
        0.05 & 3.60 & 3.83 & 4.14 & 4.40 & 4.67 & 5.25          \\
        0.10 & 3.70 & 3.89 & 4.13 & 4.40 & 4.57 & 5.05          \\
        0.15 & 3.96 & 4.17 & 4.37 & 4.66 & 4.89 & 5.31          \\
        0.20 & 4.34 & 4.51 & 4.73 & 4.93 & 5.12 & \textit{5.49} \\
        0.30 & 4.62 & 4.77 & 4.89 & 4.99 & 5.13 & \textit{5.40} \\
        \bottomrule
    \end{tabular}
\end{table}

\section{Additional GLM experiments}
\label{app:additional-experiments}

We provide additional comparisons of SGD with several imputation rules, with and without Richardson, across linear (Gaussian), logistic, and Poisson regression on synthetic and real datasets. Each figure shows the parameter-MSE trajectory under SGD with and without Richardson on top of several imputation schemes; the bias formulas of App.~\ref{app:glm_bias_formulas} predict the per-model behavior. We organize the figures by GLM family $\times$ missingness mechanism. The key empirical observations are as follows.

\textbf{Key empirical observations.}
\begin{itemize}
    \item Richardson is consistently effective on top of mean, MICE, MICE-RF, and $k$-NN imputation across the three GLMs and the three mechanisms.
    \item Although the theory only covers one-pass SGD, Richardson remains robust empirically in multi-epoch training.
    \item In several settings, the gains are most pronounced in the first epoch, in agreement with the one-pass theory of Section~\ref{subsec:one-pass}.
\end{itemize}

\newpage
\subsection{Linear (Gaussian) regression}

\begin{figure}[H]
    \centering
    \includegraphics[width=1\textwidth]{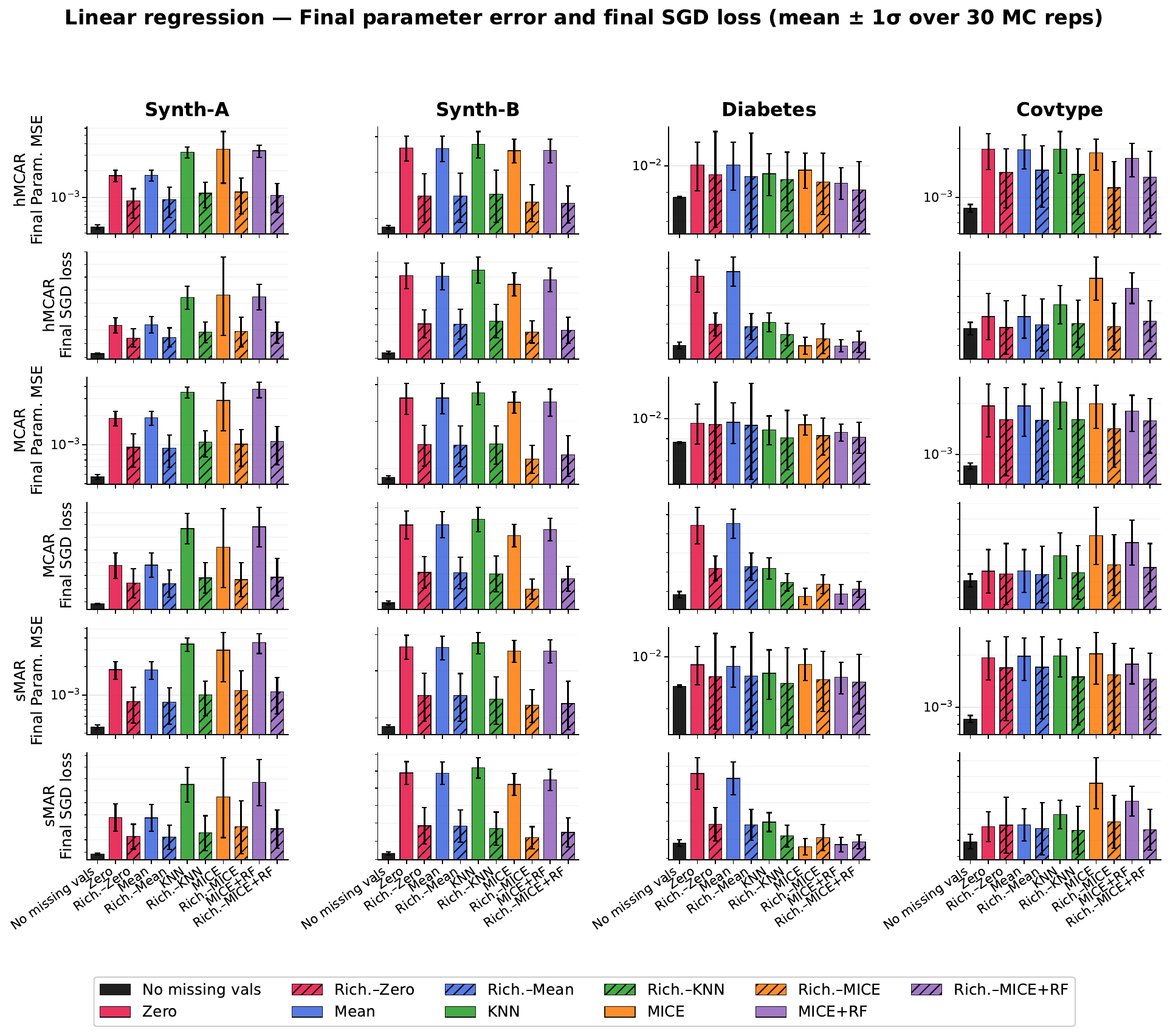}
    \caption{Final parameter MSE and test loss for linear regression, under MCAR, heterogeneous MCAR and sMAR mechanisms, on four different datasets.}
    \label{fig:glm-linear}
\end{figure}

\begin{figure}[H]
    \centering
    \includegraphics[width=1\textwidth]{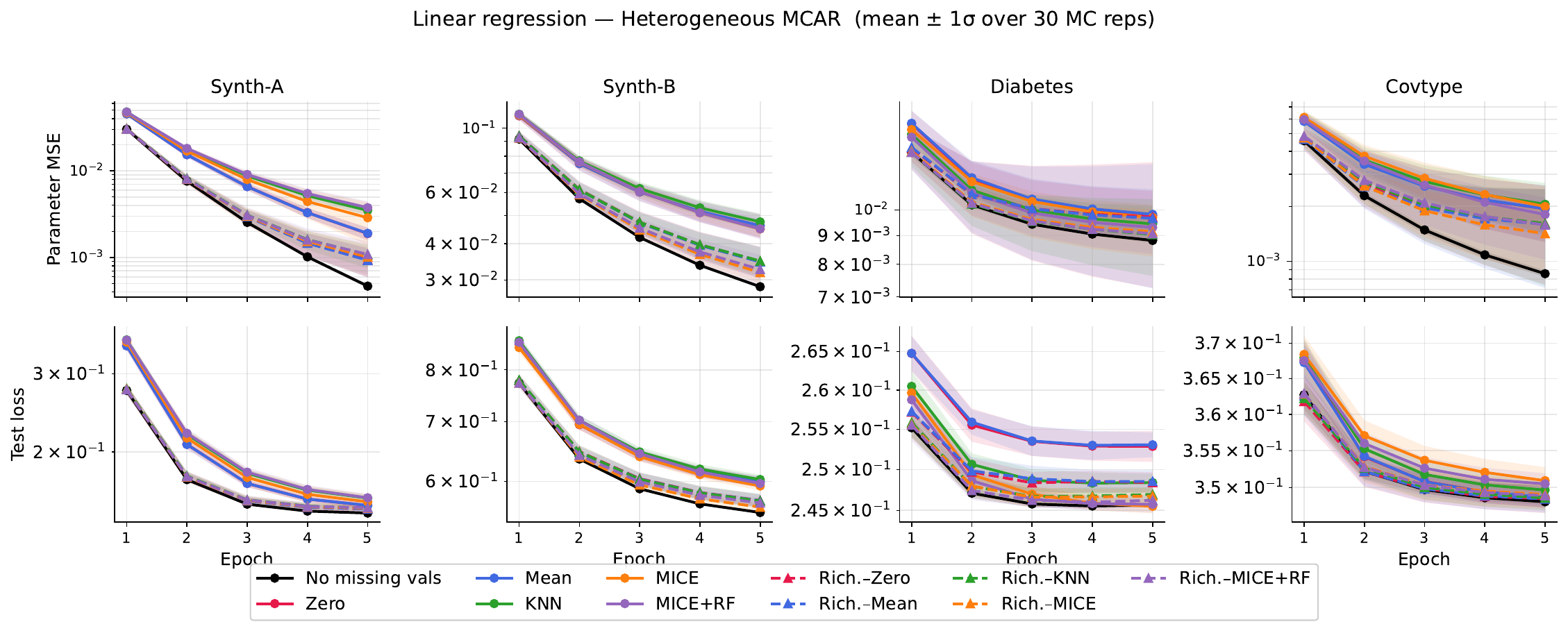}
    \caption{Convergence rate for parameter MSE and test loss for linear regression, under heterogeneous MCAR and sMAR mechanisms, on four different datasets.}
    \label{fig:cv-linear}
\end{figure}

\newpage
\subsection{Logistic regression}

\begin{figure}[H]
    \centering
    \includegraphics[width=1\textwidth]{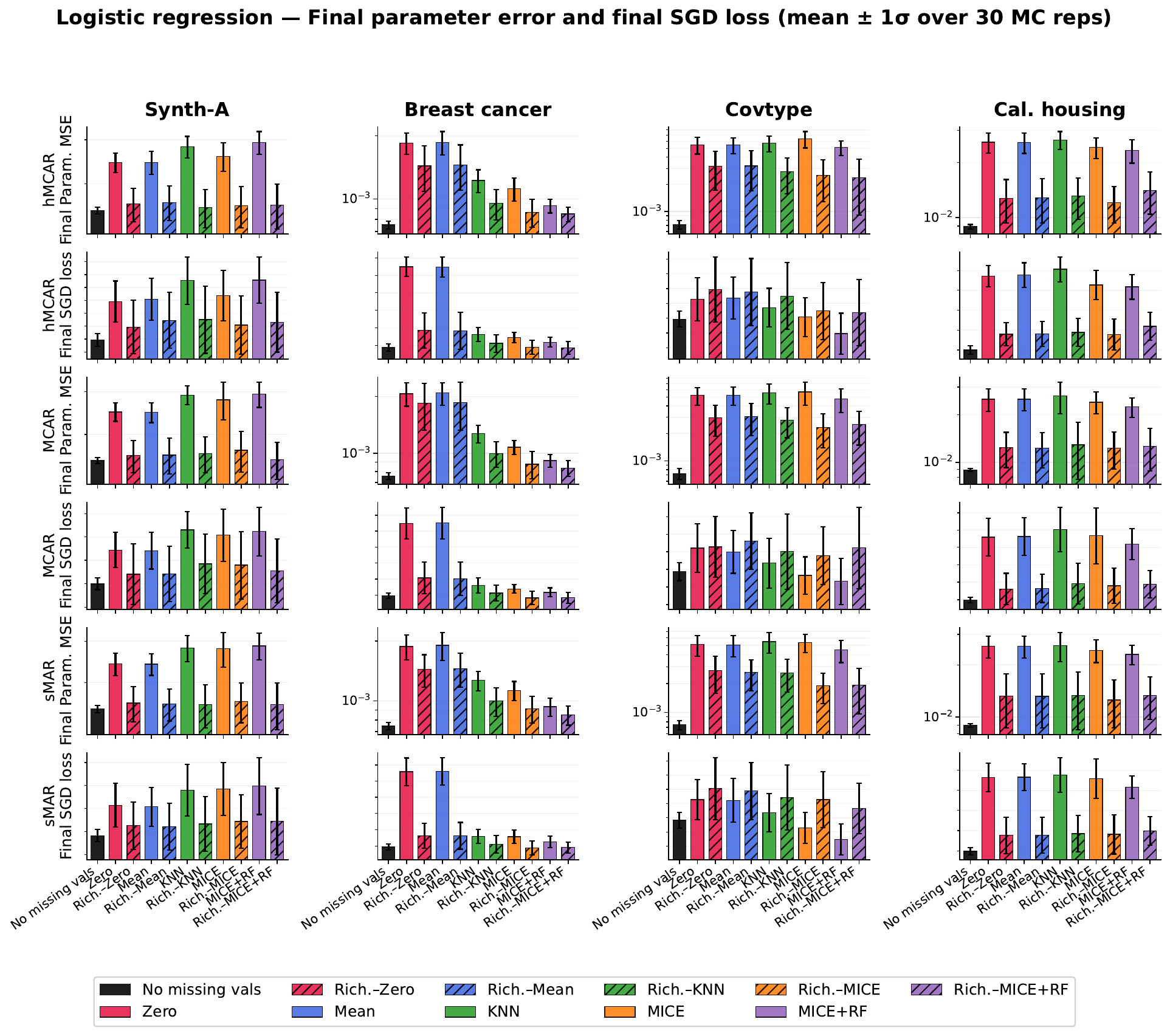}
    \caption{Final parameter MSE and test loss for logistic regression, under MCAR, heterogeneous MCAR and sMAR mechanisms, on four different datasets.}
    \label{fig:glm-logistic}
\end{figure}

\begin{figure}[H]
    \centering
    \includegraphics[width=1\textwidth]{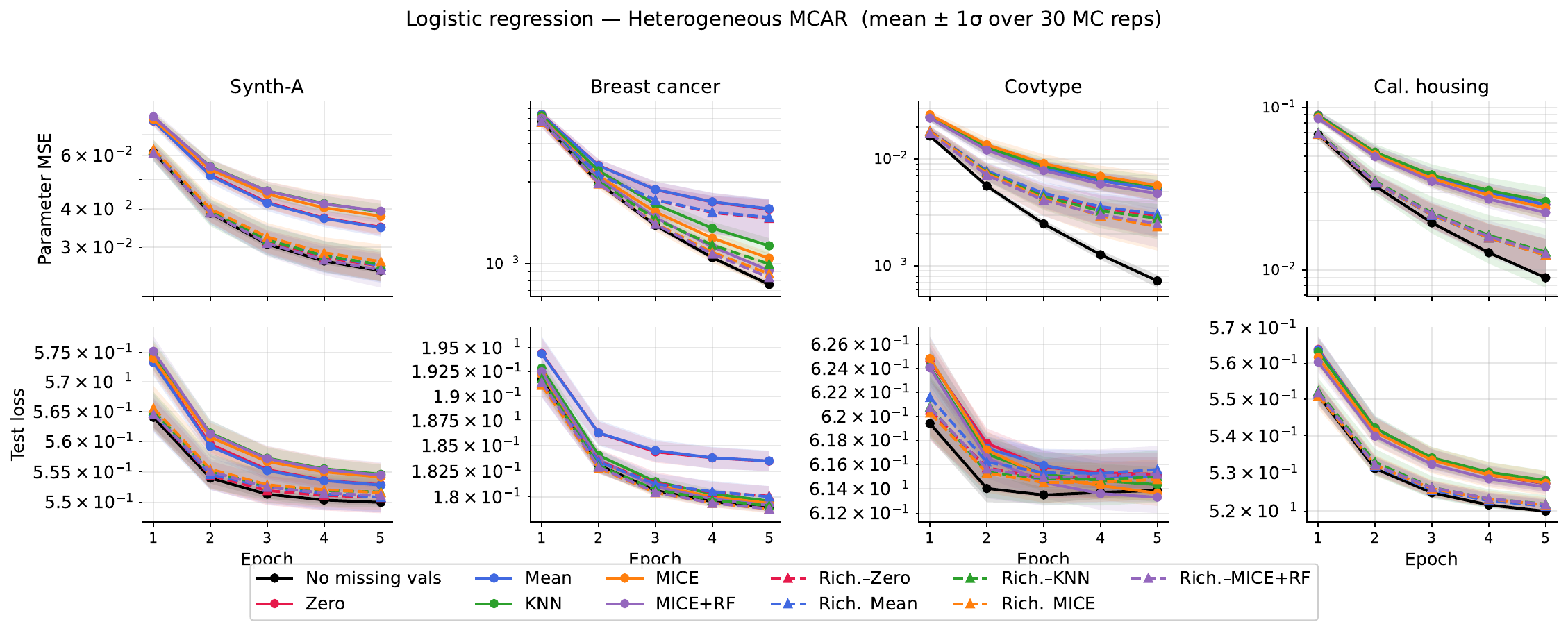}
    \caption{Convergence rate for parameter MSE and test loss for logistic regression, under heterogeneous MCAR and sMAR mechanisms, on four different datasets.}
    \label{fig:cv-log}
\end{figure}

\newpage
\subsection{Poisson regression}

\begin{figure}[H]
    \centering
    \includegraphics[width=0.9\textwidth]{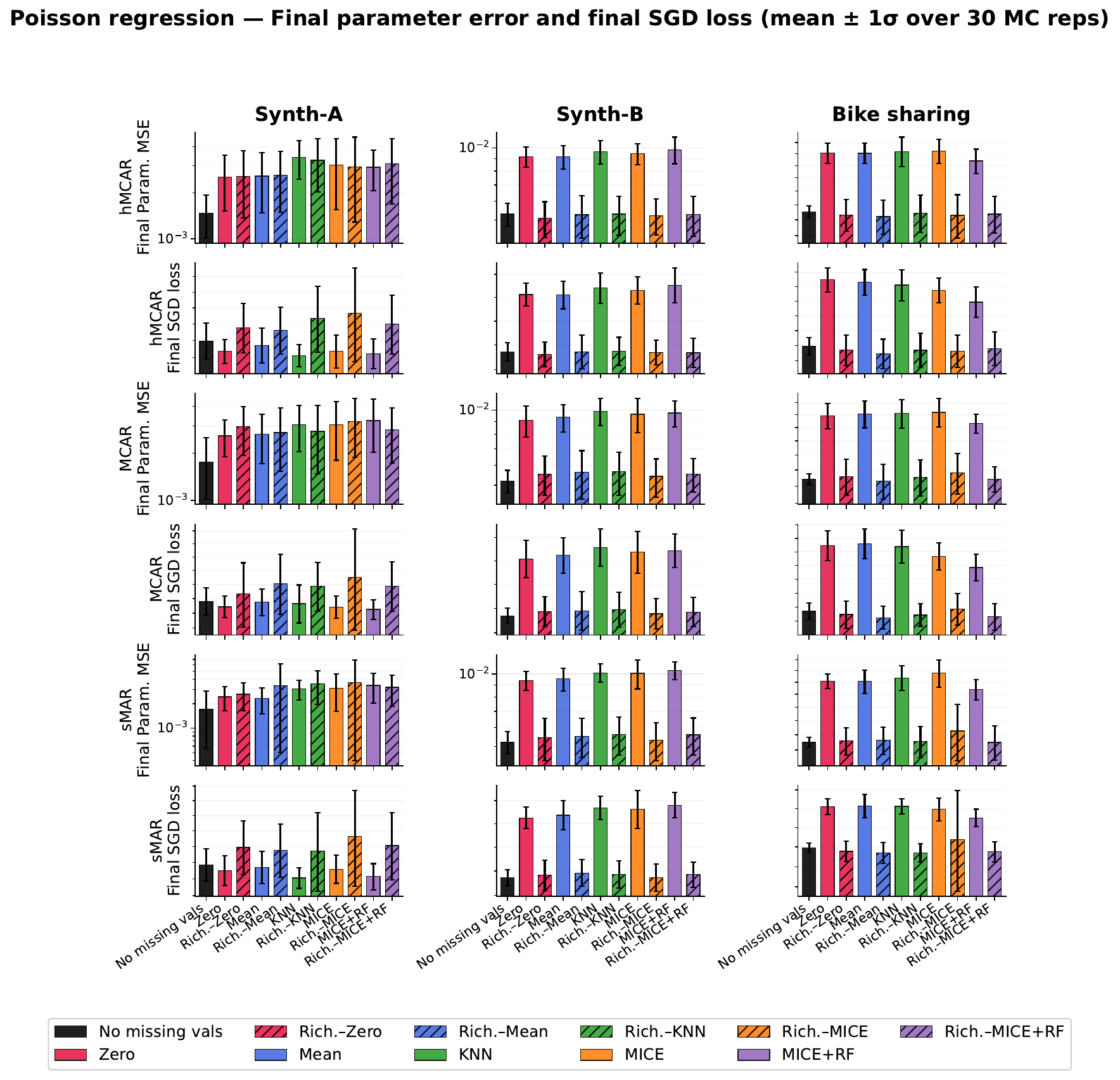}
    \caption{Final parameter MSE and test loss for Poisson regression, under MCAR, heterogeneous MCAR and sMAR mechanisms, on four different datasets.}
    \label{fig:glm-poisson}
\end{figure}

\begin{figure}[H]
    \centering
    \includegraphics[width=0.8\textwidth]{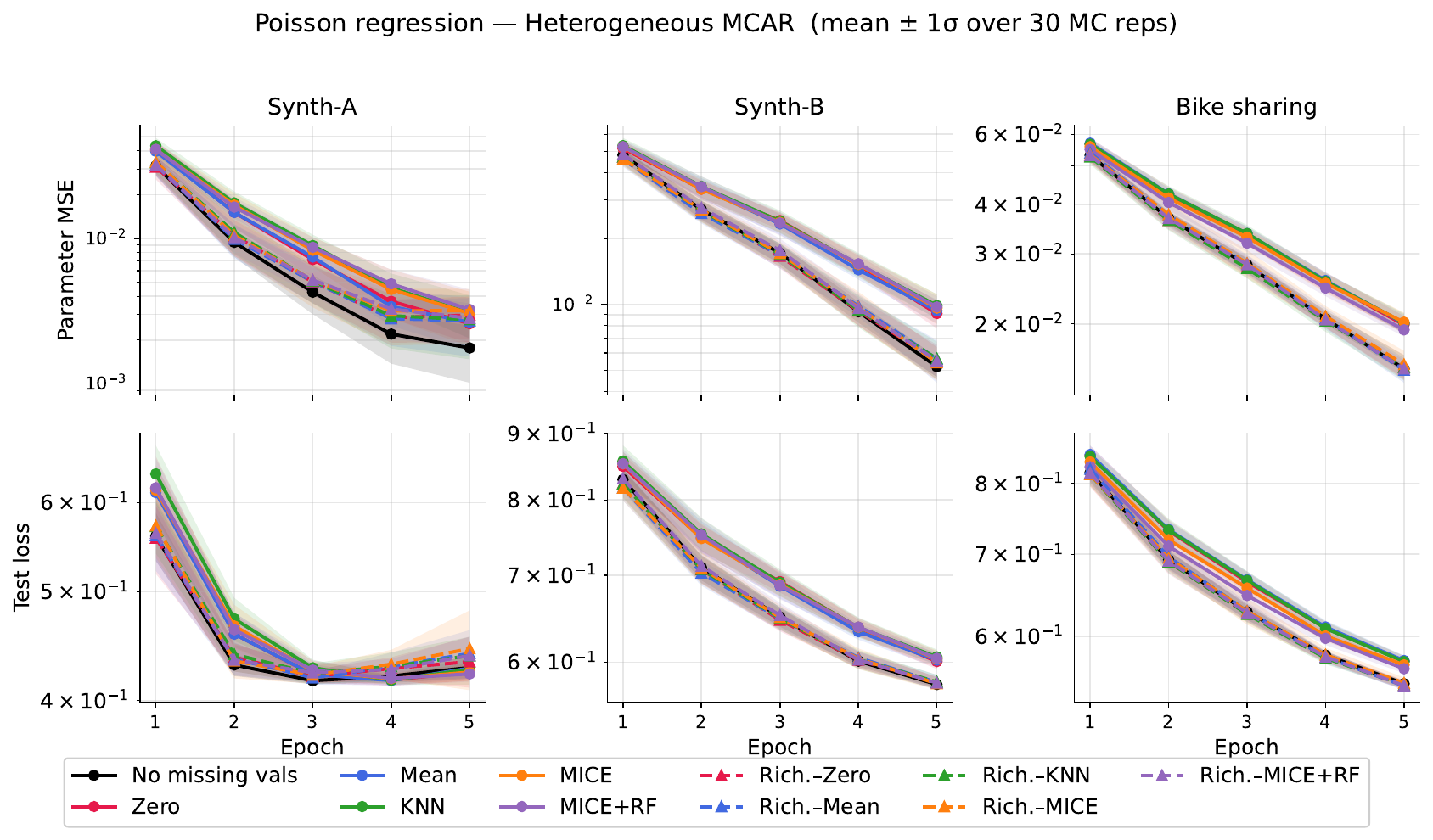}
    \caption{Convergence rate for parameter MSE and test loss for Poisson regression, under heterogeneous MCAR and sMAR mechanisms, on four different datasets.}
    \label{fig:cv-poisson}
\end{figure}

\section{Comparison with other schemes}

The experiments above show that Richardson-SGD improves performance across the three GLMs considered and across all imputation techniques tested. This is the regime for which the method is primarily intended: when the learner has access to an imputation pipeline, but does not want to impose a model-specific correction or strong structural assumptions on the covariate distribution. In this sense, Richardson-SGD is best viewed as a model-agnostic and distribution-agnostic debiasing layer on top of imputation, rather than as a competitor to specialized estimators designed for a particular statistical model. Consequently, the most direct comparison is with the same imputation pipeline used without Richardson.

For completeness, we nevertheless include a more specialized benchmark in the linear-regression setting. In this case, tailored alternatives are available: debiased SGD under hMCAR \citep{sportisse2020debiasing}, and SAEM-type methods under parametric assumptions on the covariate distribution \citep{jiang2020logistic}. These methods are designed specifically for this setting, so Richardson-SGD is not expected to dominate them. The point of the comparison is instead to test whether a generic Richardson correction remains competitive even in a regime where model-specific methods have an intrinsic advantage.

The results support this interpretation. Richardson-SGD performs close to debiased SGD and improves over SAEM, especially on non-synthetic datasets where the parametric assumptions underlying SAEM are less well matched to the data. The main failure case occurs for Richardson combined with MICE in the uncorrelated Gaussian-covariate setting. This behavior is expected: when covariates are independent, MICE has essentially no cross-feature signal to exploit and may fit noise, making it a poor base imputer. In such cases, Richardson inherits the limitations of the underlying imputation procedure.

\begin{table}[ht]
    \centering
    \caption{Parameter MSE and runtime results. Values are reported as mean $\pm$ standard deviation. SAEM is run only once, with a runtime of 5 s for Synth-A and Synth-B, and 100 s otherwise.}
    \begin{tabular}{llcc}
        \toprule
        Dataset      & Method                          & PMSE                                           & Time (s) \\
        \midrule
        Synth-A      & No missing data (ref)           & $5.51{\times}10^{-5} \pm 1.10{\times}10^{-6}$  & $0.019$  \\
        Synth-A      & Zero-Impute                     & $4.11{\times}10^{-3} \pm 1.00{\times}10^{-3}$  & $0.021$  \\
        Synth-A      & Rich. -- Zero                   & $2.36{\times}10^{-4} \pm 3.70{\times}10^{-5}$  & $0.028$  \\
        Synth-A      & Rich. -- MICE                   & $1.56{\times}10^{-2} \pm 9.60{\times}10^{-3}$  & $0.036$  \\
        Synth-A      & Debiased SGD (Sportisse et al.) & $2.99{\times}10^{-4} \pm 1.10{\times}10^{-4}$  & $0.057$  \\
        Synth-A      & SAEM                            & $6.00{\times}10^{-4} \pm 0$                    & $5.00$   \\
        \midrule
        Synth-B      & No missing data (ref)           & $2.39{\times}10^{-3} \pm 1.50{\times}10^{-5}$  & $0.029$  \\
        Synth-B      & Zero-Impute                     & $4.95{\times}10^{-1} \pm 3.22{\times}10^{-2}$  & $0.031$  \\
        Synth-B      & Rich. -- Zero                   & $4.97{\times}10^{-2} \pm 1.80{\times}10^{-3}$  & $0.055$  \\
        Synth-B      & Rich. -- MICE                   & $3.42{\times}10^{-2} \pm 4.00{\times}10^{-3}$  & $0.057$  \\
        Synth-B      & Debiased SGD (Sportisse et al.) & $4.64{\times}10^{-3} \pm 2.00{\times}10^{-3}$  & $0.073$  \\
        Synth-B      & SAEM                            & $2.33{\times}10^{-2} \pm 0$                    & $5.00$   \\
        \midrule
        Covtype      & No missing data (ref)           & $1.94{\times}10^{-2} \pm 0$                    & $0.028$  \\
        Covtype      & Zero-Impute                     & $8.04{\times}10^{-2} \pm 5.05{\times}10^{-2}$  & $0.033$  \\
        Covtype      & Rich. -- Zero                   & $2.60{\times}10^{-2} \pm 3.55{\times}10^{-3}$  & $0.055$  \\
        Covtype      & Rich. -- MICE                   & $2.37{\times}10^{-2} \pm 1.50{\times}10^{-3}$  & $0.057$  \\
        Covtype      & Debiased SGD (Sportisse et al.) & $1.99{\times}10^{-2} \pm 5.50{\times}10^{-3}$  & $0.079$  \\
        Covtype      & SAEM                            & $2.58{\times}10^{-1} \pm 0$                    & $100.00$ \\
        \midrule
        Cal. housing & No missing data (ref)           & $2.62{\times}10^{-2} \pm 5.00{\times}10^{-5}$  & $0.026$  \\
        Cal. housing & Zero-Impute                     & $1.02{\times}10^{-1} \pm  9.76{\times}10^{-3}$ & $0.021$  \\
        Cal. housing & Rich. -- Zero                   & $7.15{\times}10^{-2} \pm 1.23{\times}10^{-2}$  & $0.051$  \\
        Cal. housing & Rich. -- MICE                   & $5.10{\times}10^{-2} \pm 1.49{\times}10^{-2}$  & $0.049$  \\
        Cal. housing & Debiased SGD (Sportisse et al.) & $3.16{\times}10^{-2} \pm 9.55{\times}10^{-3}$  & $0.065$  \\
        Cal. housing & SAEM                            & $2.70{\times}10^{-1} \pm 0$                    & $100.00$ \\
        \bottomrule
    \end{tabular}
\end{table}

\section{Robustness to misspecification of the missingness mechanism}
\label{app:exp-misspecification}

This appendix tests Richardson-SGD under misspecification of the thinning mechanism. Missing values are generated under the sMAR mechanism of Appendix~\ref{app:implementation-details}, where missingness depends on $X_1$ and $X_2$, but Richardson thinning uses the hMCAR approximation $p_j$ instead of the true conditional probabilities $p_jq_j(V)$. We run linear regression on the four datasets in Table~\ref{tab:datasets-implementation}, a setting where the zero-imputation gradient bias is a polynomial of degree at most two in the missingness probabilities; see Appendix~\ref{app:linreg}. We compare zero imputation and MICE, with and without first-order Richardson correction, over $10$ runs. Figure~\ref{fig:misspecification-convergence} shows the parameter-MSE and test-loss trajectories, and Table~\ref{tab:section16-summary} reports final values.

Richardson remains robust to this misspecification. With zero imputation, Richardson improves over plain zero imputation on all datasets, often nearly matching the complete-data baseline. With MICE, Richardson improves performance on Synth-A, Synth-B, and Diabetes. The main exception is California Housing, where Richardson--MICE becomes unstable near the last epoch; the plotted variance is capped for readability. This instability is consistent with the variance amplification of Richardson extrapolation discussed in Section~\ref{subsec:variance}, and may also reflect an imperfect learning-rate choice.

Overall, treating sMAR data as hMCAR does not eliminate the benefit of Richardson-SGD in these experiments, especially with zero imputation. However, the California Housing--MICE case shows that misspecification can interact with the imputation rule and optimization dynamics.

\begin{figure}[H]
    \centering
    \includegraphics[width=1\linewidth]{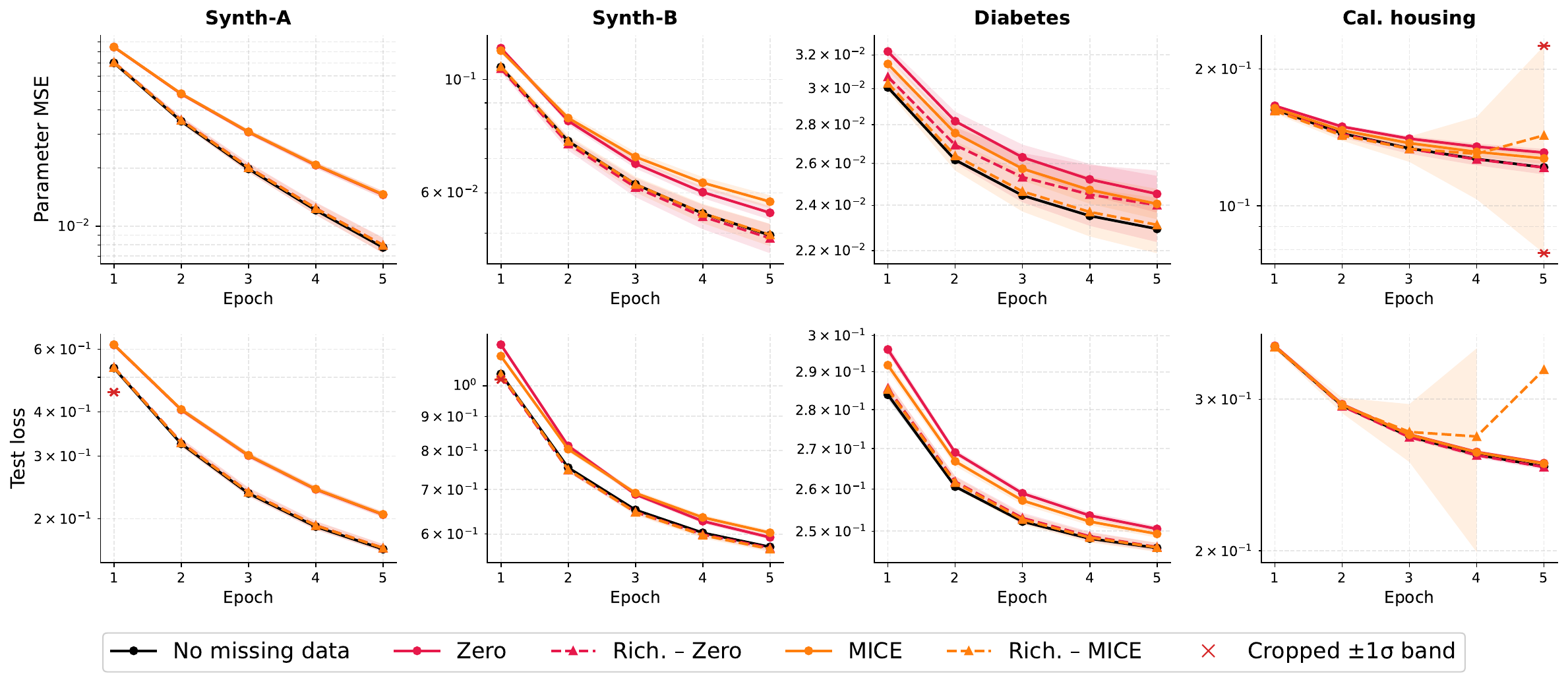}
    \caption{\small Richardson-SGD under misspecification of the missingness mechanism. Missing values are sMAR, but Richardson-SGD is computed using an hMCAR approximation based only on marginal missingness probabilities.}
    \label{fig:misspecification-convergence}
\end{figure}

\begin{table}[ht]
    \centering
    \scriptsize
    \caption{\small Final parameter MSE and test loss for Richardson-SGD under misspecification of the missingness mechanism. The true mechanism is sMAR, while Richardson thinning uses an hMCAR approximation. Test losses are reported in units of $10^{-1}$.}
    \label{tab:section16-summary}
    \setlength{\tabcolsep}{3.5pt}
    \renewcommand{\arraystretch}{1.08}
    \begin{tabular}{lcccccccc}
        \toprule
         & \multicolumn{2}{c}{Synth-A}
         & \multicolumn{2}{c}{Synth-B}
         & \multicolumn{2}{c}{Diabetes}
         & \multicolumn{2}{c}{Cal. housing}                 \\
        \cmidrule(lr){2-3}\cmidrule(lr){4-5}\cmidrule(lr){6-7}\cmidrule(lr){8-9}
        Method
         & PMSE                                    & Loss
         & PMSE                                    & Loss
         & PMSE                                    & Loss
         & PMSE                                    & Loss   \\
        \midrule
        No missing data
         & $7.77{\scriptstyle\pm0.045}\,10^{-3}$   & $1.64$
         & $4.96{\scriptstyle\pm0.016}\,10^{-2}$   & $5.74$
         & $2.29{\scriptstyle\pm0.0037}\,10^{-2}$  & $2.46$
         & $1.21{\scriptstyle\pm0.00099}\,10^{-1}$ & $2.51$ \\

        Zero
         & $1.45{\scriptstyle\pm0.063}\,10^{-2}$   & $2.05$
         & $5.48{\scriptstyle\pm0.17}\,10^{-2}$    & $5.93$
         & $2.45{\scriptstyle\pm0.086}\,10^{-2}$   & $2.51$
         & $1.31{\scriptstyle\pm0.024}\,10^{-1}$   & $2.53$ \\

        Rich.--Zero
         & $7.99{\scriptstyle\pm0.75}\,10^{-3}$    & $1.65$
         & $4.89{\scriptstyle\pm0.32}\,10^{-2}$    & $5.72$
         & $2.40{\scriptstyle\pm0.16}\,10^{-2}$    & $2.46$
         & $1.21{\scriptstyle\pm0.041}\,10^{-1}$   & $2.50$ \\

        MICE
         & $1.46{\scriptstyle\pm0.061}\,10^{-2}$   & $2.06$
         & $5.77{\scriptstyle\pm0.17}\,10^{-2}$    & $6.03$
         & $2.41{\scriptstyle\pm0.070}\,10^{-2}$   & $2.49$
         & $1.27{\scriptstyle\pm0.035}\,10^{-1}$   & $2.52$ \\

        Rich.--MICE
         & $7.97{\scriptstyle\pm0.71}\,10^{-3}$    & $1.65$
         & $4.97{\scriptstyle\pm0.25}\,10^{-2}$    & $5.70$
         & $2.31{\scriptstyle\pm0.12}\,10^{-2}$    & $2.46$
         & $1.43{\scriptstyle\pm1.20}\,10^{-1}$    & $3.25$ \\
        \bottomrule
    \end{tabular}
\end{table}

\section{Why the two missingness scales must share the same imputation}
\label{app:linked-imputation}

Richardson correction compares two gradients evaluated at missingness scales $p$ and $Cp$. For the linear term to cancel, these two gradients must be generated by the \emph{same imputation operator}. In particular, entries that are missing at both scales must receive the same imputed value. This is why, in Section~\ref{sec:richardson}, we impute only once at the higher missingness scale $Cp$, and then restore the entries that were artificially hidden to obtain the lower-scale covariate.

We formalize this point. Let $\mathcal I$ be a data-independent imputation rule and define
\[
    \hat g_{\mathcal I}^{(p)}(w)
    :=
    g\!\left(w;\tilde X_{\mathcal I}^{(p)},Y\right),
    \qquad
    \Bbias_{\mathcal I}(w,p)
    :=
    \EE\!\left[\hat g_{\mathcal I}^{(p)}(w)\right]-\nabla L(w).
\]
By Proposition~\ref{prop:grad_bias_fo_struct},
\[
    \Bbias_{\mathcal I}(w,p)
    =
    \Aop_{\mathcal I}(w)p+\Rrem_{\mathcal I}(w,p),
    \qquad
    \Rrem_{\mathcal I}(w,p)=O(\|p\|^2)
\]
under independent hMCAR/sMAR. The first-order operator $\Aop_{\mathcal I}(w)$ depends on the imputation rule, since its $j$-th column is the expected gradient gap created by declaring coordinate $j$ missing.

If Richardson is applied with two possibly different imputation rules $\mathcal I_0$ and $\mathcal I_1$ at scales $p$ and $Cp$, respectively, then
\[
    \hat g^{\mathrm R}_{C,\mathcal I_0,\mathcal I_1}(w)
    :=
    \frac{
        C\,\hat g_{\mathcal I_0}^{(p)}(w)
        -
        \hat g_{\mathcal I_1}^{(Cp)}(w)
    }{C-1}.
\]
Its bias is
\[
    \begin{aligned}
        \EE\!\left[
                 \hat g^{\mathrm R}_{C,\mathcal I_0,\mathcal I_1}(w)
                 \right]
        -\nabla L(w)
         & =
        \frac{
            C\,\Bbias_{\mathcal I_0}(w,p)
            -
            \Bbias_{\mathcal I_1}(w,Cp)
        }{C-1} \\
         & =
        \tfrac{C}{C-1}
        \bigl(
        \Aop_{\mathcal I_0}(w)
        -
        \Aop_{\mathcal I_1}(w)
        \bigr)p
        +O(\|p\|^2).
    \end{aligned}
\]
Thus the $O(\|p\|)$ term cancels only if
\[
    \Aop_{\mathcal I_0}(w)=\Aop_{\mathcal I_1}(w).
\]
This condition is automatic when the two gradients are constructed from the same higher-scale imputation, as in Equation~(\ref{eq:richardson-grad}): common missing entries have identical imputed values at both scales, and the only difference between $\tilde X^{(p)}$ and $\tilde X^{(Cp)}$ comes from the entries artificially hidden by the thinning step.

By contrast, if one independently runs two stochastic imputers, for example two separate MICE chains at scales $p$ and $Cp$, then common missing entries may receive different imputations. The corresponding first-order operators need not coincide, so Richardson may leave an even bigger uncancelled $O(\|p\|)$ bias and can amplify stochastic imputation noise through the factor $(C-1)^{-1}$.

Figure~\ref{fig:linked-vs-unlinked-rich-mice} illustrates this effect on a linear-regression experiment on California Housing. The linked construction, which imputes once at scale $Cp$, $C = 1.5$, and restores artificially hidden entries, remains stable and improves over plain MICE. The unlinked construction, which runs independent MICE imputations at the two scales, loses the first-order cancelation and becomes unstable.

\begin{figure}[H]
    \centering
    \includegraphics[width=0.7\linewidth]{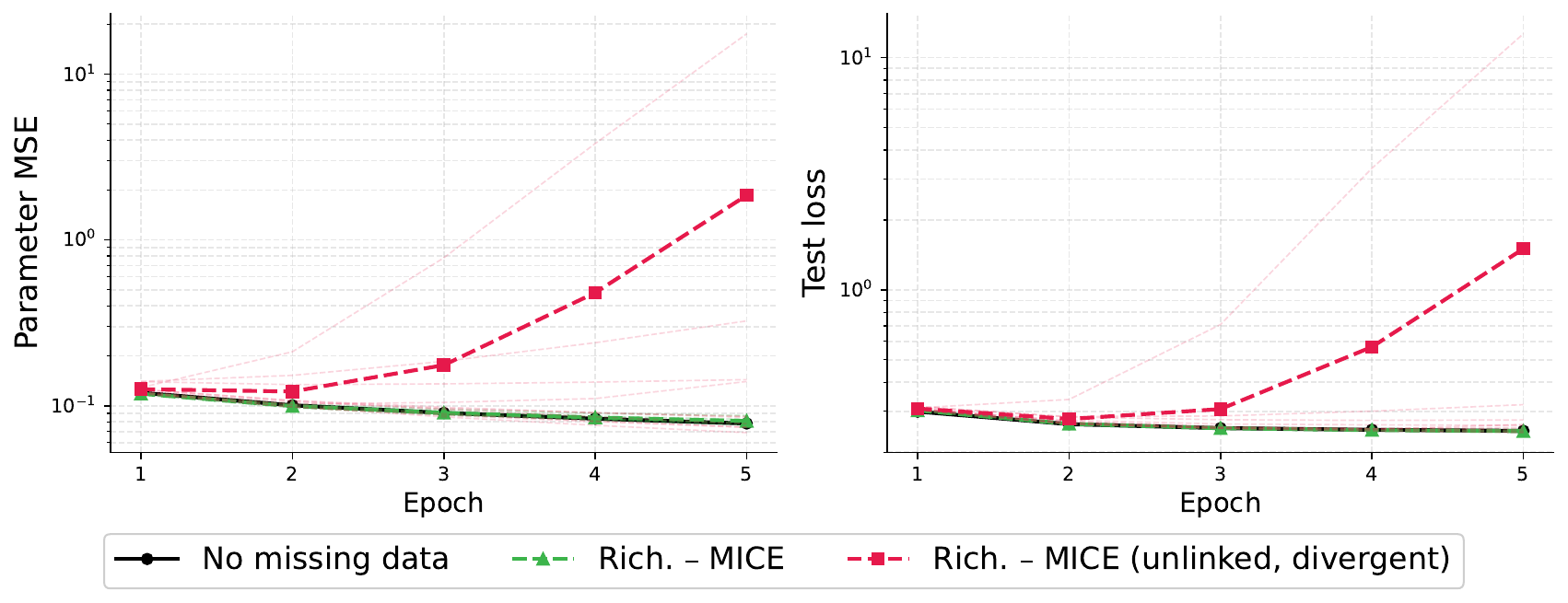}
    \caption{
        Linked versus unlinked Richardson--MICE on California Housing linear regression.
        The linked version uses one imputation at scale $Cp$ and restores the artificially hidden entries to obtain the scale-$p$ covariate.
        The unlinked version runs two separate MICE imputations at scales $p$ and $Cp$.
        Only the linked construction preserves the common imputed values required for first-order Richardson cancellation.
    }
    \label{fig:linked-vs-unlinked-rich-mice}
\end{figure}

\end{document}